


\documentclass[final,5p,times,twocolumn]{elsarticle}

\usepackage{epsfig}
\usepackage{graphicx}
\usepackage{float}
\usepackage{nomencl}
\makenomenclature
\usepackage{mathtools}
\usepackage{amsmath,amssymb,amsfonts}
\newtheorem{assumption}{Assumption}
\usepackage[english]{babel}
\usepackage{minted}
\usepackage{balance}
\usepackage{graphicx}
\usepackage{subcaption}
\usepackage{textcomp}
\usepackage{xcolor}
\usepackage{array}
\usepackage{tabularx}
\usepackage{multirow}
\usepackage{longtable}
\usepackage{adjustbox}
\usepackage{booktabs} 
\usepackage{hyperref}

\usepackage{algpseudocode}
\usepackage[linesnumbered,ruled,vlined]{algorithm2e}
\usepackage{svg}
\newtheorem{definition}{Definition}[section]

\newtheorem{lemma}[definition]{Lemma}

\newtheorem{theorem}{Theorem}[section]
\newtheorem{proof}{Proof}[section]
\newtheorem{remark}{Remark}



\journal{Applied Energy}

\begin{document}

\begin{frontmatter}



\title{Toward Adaptive Grid Resilience: A Gradient-Free Meta-RL Framework for Critical Load Restoration}


\author[label1]{Zain ul Abdeen}

\affiliation[label1]{organization={Bradley Department of Electrical and Computer Engineering},
            addressline={ Virginia Tech}, 
            city={Blacksburg},
            postcode={24060}, 
            state={Virginia},
            country={USA}}
\author[label2]{Waris Gill}

\affiliation[label2]{organization={Department of Computer Science},
            addressline={ Virginia Tech}, 
            city={Blacksburg},
            postcode={24060}, 
            state={Virginia},
            country={USA}}
\author[label1]{Ming Jin}

\begin{abstract}

Restoring critical loads following extreme events requires adaptive control strategies to ensure resilience in power distribution systems. However, effective restoration remain challenging due to uncertainty in renewable generation, limited dispatchable resources, and nonlinear system dynamics. 
While reinforcement learning (RL) offers promise for sequential decision-making under uncertainty, existing approaches struggle with generalization, requiring extensive retraining when faced with new outage configurations or generation patterns.
To address these limitations, we propose a \textit{meta-guided gradient-free RL} (MGF-RL) framework, that learns generalizable restoration policies from historical outage experiences, enabling rapid adaption to unseen scenarios with minimal task-specific tuning. Unlike conventional RL that trains separate policies for each scenario, MGF-RL leverage meta learning to distill transferable knowledge into an initialization that quickly adapts across diverse conditions. 
Specifically, MGF-RL integrates first-order meta-learning with evolutionary strategies, enabling scalable policy search without gradient computation while naturally handling the nonlinear, constrained dynamics of distribution systems. 
Through Validation on IEEE 13-bus and 123-bus test systems demonstrates that MGF-RL substantially outperforms standard RL, state-of-the-art meta-RL (MAML), and model predictive control (MPC) across multiple metrics including system reliability, restoration speed, and adaptation efficiency under renewable forecast errors. MGF-RL generalizes effectively to unseen outage scenarios and renewable generation patterns while requiring significantly fewer fine-tuning episodes than conventional RL approaches. Theoretical analysis establishes sublinear regret bounds characterizing adaptation efficiency as a function of task similarity and environmental dynamics. These results position MGF-RL as a scalable and resilient solution for real-time load restoration in distribution grids with high renewable penetration.

\end{abstract}



\begin{keyword}
Reinforcement learning\sep meta learning\sep critical load restoration\sep renewable forecast uncertainty\sep grid resilience\sep gradient-free optimization\sep policy adaption\sep regret boun.



\end{keyword}

\end{frontmatter}


\section{Introduction}
Reliable and high-quality electricity supply is essential for modern society \cite{8375946}. However, extreme events such as hurricanes \cite{kwasinski2019hurricane}, sever storms \cite{comes2014measuring}, system failure \cite{guo2017critical}, and cyber-attacks~\cite{li2019review}, pose escalating threats to power systems, leading to large-scale power outages and significant economic losses \cite{campbell2012weather, karimi2025resilience, kenward2014blackout}. In the U.S. alone, weather-induced blackouts result in estimated annual losses of \$20–55 billion \cite{campbell2012weather}. Following major disruptions, critical loads including healthcare facilities, emergency services, and essential infrastructure must be restored rapidly to prevent sever societal, security, and economic consequences \cite{karimi2025resilience, zhang2022curriculum}. 

Smart grid technologies offer techniques for critical load restoration load restoration (CLR) through remotely controlled switches~\cite{selim2022deep}, and enabling post-event system reconfiguration~\cite{wang2019coordinating}. Typically, outage areas are re-energized by connecting to an alternative feeders or neighboring substations \cite{mohagheghi2011applications}. However, during extreme events, these resources are unavailable or unable to provide sufficient power, necessitating reliance on local distributed energy resources (DERs) for restoration of the distribution systems \cite{poudel2018critical,wang2018risk}. However, DER-based restoration introduces significant challenges, renewable generation uncertainty, limited dispatchable capacity, complex load prioritization requirements, and nonlinear power flow constraints during islanded operations \cite{wang2018risk}. This work focuses on optimally leveraging both dispatchable and intermittent DERs to maximize load restoration while managing renewable forecast uncertainty, a critical requirement for resilient distribution system operation.



Traditional approaches for solving the CLR problem with intermittent DERs includes optimization based algorithms \cite{liu2020bi}, heuristic methods \cite{li2014distribution}, and deterministic techniques \cite{yao2019rolling}. While these methods have demonstrated effectiveness under linearized power flow assumptions, they often struggle with the dynamic and nonlinear nature of power systems during emergencies~\cite{huang}. MPC-based methods address some limitations by re-optimizing decisions at each control step, and leverage the latest renewable-based DERs forecasts to re-align the system with the optimal trajectory \cite{RC-MPC}.  
Liu et al. \cite{liu2020collaborative} proposed using MPC to coordinate multiple types of DERs in the restoration of distribution systems, later extended in \cite{zhao2018model} to optimize generators start-up for system restoration. Other MPC based restoration examples are \cite{yao2019rolling,eseye2021resilient}.
Stochastic programming techniques such as chance-constrained formulations \cite{ccop} use probabilistic constraints to satisfy the system constraints within specified threshold. Despite these advancement, these approaches face critical limitations during emergency restoration: (i) solving mixed-integer nonlinear programs repeatedly during restoration is computationally prohibitive, especially as system size scales~\cite{liu2020collaborative}; (ii) optimization methods require accurate system models and uncertainty characterization, which may be unavailable or inaccurate; and (iii) solutions are task-specific and cannot readily transfer acknowledge across different outage scenarios, requiring re-optimization for each new event. 


RL has emerged as a complementary approach that addresses several key limitations of optimization methods for CLR problem \cite{bedoy,9302946}, by learning optimal control policies from interactions with the dynamically changing environment~\cite{du2022deep}. RL adaptability to non-linear dynamic systems, without being restricted to specific model requirements, makes it a powerful tool for the CLR problem with following merits: (a) RL trains control policies offline, enabling fast inference during restoration without intensive on-demand optimization.
This allows RL policies to provide
restoration plan for a longer control horizon and a shorter
control interval--problems that cannot be efficiently
solved by optimization between control intervals. Once the policy is trained , RL provides fast real-time response due to the simplicity of policy evaluation. (b) RL agents learn directly from nonlinear power flow models without requiring convex relaxations or linearizations, thus preserving control accuracy.  
(c)~RL conducts end-to-end learning from historical data, eliminating the need to explicitly model uncertainty distributions or generate scenarios for stochastic programming.

Motivated by these merits, Zhao et. al. \cite{zhao2021learning} solve the CLR problem using graph based deep Q-network (DQN), and \cite{9302946} proposed a policy gradient-based RL approach to restore critical load using local DERs. However, due to the non-convex nature of policy optimization landscape, standard RL approaches often demand large-scale training data, complex hyperparameter tuning, and can struggle with adaption when confronted with altered operation conditions~\cite{zhao2021learning}. To address these challenges, curriculum-based RL has been proposed by progressively introducing tasks of increasing complexity to accelerate training convergence and improve solution quality for CLR problems~\cite{zhang2022curriculum}. Despite their advantages, existing RL approaches face a critical limitations of \textit{poor generalization across different outage scenarios}. These policies are trained for specific system conditions and require extensive fine tuning or retraining, when encountering different grid topologies, renewable availability patterns, or load distributions. This lack of adaptability creates a significant barrier to practical deployment, as emergency operators cannot afford to retrain policies during unfolding crises.

Meta-RL algorithms on the other hand, aims at \textit{learning how to learn} to create a generalized meta-policy that can quickly adapt to new tasks by leveraging shared knowledge from previously encountered tasks~\cite{khattar2022cmdp,finn}. Recently, Fan et al. \cite{fan2024enhancing} proposed meta-based graph DQN for load restoration to enhance adaptability of RL to new unseen task. Unlike their paper, we investigating RL capability to manage uncertainty in CLR which provided with imperfect renewable forecasts. Furthermore, their value- based method suffers from an over-estimation problem when performing the Q-value estimation, which may affect the stability of the algorithm. Nevertheless, popular meta-RL frameworks, such as model-agnostic meta-learning (MAML) are computationally demanding because they frequently require costly second-order derivative computations (i.e. Hessian matrix) for updating meta-parameters.  
These second-order updates, can hinder efficiency by amplifying computational burden, particularly when multiple gradient steps are needed during the test phase. To address these limitations, we propose a \textit{meta-guided gradient-free RL} framework that circumvents second-order derivative while preserving fast adaptability. Our approach couples within task gradient free Evolution-Strategy RL (ES-RL)~\cite{ES} updates with first order meta-update \cite{Meta}, which simplifies the learning process by avoiding complex Hessian computations and employing gradient averaging. Gradient averaging involves updating the model parameters by averaging the first-order gradients obtained from task-specific optimizations, which significantly reduces the variability and instability in updates.

\noindent\emph{\bf{Contribution:}} Building on our earlier work \cite{ul2024enhancing}, which focused on first-order meta-RL in a simplified CLR setting with DER dispatch alone, this paper significantly extends both problem scope and technical rigor. Compared to \cite{ul2024enhancing}, this extended study introduces: (i) more complex restoration actions including sequential load pickup decisions and coordinated DER setpoint control; (ii) explicit consideration of renewable forecast errors; and (iii) rigorous benchmarking against an MPC-based optimization method and contemporary meta-RL methods (MAML, AC-RL). By balancing computational efficiency with improved generalization, MGF-RL is positioned to address key challenges in CLR. The main contributions are summarized as follows:

\noindent 1)  We propose MGF-RL for CLR, which integrates a gradient-free evolution strategy for within-task policy optimization with first-order meta-updates for task generalization. MGF-RL reduces computational overhead and improves adaptation to unseen outage conditions, requires only 2-4 adaptation episodes for new scenarios compared to 15-20 episodes for standard RL (see Figure \ref{fig1}). 

\noindent2) Through experimental evaluation on IEEE 13-bus and 123-bus distribution systems across diverse restoration scenarios, we observe MGF-RL demonstrates consistent advantages in system reliability, load restoration completeness, meta-learning effectiveness, and computational scalability. MGF-RL achieves 27-41\% System Average Interruption Duration Index (SAIDI) improvements over all baseline methods and reaching the critical 90\% milestone that baselines fail to meet within the control horizon. Evaluation on unseen test tasks reveals superior adaptation stability, with MGF-RL exhibiting positive performance improvements while gradient-based meta-RL methods show substantial degradation. Computational analysis confirms sub-linear scaling characteristics suitable for practical deployment.

\noindent3) We analyze performance evaluation under varying levels of renewable forecast error, a critical requirement for practical deployment where perfect forecasts are unavailable. MGF-RL maintains stable performance under typical forecast uncertainty while other methods exhibit substantial degradation. Analysis of controller behavior reveals that MGF-RL autonomously adopts conservative strategies under high uncertainty—prioritizing dispatchable resources and implementing gradual load pickup—without explicit programming of uncertainty-handling rules.

\noindent4) We provide theoretical analysis of task-averaged regret for MGF-RL establishing sublinear bounds  $\mathcal{O}\left(\frac{V_{M}+\epsilon_{M}}{M\sqrt{T}}+\frac{\hat{D}_{*}}{\sqrt{T}}\right)$, which improves with higher task similarity $(\text{i.e., lower}~~ \hat{D}_{*})$ in static settings. For dynamic environments, we derive a regret bound $\mathcal{O}\left(\min\left\lbrace V_{M}, \sqrt{M(1+\mathcal{P}_{M})} \right\rbrace +\frac{\hat{S}_{*}}{\sqrt{T}}+\frac{\epsilon_{M}}{M\sqrt{T}}\right),$ proving that adaptive learning rates can mitigate performance declines under changing environment conditions. This regret is influenced by task variability $V_{M}$ and the path length of optimal policy $\mathcal{P}_{M}$. 

    The remainder of this paper is organized as follows. Section~\ref{problemformaulation} formalizes the CLR problem. Section~\ref{Metasection} presents our proposed MGF-RL strategy designed for tackling the CLR problem. Theoretical analysis is the focus of Section~\ref{theoretical}, while Section~\ref{casestudy} offers case study results to substantiate our approach. Lastly, Section~\ref{conclusion} concludes the paper with potential directions for future research and key findings.

\section{Problem Formulation}\label{problemformaulation}
 \subsection{Critical Loads Restoration Problem}
 We consider a multi-step, prioritized CLR problem in which a distribution system is islanded from the main grid due to sever events. The objective is to optimize the restoration of prioritized critical loads using local DERs over discrete time intervals denoted by $\mathcal{T}=\left\lbrace 1,2,\dots,T \right\rbrace$, thereby \textcolor{black}{enhancing} system resilience throughout the outage. \textcolor{black}{Available DERs are categorized into: dispatchable fuel-based resources $(\mathcal{D}^f)$, battery storage $(\mathcal{D}^s)$, and renewable-based resources~$(\mathcal{R})$. Let $\mathcal{G} := \mathcal{R}\cup \mathcal{D}^f \cup \mathcal{D}^s$ denote the set of all DERs. The vector $\boldsymbol{\varsigma}=[\varsigma^1,\varsigma^2,\dots,\varsigma^N]^\top \in \mathbb{R}^{N}$ represents the priority for the set of critical loads $\mathcal{L}$, where larger $\varsigma^i$ indicates higher restoration priority for load $i$.}
 At each control step $t\in \mathcal{T}$, the controller specifies the active power setpoints $\textbf{p}_t^\mathcal{G}\in \mathbb{R}^{|\mathcal{G}|}$ and power factor angles $\boldsymbol{\alpha}_t^\mathcal{G}\in \mathbb{R}^{|\mathcal{G}|}$ for all DERs, which determine reactive power injection through
$\mathbf{q}_t^{\mathcal{G}}=\mathbf{p}_t^{\mathcal{G}}\odot \tan(\boldsymbol{\alpha}_t^{\mathcal{G}})$. Here, $\odot$ denotes the Hadamard elementwise product, and $\tan([\alpha^1,\dots,\alpha^n]^{\top})=[\tan(\alpha^1),\dots,\tan(\alpha^n)]^{\top}$. Concurrently, active and reactive power restoration levels for each critical loads are represented by $\textbf{p}_t \in \mathbb{R}^N$ and $\textbf{q}_t \in \mathbb{R}^N$, respectively. 
Thus, the control objective is to maximize the following weighted sum over the restoration horizon, as outlined in ~\cite{zhang2022curriculum}:
\begin{equation}\label{eq1}
\sum_{t\in\mathcal{T}}\left(\boldsymbol{\varsigma}^{\top} \textbf{p}_t - {\mu}\boldsymbol{\varsigma}^{\top}[\textbf{p}_{t-1}-\textbf{p}_t]^{+}+\mathcal{V}_t\right)
\end{equation}
The first term, $\boldsymbol{\varsigma}^{\top} \textbf{p}_t$ emphasizes restoring high priority loads, and ${\mu}\boldsymbol{\varsigma}^{\top}[\textbf{p}_{t-1}-\textbf{p}_t]^{+}$ penalizes frequent fluctuations in load pick-up and shedding, by factor $\mu$, promoting monotonic restoration under variable DER outputs. Here, $\mathcal{V}_t:=-\lambda \|[\boldsymbol{\nu}_t-\bar{\boldsymbol{\nu}}]^{+}+[\underline{\boldsymbol{\nu}}-\boldsymbol{\nu}_t]^{+}\|_{2}^{2}$ represents the voltage violation penalty across all $N_b$ buses at time $t$,
where $\boldsymbol{\nu}_t\in \mathbb{R}^{N_b}$ is the vector of bus voltage magnitudes, $\underline{\boldsymbol{\nu}}=V^{\min}1_{N_b}\in \mathbb{R}^{N_b}$, and $\bar{\boldsymbol{\nu}}=V^{\max}1_{N_b}\in \mathbb{R}^{N_b}$ define the allowable voltage range. The coefficient $\lambda$ regulates the trade-off between restoration benefit and voltage quality. 
It is worth noting that voltage bounds are included as a penalty term, as they represent system-controlled outcomes that cannot be directly constrained within RL framework.

At each control step $t\in\mathcal{T}$ in restoration process, DER and load decisions must satisfy certain operational constraints. Dispatchable fuel-based DERs constraint~\eqref{fuel_eq} specify allowable power output ranges for micro-turbines dictated by fuel availability, where $\tau$ is the control interval length and $E^{f}$ is the known fuel reserve limit. \textcolor{black}{Battery storage satisfies power and stat-of-charge (SOC) constraints \eqref{batteryeq}-\eqref{batteryeq3} and 
charge/discharge rates incorporating storage efficiency $\zeta_{t}$, taking values based on whether the battery is charging $\zeta_{t}=\zeta^{\text{ch}} \left(\text{i.e.}, p_{t}^{\theta}>0\right)$ or discharging $\zeta_{t}=\frac{1}{\eta^{\text{dis}}}  \left(\text{i.e.}, p_{t}^{\theta}<0\right)$. $S^{\theta}_{t}$ and $s_0$ are the current and initial SOC.} 
Renewable-based DER constraints are in \eqref{reneweq4}, set renewable generation equal to forecasted values $\hat{p}_{t}^{r}$, which fluctuate based on natural conditions and prioritized for use during restoration. The symbol $\hat{\cdot}$ denotes forecasted values. Finally, the load pickup constraint \eqref{loadpick} ensure that restoration decisions do not exceed available demand and maintain constant power factor.
\begin{equation}\label{fuel_eq}
    p_{t}^{f}\in[\underline{p}^{f},\bar{p}^{f}],\quad \alpha_{t}^{f} \in [\underline{\alpha}^{f},\bar{\alpha}^{f}],\quad \sum_{t\in \mathcal{T}}p_{t}^{f}.\tau \leq E^{f}
\end{equation}
\begin{equation}\label{batteryeq}
    -p^{\theta,\text{ch}}\leq p_{t}^{\theta}\leq p^{\theta,\text{dis}}
\end{equation}
\begin{equation}\label{batteryeq2}
    S_{t+1}^{\theta}=S_{t}^{\theta}-\zeta_{t}\cdot p_{t}^{\theta}\cdot \tau, \quad S_{0}^{\theta}=s_{0} 
\end{equation}
\begin{equation}\label{batteryeq3}
    \underline{S}^{\theta}\leq S_{t}^{\theta}\leq\bar{S}^{\theta},\quad \forall ~\theta \in \mathcal{D}^{s}
\end{equation}
\begin{equation}\label{reneweq4}
    p_{t}^{r}=\hat{p}_{t}^{r},\quad \alpha_{t}^{r}\in \left[\underline{\alpha}^{r},\bar{\alpha}^{r}\right],\quad \forall r\in \mathcal{R}
\end{equation}
\begin{equation}\label{loadpick}
    \bold{0}\leq \textbf{p}_{t}\leq \textbf{p}, \quad \bold{0}\leq \textbf{q}_{t}\leq \textbf{q}, \quad \text{p}_{t}^{i}/\text{q}_{t}^{i} = \text{p}^{i}/\text{q}^{i} 
\end{equation}
Since the distribution system is islanded, the balance between loads and DERs generation must be satisfied. Additionally, electrical consistency among power injections and bus voltages is enforced through a power flow model $h(\cdot)$, leading to following constraints:
\begin{equation}\label{powerflowconstraints}
\begin{split}
\boldsymbol{1}^{\top}_{N}&\textbf{p}_{t} =\boldsymbol{1}_{|\mathcal{G}|}^{\top}\textbf{p}^{\mathcal{G}}_{t}, ~~\boldsymbol{1}^{\top}_{N}\textbf{q}_{t} =\boldsymbol{1}_{|\mathcal{G}|}^{\top}\left(\textbf{p}^{\mathcal{G}}_{t}\odot\tan (\boldsymbol{\alpha}_{t}^{\mathcal{G}})\right),\\ &\boldsymbol{\nu}_t=h(\textbf{p}_{t},\textbf{q}_t,\textbf{p}_{t}^{\mathcal{G}},\boldsymbol{\alpha}_{t}^{\mathcal{G}})
\end{split}
\end{equation}
The power flow function $h(\cdot)$ calculates bus voltage magnitude $\boldsymbol{\nu}_{t}$ based on system power injection vectors and $\boldsymbol{\nu}_{t}$ is used to evaluate the voltage violation penalty $\mathcal{V}_{t}$. Specifically, for RL-based implementation, we instantiate $h(\cdot)$ using OpenDSS simulator, which perform power flow calculation,
while for optimization-based baseline controllers (e.g. MPC), we use the linear power flow model, LinDistFlow \cite{gan2014convex}, with the same parameters to maintain consistency across methods.

Integrating both the objective function \eqref{eq1} and associated constraints \eqref{fuel_eq}-\eqref{loadpick}, the optimal control problem for distribution system restoration is formulated as:
 \begin{equation}\label{OCP}
  \begin{split}
\text{maximize}_{\textbf{p}_{t},\textbf{q}_{t},\textbf{p}_{t}^{\mathcal{G}},\boldsymbol{\alpha}_{t}^{\mathcal{G}},\forall t\in \mathcal{T}} \quad\eqref{eq1}\\ \text{subject to}\quad{\forall t\in \mathcal{T}} \quad \eqref{fuel_eq}-\eqref{powerflowconstraints}
 \end{split}
 \end{equation}

 This problem \eqref{OCP} is framed as a Mixed-Integer Linear Programming (MILP). 
 \textcolor{black}{Existing methodologies for the MILP formulation of CLR problem include No Reserve MPC (NR-MPC) \cite{liu2020collaborative} and Reserve Considered MPC (RC-MPC) \cite{RC-MPC}. NR-MPC iteratively solves the MILP in a receding horizon manner with updated renewable forecasts, while RC-MPC incorporates generation reserve penalties to hedge against renewable forecast errors, enhancing robustness. However, these MILP solutions are computationally intensive, less scalable, and rely on linear approximations, limiting their adaptability in rapidly changing environments. In the subsequent sections, we address these issues by proposing a meta-RL strategy (Section~\ref{Metasection}) that uses learning-based methods to handle uncertainties in DER outputs and system operations more adaptively.
 }

\subsection{Sequence of CLR Problems}
\textcolor{black}{In the context of meta-learning, we consider a sequence of CLR problems indexed by $m = 1, \dots, M$. Each problem~$``m"$ in the sequence corresponds to a distinct environment or scenario, characterized by varying parameters such as critical load demands $(\textbf{p}^{(m)}, \textbf{q}^{(m)})$, priority vector $\boldsymbol{\varsigma}^{(m)}$, and renewable generation forecasts $\hat{p}^{r(m)}_{t}$. These variations create a diverse set of CLR problems, each reflecting unique operational challenges and uncertainties.
The goal of meta-learning is to learn a policy that can quickly adapt to new unseen CLR problems, by leveraging the experience gained from solving prior problems. This formulation enables evaluation of the generalization and adaptation capabilities of the proposed method across diverse restoration scenarios.
}

\emph{\textbf{Simulation and Evaluation:}} To evaluate the proposed methods for the CLR problem, we conduct simulations based on the following setup: 1) It is assumed that the demand for each critical load~$i \in \mathcal{L}$, represented as $(\textbf{p}=[p^1,\dots,p^{N}]^\top$ $\in \mathbb{R}^{N} ~\text{and} ~\textbf{q}=[q^1,\dots,q^{N}]^\top \in \mathbb{R}^{N})$, stays constant over the course of the restoration horizon, reflecting conservative restoration planning under emergency conditions where load behavior is more predictable \cite{wang2018risk,zhang2022curriculum}. Furthermore, intelligent grid edge devices equipped with smart switches enable fine-grained load control, allowing partial and continuous restoration  at each time step \cite{zhang2022curriculum}. 2) Renewable generations from photovoltaic and wind sources are forecast in advance, although forecasts may contain errors. This reflects realistic operating conditions where generation uncertainty must be considered for robust decision-making. 3) Simulations focus on steady-state DERs dispatch and load restoration decisions, neglecting the transient dynamics of the distribution system. 4) At the beginning of the control horizon $t=1$, we assume that the radial distribution network has already been reconfigured and re-energized. Here, reconfiguration refers to a one-time offline switching action performed during the restoration planning stage. All DERs are synchronized, initiated by black-start-capable units, and ready for load pickup. The resulting topology remains fixed throughout the restoration process. These assumptions allow the RL agent to focus solely on post-reconfiguration scheduling of DER dispatch and load restoration. 

\section{Meta RL for Sequential CLR}\label{Metasection}
 This section presents the proposed MGF-RL framework for solving the CLR formulated as optimal control problem in~\eqref{OCP}. We begin by reformulating the CLR as a Markov decision process (MDP) interaction to enable RL-based learning, then describe how MGF-RL leverages meta-learning principles to train a resilient CLR controller that rapidly adapt across diverse restoration scenarios. 
\subsection{Reformulating CLR as MDP}
The MDP formulation of CLR problem \eqref{OCP} allows us to leverage the framework of RL to learn optimal control policies that optimize long-term restoration performance under uncertainty. State, action, and reward are the key elements of MDP are defined as:

\emph{\textbf{State}} 
 $(\mathcal{S})$: The state $s_{t}\in \mathcal{S}$ captures system status at time step $t$ and serves as input to RL agent for decision-making. 
$$s_{t}:=\left[\left(\boldsymbol{\hat{p}}^{r}_{t}\right)^\top, \left(\tilde{p}^{}_{t-1}\right)^\top, \left({S}^{\theta}_{t}\right)^\top, \left({E}^{\mu}_{t}\right)^\top, t \right]^{\top} \in \mathcal{S},$$
where, $S_{t}^{\theta}\in \mathbf{R}^{|\mathcal{D}^{s}|}$ and ${E}^{\mu}_{t}\in \mathbf{R}^{|\mathcal{D}^{f}|}$ denote the state of charge and remaining fuel reserve, which indicates its residual capacity to support load. Second component 
$\tilde{p}^{}_{t-1}:= \text{diag}\left\lbrace\textbf{p}\right\rbrace^{-1}\textbf{p}_{t-1}\in \mathbb{R}^{N}$ reflects the load restoration level from the previous step (all loads initially start at 0 kW). The renewable generation forecast vector $\boldsymbol{\hat{p}}^{r}_{t}:=\left[p^{r}_{t},\hat{p}^{r}_{t+1|t},\dots,\hat{p}_{t+(\kappa/\tau)-1|t}\right]^{\top}\in   \mathbb{R}^{\kappa/\tau},$ is provided by the grid operator, 
where $\hat{p}^{r}_{t+x|t}$ denotes the forecast made at step $t$ for renewable generation at time $t+x$, parameter $\tau$ is the control interval length in hours, and the look-ahead length $\kappa$ determines the forecast horizon dimension $\kappa/\tau$.
For instance, with $\tau=1/12$ (5-minute control interval) and $\kappa=4$ (four-hour look-ahead), the forecast vector contains $|\boldsymbol{\hat{p}}^{r}_t|=48$ data points. It is important to highlight two key aspects concerning the inclusion of $\boldsymbol{\hat{p}}^{r}_t$ in $s_t$: i) Renewable forecasts are essential for anticipating future generation trends, enabling the controller to make proactive dispatch decisions. ii) However, these forecasts~$\boldsymbol{\hat{p}}^{r}_{t}$ are inherently imprecise due to weather prediction limitations, affecting the RL controller performance. As demonstrated in \cite{zhang2022curriculum}, imperfect forecasts can lead to inaccurate control formulations and suboptimal decisions, a challenge also faced by MPC when solving optimization problems under erroneous forecasts. 
Section~\ref{secforecasterror} systematically evaluates robustness to forecast uncertainty.

 
\emph{\textbf{Action}} $\left(\mathcal{A}\right):$ At each control step $t\in \mathcal{T}$, the action $a_t$ represents the control decision taken based on the current state $s_{t}$, determined by the RL policy $\pi:\mathcal{S}\to \Delta(\mathcal{A})$ parametrized by~$\phi$ i.e., $\boldsymbol{a_{t}}=\boldsymbol{\pi(s_{t};\phi)}$. The policy outputs a distribution over actions, where $\Delta(\mathcal{A})^{|\mathcal{S}|}$ denotes the set of probability distributions over all possible actions for each state. The action vector $a_t$ governs the decisions for both; (i) load restoration amounts, and (ii) the allocation of active and reactive power setpoints for selected DERs, and formally defined as: 
\begin{equation}\label{actionvector}
a_{t}:=\left[\left(\textbf{p}_{t}\right)^{\top}, (p^{\theta}_{t})^{\top},(p_{t}^{\mu})^{\top}, \left(\textbf{H}_{\alpha}\boldsymbol{\alpha}_{t}^{\mathcal{G}}\right)^{\top}\right]^{\top}\in\mathcal{A},
\end{equation}
where, the vector $\textbf{p}_t$ determines the active power restoration levels for critical loads, while $p^{\theta}_{t}$ and $p^{\mu}_t$ represents power output from energy storage systems and dispatchable fuel-based DERs, respectively. The selection matrix~$\textbf{H}_{\alpha}\in\mathbb{R}^{(|\mathcal{G}-1|)\times |\mathcal{G}|}$ ensures control of both active and reactive power through DER power factor angles $\boldsymbol{\alpha}_t^{\mathcal{G}}$.
Note that renewable generation outputs (i.e., $p_{t}^{w}$ and $p_{t}^{\rho}$) are excluded from $a_{t}$, as they are generally dispatched at full capacity unless curtailment is required due to grid constraints (e.g., voltage violations). Also, reactive power of loads $\textbf{q}_{t}$ is not explicitly included in action vector \eqref{actionvector}, instead it is inferred using fixed power factor assumption $q_{t}^{i}=p^{i}_{t}\cdot(q^{i}/p^{i}), \forall i \in \mathcal{L}$, based on the practical invariance of load characteristics during outage.
The chosen action $a_t$ directly impacts the subsequent state variables i.e., restored load $\tilde{p}_t$, state of charge $S^\theta_{t+1}$ of storage devices, and the remaining fuel level $E^\mu_{t+1}$ for dispatchable generators. Over the control horizon, RL agent learns a policy~$\pi$ to optimize actions to effectively formulate a strategy for system restoration.

\emph{\textbf{Reward}:} $r_{t}$ serves as a scalar evaluation of the control action~$a_{t}$, based on the state $s_t$. Corresponding to \eqref{eq1}, the
reward is defined as, $$r_t = \boldsymbol{\varsigma}^{\top} \textbf{p}_t - {\mu}\boldsymbol{\varsigma}^{\top}[\textbf{p}_{t-1}-\textbf{p}_t]^{+}+\mathcal{V}_t$$ and is computed using the simulation outcomes at $t$. The reward encourages high-priority load restoration while penalizing load fluctuations and voltage violations.
\begin{figure}
\centering
\includegraphics[width=0.95\linewidth]{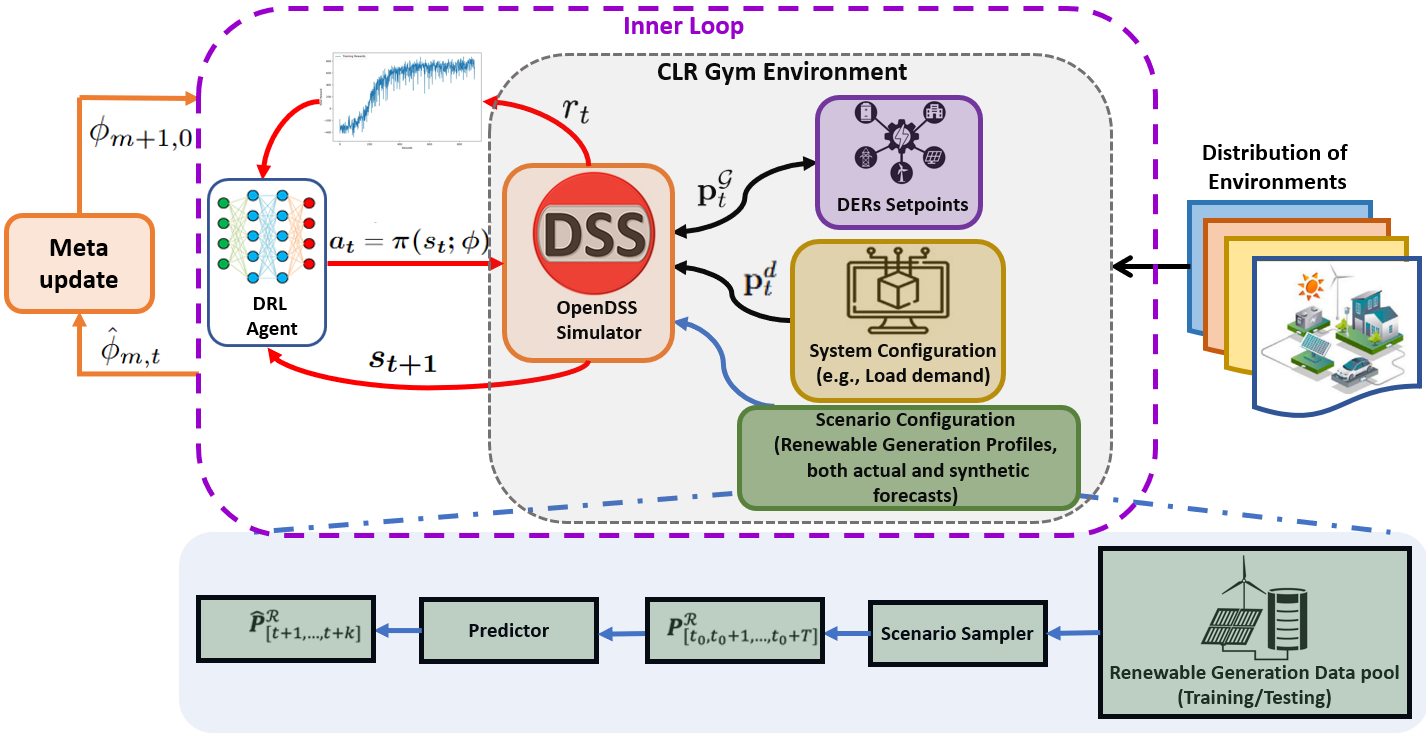}
\caption{The CLR learning environment integrates a DRL agent with OpenDSS for optimizing load restoration through iterative policy improvement. 
}
\label{CLRmodel}
\end{figure}

\emph{\textbf{Environment Interaction:}} 
We use a custom OpenAI Gym~\cite{brockman2016openai} learning environment that interfaces an RL agent with the OpenDSS~\cite{dugan2018open} grid simulator, as illustrated in Fig.~\ref{CLRmodel}. The CLR learning environment facilitates interaction between the RL agent and the simulated distribution network while enforcing operational constraints, such as box constraints on individual power elements and power balance constraints to maintain system stability. In essence, the aim of RL agent is to find an optimal control policy, denoted as $\pi^{*}$, which maximizes the expected total reward over control horizon $t\in 
\mathcal{T}$, i.e., 
\begin{align}
 \pi^{*}(a_t|s_{t})=\arg\max_{\pi}F\left(\phi\right)
\end{align} 
where, $F(\phi):=\mathbb{E}_{a_{t}\sim\pi(s_t;\phi), \xi \sim \Re^{tr}}\left(\sum_{t\in \mathcal{T}} r_{t}\right)$, and $\phi$ represents parameters of the policy network $\pi(s_{t};\phi)$, modeled as neural network and $\Re^{tr}$ is the set of renewable generation training scenarios.

The scenario configuration block in Fig.~\ref{CLRmodel} samples training $\Re^{tr}$ and testing $\Re^{ts}$ scenarios. During training, the RL agent iteratively learns the optimal policy $\pi^{*}(a_{t}|s_{t})$ by interacting with the environment and sampling scenarios $\xi \sim \Re^{tr}$.
Each scenario $\xi= \cup_{r\in \mathcal{R}}\left({p}_{1:T}^{r},\hat{{p}}_{1:T}^{r}\right)$ includes both the actual renewable generation profile ${p}^{r}_{1:T}$, which remains unknown to RL agent and used solely for simulation to realized power flows, and the forecasted generation values $\boldsymbol{\hat{p}}_{t}^{r}$, available to the agent at each step for decision making. We construct these scenarios using historical renewable generation data $\mathcal{P}^{\mathcal{R}}$ from the \textit{NREL WIND Toolkit}~\cite{draxl2015wind}. Since this dataset lacks operator forecasts, we synthesize $\boldsymbol{\hat{p}}^{r}_{t}$ using the forecast generation method $\mathcal{M}\left(p^{r}_{1:T},\Xi\right)$ proposed in \cite{zhang2022curriculum}, 
which generates forecasts with controlled error level $\Xi$. This forecasting mechanism follows two key principles: (1) forecast accuracy degrades with longer horizons, $x-$step-ahead forecasts are less accurate than $x'-$step-ahead when $x>x'$. (2) Forecasts are updated at each step using the latest realized generation, ensuring high temporal corelation between $\boldsymbol{\hat{p}}^{r}_{t}$ and $\hat{p}^{r}_{t+1}$. 

At the beginning of each training episode (environment is reset via the reset function),
all DERs are initialized by black-start-capable units and ready for load pickup, with initial conditions, $S^{\theta}_0 = s_0$ (initial SOC), $E^{\mu}_0 = E^f$ (full fuel reserves), $\tilde{p}_0 = \mathbf{0}$ (no loads restored), and $t=1$. The topology remains fixed throughout the episode, allowing the RL agent to focus on DER dispatch and load restoration sequencing. This reset process enables the agent to experience diverse initial conditions across training episodes while maintaining consistency with operational practice where topology is established before iterative restoration begins. Upon sufficient training, the RL agent is exposed to various outage conditions, enabling it to make informed control decisions.

\emph{\textbf{Within-task RL Training}:} RL algorithms are generally categorized into \textbf{value-based methods} and \textbf{policy-based methods} for learning the optimal policy. Policy gradient methods, a subset of policy based approaches, updates policy parameters via gradient ascent as: 
\begin{equation}
\phi_{t+1}=\phi_{t}+\alpha {\nabla}_{\phi}F(\phi).  
\end{equation}
This necessitates the computation of the gradient $\nabla_{\phi}F(\phi)$. 
However, computing $\nabla_{\phi}F(\phi)$ can be challenging in practice due to two main reasons: (i) the environment dynamics are often unknown or non-differentiable, and (ii) estimating the gradient using samples can introduce high variance, slowing down convergence.
Most policy gradient-based method such as PPO \cite{PPO}, TRPO \cite{trpo}, and DDPG~\cite{ddpg} rely on the policy gradient theorem via \textit{back-propagation} (BP)~[\cite{sutton2018reinforcement}, Ch. 13] to estimate $\hat{\nabla}_{\phi}F(\phi)$. Despite their successes, these methods can trapped in local optima due to their reliance on local gradient information. To address these challenges, we adopt ES-RL, a \textit{gradient-free} optimization method that directly searches the parameter space,
which offers scalability and global search capabilities.
ES-RL perturbs the policy parameters $\phi$ by adding random  Gaussian noise (i.e.,~$\boldsymbol{\varepsilon}=(\varepsilon_{1},\varepsilon_2,\dots, \varepsilon_{n})\sim \mathcal{N}(0,I)$),
with mean~$\bar{\phi}$ and fixed covariance $\sigma^2 I$. At each iteration, these perturbation generate $``n"$ new policies $\pi_{\phi_{t}+\sigma{\varepsilon}_{i}}$, which are then evaluated by interacting with the environment to gather experience and obtain stochastic returns $F_{i}=F(\phi_{t}+\sigma \varepsilon_{i})$. Thus, ES-RL aims at optimizing the Gaussian smoothed version of the expected reward, i.e.
$\mathbb{E}_{\phi\sim N(\bar{\phi},\sigma^2I)}F(\phi)=\mathbb{E}_{\boldsymbol{\varepsilon}\sim N(0,I)}\left[F(\bar{\phi}+\sigma \boldsymbol{\varepsilon})\right].$
Using the score function estimator, the gradient of smoothed version of original objective function with respect to~$\phi$ can be approximated as: $$\nabla_{\phi}\mathbb{E}_{\boldsymbol{\varepsilon}\sim N(0,I)}F(\bar{\phi}+\sigma\boldsymbol{\varepsilon})=\frac{1}{\sigma}\mathbb{E}_{\boldsymbol{\varepsilon}\sim N(0,I)}\left[\boldsymbol{\varepsilon} \cdot F(\bar{\phi}+\sigma\boldsymbol{\varepsilon})\right]$$This means the gradient can be estimated using \textit{zeroth-order} sampled rewards $F(\bar{\phi}+\sigma \boldsymbol{\varepsilon})$ without backpropagation. The policy parameters are then updated using gradient ascent:
\begin{equation}\nonumber
    \phi_{t+1}=\phi_{t}+\alpha\frac{1}{n\sigma }\sum_{i=1}^{n}F_{i}\varepsilon_{i}
\end{equation}

The parameter $\sigma$ balances exploration and estimation accuracy: larger $\sigma$, accelerate convergence but may bias the perturbed function $\mathbb{E}_{\boldsymbol{\varepsilon}\sim N(0,I)}F(\phi+\sigma\boldsymbol{\varepsilon})$; smaller $\sigma$ reduce variance but may limit exploration. In our implementation, we use fixed based on preliminary tuning, though adaptive schemes exist~\cite{ES}. With sufficient iterations, $\phi_{t}$ is expected to converge to a near optimal policy.
 ES-RL offers several advantages: 
 
 \noindent(a) It does not require gradient information from the environment or the policy network, making it suitable for non-differentiable objective.
 
 \noindent(b) The evaluation of multiple perturbations can be easily parallelized, leading to efficient use of computational resources. 
 
 \noindent(c) By optimizing a smoothed objective, method can navigate highly non-convex landscapes and avoid poor local optima that typically trap gradient-based methods. 
 
 To demonstrate the efficiency of ES-RL, we compare it with PPO, DDPG, and DQN on benchmark mathematical optimization tasks. As shown in Table~\ref{table1} and Fig.~\ref{uniandbivariantcomparison}, ES-RL achieves better higher reward and converges significantly faster than gradient-based methods. These results support the use of ES-RL as an effective within-task optimization method in our framework.

 \begin{table}[ht]
\centering
\caption{ES-RL achieves the lowest runtime.}
\begin{tabular}{c c c c c c c c} 
 \hline
 \textbf{Algorithm} &ES-RL &PPO & DDPG&DQN & \\ 
 \hline
 
{Runtime}~~$f_{1}(\cdot)$  & 0.03s & 14.32s & 273.97s & 12.39s  &\\
 \hline
 {Runtime}~~$f_2(\cdot,\cdot)$ & 0.06s & 29.88s & 564.02s & 27.81s& \\
 \hline
\end{tabular}
\label{table1}
\end{table}
\begin{figure}[t]
    \centering
    {
        \includegraphics[width=0.235\textwidth]{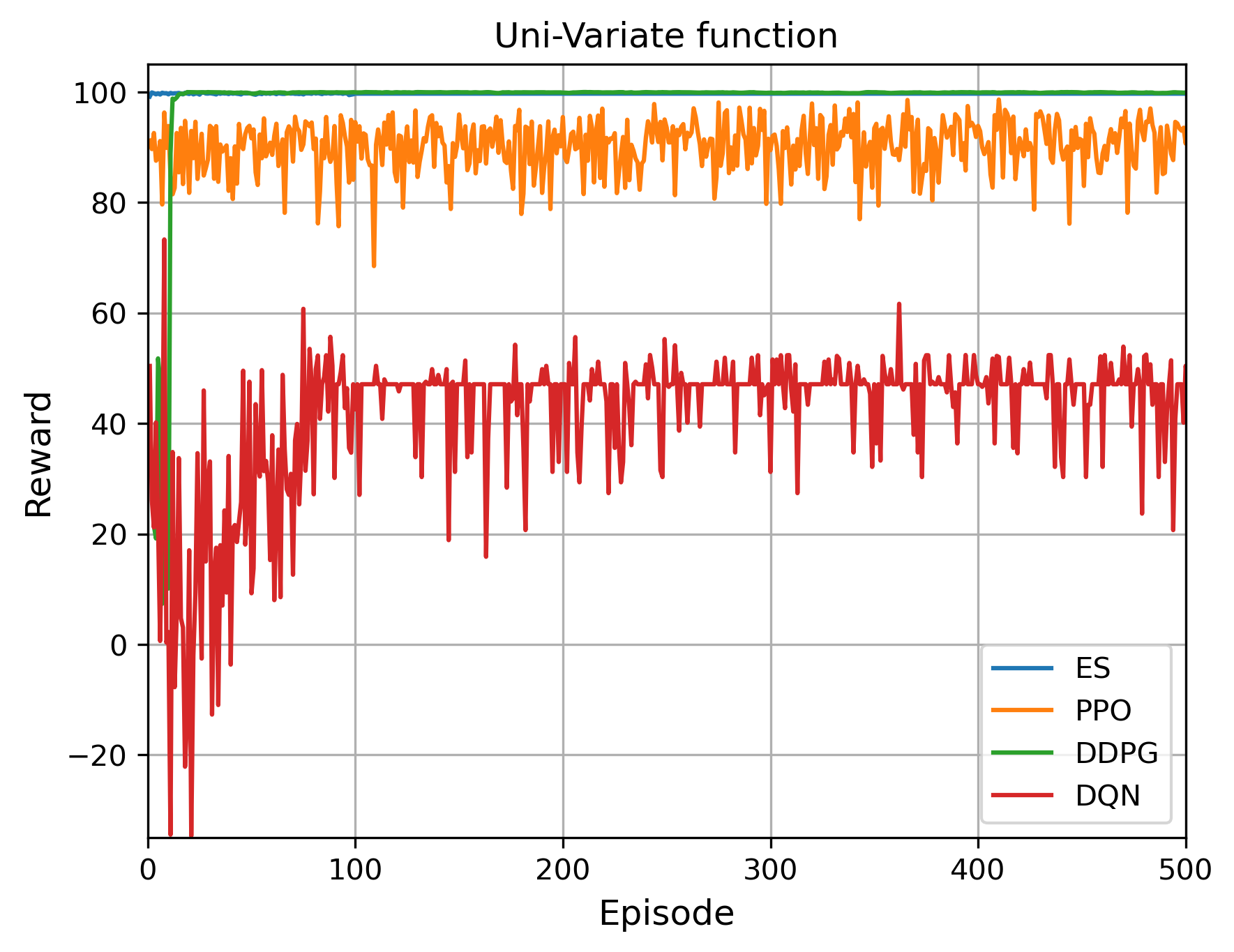}
        \label{fig:univariant}}
    \hfill
    {%
        \includegraphics[width=0.235\textwidth]{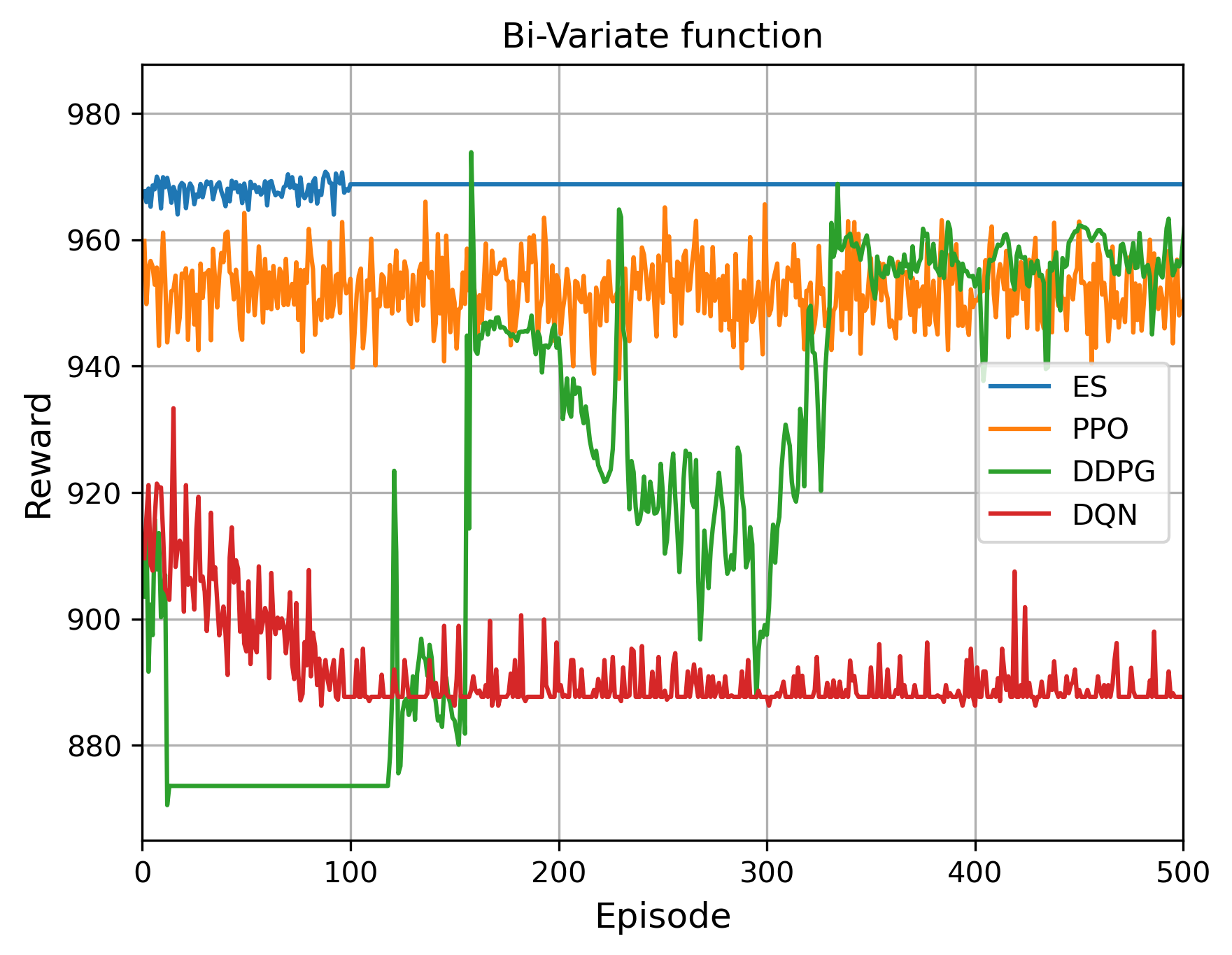}
        \label{fig:bivariate}}
    \caption{Cumulative reward curves for RL algorithms optimizing $f_{1}(x)=-x^{2}+10$ and 2D Ackley function $f_{2}(x,y)=100-20\exp(-0.2\sqrt{0.5(x^{2}+y^{2})})-\exp(0.5(cos(2\pi x)+cos(2\pi y )))+e+20$. The global search capability of ES-RL enables faster convergence and higher final rewards.}
    \label{uniandbivariantcomparison}
\end{figure}
\begin{algorithm}
{\small

\caption{MGF-RL}
\label{alg:meta-rl}

\DontPrintSemicolon  
\SetAlgoNoEnd
\SetAlgoNoLine
\SetKwFunction{FMain}{Train}
\SetKwFunction{FTesting}{Testing}
\SetKwProg{Fn}{Function}{:}{}
\KwIn{Meta training tasks $M$, training iterations $T$, learning rate for task-specific training $\alpha$ and meta-stepsize $\eta_{m}$.}
\KwOut{Meta-policy $\pi_M$ 
and optimal policy $\hat{\pi}_M$.}

\Fn{\FMain{}}{
Initialize random policy $\pi_{1,0}$ with parameters $\phi_{1,0}$\;

\For{each task $(m = 1, \dots, M)$}{
  Load initial policy with parameter $\phi_{m,0}$ for task $m$ \;
  \For{iteration $(t = 1, \dots, T)$}{
    Update policy parameters using ES-RL \;  
    Save the best model $\hat{\pi}_{m,t}$ with parameters $\hat{\phi}_{m,t}$ \;
  }
  \textbf{meta-update:} $\phi_{m+1,0} \leftarrow \phi_{m,0} + \eta_{m} (\hat{\phi}_{m,t} - \phi_{m,0})$ \;
}
Save meta-policy $\pi_M = \hat{\pi}_{M,t}$ \;
}

\Fn{\FTesting{}}{
Load meta-policy $\pi_M$ \;
Fine-tune meta-policy at test time on the new unseen task to receive the optimal policy $\hat{\pi}_M$ \;
}
}
\end{algorithm}

\subsection{Proposed MGF-RL Framework}
The MGF-RL algorithm consists of two main phases: \textit{meta-training} and \textit{meta-testing}. The goal is to learn a meta-initialization for policy parameters that enables fast adaptation to new CLR tasks with minimal fine-tuning, thereby addressing the poor generalization problem inherent in standard RL approaches.

During meta-training, the agent is exposed to a set of $M$ training tasks sequentially, each representing a distinct restoration problem instance with different system parameters (e.g., load priorities, and DER forecast profiles). Critically, only one policy network is trained throughout the entire process—the same network is sequentially adapted across all $M$ tasks with its parameters serving as the initialization for the next task, enabling knowledge accumulation rather than training $M$ independent models. For each task $m$, the policy is first initialized with meta-parameters $\phi_{m,0}$ and optimized using ES-RL over~$T$ iterations. This within-task training enables the policy to adapt to task-specific characteristics. The meta-parameters are then updated based on aggregating improvements across tasks: 
$\phi_{m+1,0} \leftarrow \phi_{m,0} +  \eta_{m} (\hat{\phi}_{m,t} - \phi_{m,0})$, 
using first-order approximation of meta-gradient approach~\cite{Meta}, where meta-learning rate $\eta_m = 1/m$ decreases to balance exploration with stability. Through sequential application of this meta-update across all tasks, each $\phi_{m+1,0}$ incorporates adaptations from all previous tasks,
guiding the initialization ${\phi}_{m+1,0}$ to a region of parameter space that generalizes well across CLR scenarios. The final meta-parameters $\phi_{M+1,0}=\hat{\phi}_{M,T}$ encode weighted improvements from all $M$ training scenarios.

In the meta-testing stage, the learned meta-parameters~$\hat{\phi}_{M,T}$ of policy $\pi_{M}$ serve as an initialization when faced with a new, unseen CLR problem, such as a novel outage configuration, unprecedented load priority pattern, or unfamiliar renewable generation profile. The policy is then fine-tuned using ES-RL for a small number of iterations, leveraging the knowledge gained during meta-training to rapidly adapt to the specific challenges of the new task. The complete MGF-RL procedure is outlined in Algorithm~\ref{alg:meta-rl}. The two-level structure of the MGF-RL, with ES-RL for within-task optimization combined with first-order meta-updates for across-task generalization, offers several synergistic benefits for CLR. It leverages ES-RL's resilience to local optima and ability to maintain solution diversity, while enabling fast adaptation to new tasks through meta-learned initialization. By employing first-order meta-gradients rather than second-order derivatives (as in MAML), MGF-RL avoids substantial computational burden while achieving comparable adaptation performance. Furthermore, the gradient-free ES approach naturally accommodates discrete control actions and non-differentiable constraints without requiring specialized constraint-handling mechanisms. This combination makes MGF-RL well-suited for tackling diverse CLR challenges.



\section{Provable Guarantees for MGF-RL within-Online Framework}\label{theoretical}

This section provides theoretical analysis of the proposed MGF-RL algorithm. We begin by analyzing the convergence behavior of ES-RL algorithm within in a single-task setting. We then extend the analysis to meta-learning framework by introducing the notion of \textit{task-averaged regret}, 
 which leverage \textit{task-similarity} metrics to capture how different tasks are related and analyze its impact on convergence guarantees. 
Finally, we derive \textit{dynamic regret bounds} to quantify performance in non-stationary environments. We moved the detailed proofs in Appendix \ref{appendixtheoretical}.

\subsection{Within-Task Analysis}
We consider a sequence of MDPs, indexed by $m=1,\dots,M$, each task corresponds to distinct MDP. Within each task $m$, the agent applies ES-RL algorithm for $T$ iterations to refine the policy parameter $\left\lbrace \phi_{m,j}\right\rbrace_{j=0}^{T}$ , resulting in a suboptimal policy with parameters $\hat{\phi}_{m,T}$. Following Theorem \ref{esrlconvergence} provides convergence behavior of ES-RL used as within-task optimizer, and forms the basis for the meta-learning analysis in Section \eqref{section4b}.
\begin{theorem}[Theorem 6; \cite{analysis}]\label{esrlconvergence}
    Suppose ES-RL perform $T= \frac{4(d_{\phi}+4)^2 (L_{F})^2 }{\epsilon^2}$ iterations 
    to optimize per-task objective function $F_m:\mathbb{R}^{d_{\phi}}\to \mathbb{R}$, where $d_{\phi}:=\dim(\phi) $ be the number of trainable policy parameters. For each task $m$,  with learning rate $\alpha_{t}$=$\frac{1}{(d_{\phi}+4)(T+1)^{1/2}(L_{F})}$ and choose $\sigma\leq \frac{{\epsilon}}{2L_{F}d_{\phi}^{1/2}}$. Then for any $\epsilon>0$ the sub-optimality gap for each task $m$ satisfies:
    \begin{equation}\label{esconv}
\mathbb{E}\left[F_{m}\left(\hat{\phi}_{m,T}\right)\right]-F_{m}\left({\phi}^{*}_{m}\right)\leq \frac{2(N+4)L_{F} D_{KL}\left(\phi^{*}_{m}|\phi_{m,0}\right)}{\sqrt{T}},
    \end{equation}
where $\phi^{*}_m$ are the parameters of optimal policy $\pi^{*}_{m}$ and $L_{F}$ is the Lipschitz constant of function $F_{m}(\cdot)$. 
\end{theorem}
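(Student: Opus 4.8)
The plan is to treat this as a corollary of the cited result on Gaussian-smoothed zeroth-order optimization (Nesterov--Spokoiny-style analysis) applied to the per-task reward objective $F_m$, so the proof is mainly a matter of instantiating the generic bound with the stated parameter choices and then bounding the smoothing bias. First I would recall the standard machinery: ES-RL performs ascent on the Gaussian-smoothed surrogate $F_m^\sigma(\phi) := \mathbb{E}_{\varepsilon\sim\mathcal{N}(0,I)}[F_m(\phi+\sigma\varepsilon)]$, and the score-function identity already displayed in the paper shows the ES update is an unbiased stochastic-gradient step on $F_m^\sigma$. For an $L_F$-Lipschitz (and, in the cited reference, gradient-Lipschitz) objective in dimension $d_\phi$, the smoothed function $F_m^\sigma$ has a Lipschitz-continuous gradient with constant proportional to $(d_\phi+4)L_F$, and the returns $F_i$ have bounded variance; this is exactly the setting of Theorem~6 in~\cite{analysis}.

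Next I would carry out the convergence-rate bookkeeping. Running $T = 4(d_\phi+4)^2 L_F^2/\epsilon^2$ iterations of projected/plain stochastic ascent on $F_m^\sigma$ with the prescribed step size $\alpha_t = [(d_\phi+4)(T+1)^{1/2}L_F]^{-1}$ yields, by the standard telescoping argument (descent lemma $+$ averaging, or best-iterate extraction), an optimality gap on the smoothed objective of order $\tfrac{(d_\phi+4)L_F\,R^2}{\sqrt{T}}$, where $R^2$ is the squared ``distance'' from the initialization $\phi_{m,0}$ to the optimum $\phi_m^*$ measured in the relevant geometry — in the paper's statement this role is played by $D_{KL}(\phi_m^*\|\phi_{m,0})$, reflecting that the policy parametrization makes the natural Bregman divergence a KL divergence rather than Euclidean distance. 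Then I would add the smoothing-bias term: $|F_m^\sigma(\phi) - F_m(\phi)| \le \tfrac{1}{2}\sigma^2 L_F d_\phi$ (or $\sigma L_F\sqrt{d_\phi}$ in the merely-Lipschitz case), and verify that the choice $\sigma \le \epsilon/(2 L_F d_\phi^{1/2})$ makes this bias $O(\epsilon)$, hence absorbable into the $O(1/\sqrt{T})$ term given the chosen $T$. Collecting the rate term, the bias term, and noting $\hat\phi_{m,T}$ is taken as the best iterate so it dominates the average, gives the claimed inequality~\eqref{esconv}.

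The main obstacle — and the step I would flag as requiring care rather than being purely mechanical — is reconciling the information-geometric ``distance'' term: the cited Theorem~6 is presumably stated with a Euclidean $\|\phi_m^* - \phi_{m,0}\|^2$ term, whereas~\eqref{esconv} writes $D_{KL}(\phi_m^*\|\phi_{m,0})$. Making this rigorous needs either (i) an argument that the ES update on the policy-parameter space behaves like mirror ascent with the KL as the Bregman potential, so that the prox-term in the telescoping bound is $D_{KL}$, or (ii) a local equivalence $\|\phi_m^*-\phi_{m,0}\|^2 \asymp D_{KL}(\phi_m^*\|\phi_{m,0})$ via the Fisher-information metric, valid when the initialization is not too far from the optimum. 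I would adopt (i), invoking the smoothness/convexity structure of $F_m$ in the policy-distribution sense assumed by the reference, and simply cite Theorem~6 with this identification, leaving the purely technical verification to the appendix as the paper promises. A secondary, more routine concern is handling the stochasticity of the returns $F_i$ (finite-sample Monte Carlo estimates of $F_m$): one takes expectations throughout, so the variance enters only through the step-size constant and is already accounted for in the $L_F$-dependence of the cited bound.
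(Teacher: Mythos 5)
Your reconstruction follows the right route, but note first that the paper does not actually prove this statement: it is imported wholesale by citation as Theorem~6 of \cite{analysis} (the Nesterov--Spokoiny analysis of Gaussian-smoothed zeroth-order methods for Lipschitz objectives), and no derivation appears in the appendix despite the promise that proofs are deferred there. Against that backdrop, your sketch is a faithful account of what the cited theorem provides: the score-function identity makes ES an unbiased stochastic ascent on the smoothed surrogate, the iteration count $T=4(d_\phi+4)^2L_F^2/\epsilon^2$, the step size $\alpha_t=[(d_\phi+4)(T+1)^{1/2}L_F]^{-1}$, and the smoothing radius $\sigma\le\epsilon/(2L_Fd_\phi^{1/2})$ are exactly the parameter choices of the nonsmooth case of that reference, and the bias bound $\sigma L_F\sqrt{d_\phi}=O(\epsilon)$ closes the gap between the smoothed and true objectives. (One small correction: that theorem is the merely-Lipschitz, nonsmooth regime, not the gradient-Lipschitz one you hedge toward; the $(d_\phi+4)$ prefactor and the $\sigma$ choice are signatures of the nonsmooth analysis.)

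The obstacle you flag is the genuine one, and it is a gap in the paper, not just in your proposal: the cited bound carries the Euclidean radius $\|\phi_{m,0}-\phi_m^*\|$ (or its square), whereas \eqref{esconv} substitutes $D_{KL}(\phi_m^*|\phi_{m,0})$. The paper performs this substitution silently; it is inherited from the meta-RL framework of \cite{khattar2022cmdp}, where the within-task learner is a mirror-descent-type policy update whose prox term is genuinely the KL divergence. For plain ES on Euclidean parameter space that identification does not follow from the cited theorem, so your option~(i) (recasting the update as mirror ascent with KL Bregman potential) or option~(ii) (local Fisher-metric equivalence under Assumption~\ref{assumption1}, which confines policies to a shrunken simplex and hence makes KL and squared Euclidean distance comparable up to constants) is required and is nowhere supplied. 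You are right to refuse to paper over it. A final cosmetic point: the factor $(N+4)$ in \eqref{esconv} should read $(d_\phi+4)$ --- $N$ is the number of critical loads elsewhere in the paper --- and your reconstruction correctly uses the dimension throughout.
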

 
We can observe from \eqref{esconv}, the regret bound depends on distance $D_{KL}\left(\phi^{*}_m|\phi_{m,0}\right)$, indicating that better policy  initialization parameters $\left\lbrace\phi_{m,0}\right\rbrace _{m=1}^{M}$ improves convergence, motivating the use of meta-learned parameter across tasks.  
MGF-RL algorithm sequentially update these initial parameters~$\phi_{m,0}$ before running within-task ES-RL, with aim to minimize the \textit{task average optimality gap~(TAOG)} after $M$ tasks and defined as:
\begin{equation}
\begin{split}
    \bar{R}=\frac{1}{M}\sum_{m=1}^{M}&\mathbb{E}\left[F_m(\hat{\phi}_{m,T})\right] -F_m(\phi_{m}^{*})\\&\leq \frac{2(d_{\phi}+4)L_{F} \sum_{m=1}^{M}D_{KL}\left(\phi^{*}_{m}|\phi_{m,0}\right)}{M\sqrt{T}}
\end{split}
\end{equation}
The TAOG reflects how well the final policy with parameters $\hat{\phi}_{m,T}$ generalizes to unseen tasks and is influenced by task similarity. For a fixed initialization $\phi$,
the \textbf{task similarity} is defined by $D_{*}=\min_{\phi\in \Delta(\mathcal{A})^{|\mathcal{S}|}}\frac{1}{M}\sum\limits_{m=1}^{M}D_{KL}\left(\phi^{*}_{m}|\phi\right)$, in a static environment. 
\subsection{Provable Guarantees for Practical MGF-RL}\label{section4b}
In practical scenario, the meta-learner observes only the suboptimal policies $\hat{\phi}_{m}$ obtained from within-task ES-RL. Consequently, we estimate $D_{KL}\left({\phi^{*}_{m}|\phi}\right)$ with $D_{KL}\left({\hat{\phi}_{m}|\phi}\right)$. The approximation error induced by this substitution is bounded as: $D_{KL}\left({\phi^{*}_{m}|\phi}\right)-D_{KL}\left({\hat{\phi}_{m}|\phi}\right) \leq \epsilon_{m}$ in [Lemma 17, \cite{khattar2022cmdp}], thus the \textbf{empirical task similarity} defined as $\hat{D}_{*}=\min_{\phi\in \Delta(\mathcal{A})^{|\mathcal{S}|}}\frac{1}{M}\sum\limits_{m=1}^{M}D_{KL}\left(\hat{\phi}_{m}|\phi\right)$.
We define the cumulative inexactness $\boldsymbol{\epsilon}_{M}:=\sum_{m=1}^{M}\epsilon_{m}$. 
\subsubsection{Static Regret Bounds}
We now analyze the impact of this approximation on the meta-update, where $\left\lbrace\phi_{m,0}\right\rbrace_{m=1}^{M}$ are updated via implicit online mirror descent (IOMD) or follow the regularized leader (FTRL) on the loss $\hat{l}_{m}\left(\phi\right)=D_{KL}\left(\hat{\phi}_{m}|\phi\right)$.
\begin{theorem}[TAOG for static environment]\label{TAOGstatic}
Let $\phi^{*}$ be the fixed meta initialization for all the tasks given by $\phi^{*}= \arg\min_{\phi\in \Delta(\mathcal{A})^{|\mathcal{S}|}}\sum_{m=1}^{M}\frac{D_{KL}\left(\hat{\phi}_{m}|\phi\right)}{M}$.
Within each task we run ES-RL for $T$ iterations to obtain $\left\lbrace\hat{\phi}_{m,0}\right\rbrace_{m=1}^{M}$. 
Let $B^{2}_{*}:= \max_{a,b\in X} B_{\Psi}(a,b)$, where \( B_{\Psi} \) denote the Bregman divergence with respect to a 1-strongly convex function \( \Psi: \text{dom}(\Psi) \to \mathbb{R} \) and the initialization $\left\lbrace \phi_{m,0}
\right\rbrace_{m=1}^{M}$ are updated by IOMD or FTRL with fixed learning rate $\eta$, then TAOG satisfies 
\begin{equation}\nonumber
    \bar{R}\leq 2(d_{\phi}+4)L_{F}\left(\frac{B^{2}_{*}}{\eta M\sqrt{T}}+\frac{ V_{M}}{ M\sqrt{T}} +\frac{3\hat{D}_{*}}{\sqrt{T}}+\frac{\boldsymbol{\epsilon}_{M}}{M\sqrt{T}}\right),
\end{equation}
where, \( V_M(l_m) = \sum_{m=2}^{M} \max_{\phi_{m,0} \in \text{Dom}(l)} |l_m(\phi_{m,0}) - l_m(\phi_{m-1,0})| \) is the temporal variability of the loss sequence.
\end{theorem}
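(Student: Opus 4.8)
\textbf{Proof Proposal for Theorem~\ref{TAOGstatic}.}

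The plan is to decompose the TAOG into two contributions: the within-task suboptimality captured by Theorem~\ref{esrlconvergence}, and the quality of the meta-learned initializations $\{\phi_{m,0}\}_{m=1}^{M}$ measured against the best fixed initialization $\phi^{*}$. Starting from the bound established just before the theorem, $\bar{R} \le \frac{2(d_{\phi}+4)L_{F}}{M\sqrt{T}}\sum_{m=1}^{M} D_{KL}(\phi^{*}_{m}\,|\,\phi_{m,0})$, I would first pass from the true-optimal divergences to the empirical ones using the per-task approximation inequality $D_{KL}(\phi^{*}_{m}\,|\,\phi) - D_{KL}(\hat{\phi}_{m}\,|\,\phi) \le \epsilon_{m}$ from Lemma~17 of \cite{khattar2022cmdp}. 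Summing over $m$ replaces $\sum_m D_{KL}(\phi^{*}_m\,|\,\phi_{m,0})$ by $\sum_m \hat{l}_m(\phi_{m,0}) + \boldsymbol{\epsilon}_{M}$, which accounts for the $\frac{\boldsymbol{\epsilon}_{M}}{M\sqrt{T}}$ term.

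Next I would recognize $\sum_{m=1}^{M}\hat{l}_m(\phi_{m,0})$ as the cumulative loss of an online learner that plays $\phi_{m,0}$ before seeing the loss $\hat{l}_m(\phi)=D_{KL}(\hat{\phi}_m\,|\,\phi)$, and compare it to the cumulative loss of the fixed comparator $\phi^{*}$, i.e. $\sum_m \hat{l}_m(\phi_{m,0}) = \sum_m \hat{l}_m(\phi^{*}) + \mathrm{Regret}_M$. By construction $\frac{1}{M}\sum_m \hat{l}_m(\phi^{*}) = \hat{D}_{*}$, giving the $\frac{3\hat{D}_{*}}{\sqrt{T}}$ contribution (the constant $3$ absorbs slack from the regret-to-comparator comparison and from handling the first task, which has no predecessor loss). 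The regret term is then controlled by the standard dynamic/shifting guarantee for IOMD or FTRL run with a 1-strongly convex regularizer $\Psi$ and fixed step size $\eta$: the regret against any fixed point is bounded by $\frac{B^{2}_{*}}{\eta} + \eta\,(\text{path/variability terms})$, and here the relevant refinement is that when consecutive losses are close the regret scales with the temporal variability $V_M$ rather than the horizon $M$. Invoking that bound yields the $\frac{B^{2}_{*}}{\eta M\sqrt{T}} + \frac{V_{M}}{M\sqrt{T}}$ terms after dividing by $M$ and carrying the $\frac{2(d_{\phi}+4)L_{F}}{\sqrt{T}}$ prefactor through.

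The main obstacle I anticipate is establishing the variability-dependent regret bound for IOMD/FTRL on this particular loss class: the losses $\hat{l}_m(\phi)=D_{KL}(\hat{\phi}_m\,|\,\phi)$ are Bregman divergences in the second argument, so I must verify they are sufficiently smooth and that the implicit update is well-defined on $\Delta(\mathcal{A})^{|\mathcal{S}|}$, and then invoke (or adapt) the known result that implicit online mirror descent enjoys regret $O(B^{2}_{*}/\eta + \eta V_M)$ rather than the naive $O(B^{2}_{*}/\eta + \eta M)$. This is where the 1-strong convexity of $\Psi$ and the definition $B^{2}_{*} = \max_{a,b\in X} B_{\Psi}(a,b)$ enter, and care is needed to ensure the telescoping of Bregman terms across the implicit steps does not leave uncontrolled residuals. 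Once that lemma is in hand, the remaining steps are routine algebra: collect the four terms, pull out the common factor $2(d_{\phi}+4)L_{F}$, and divide by $M\sqrt{T}$ as appropriate.
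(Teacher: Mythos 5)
Your proposal is correct and follows essentially the same route as the paper: both start from the within-task ES-RL bound $\bar{R}\le \frac{2(d_{\phi}+4)L_{F}}{M\sqrt{T}}\sum_{m}D_{KL}(\phi^{*}_{m}|\phi_{m,0})$, decompose into the online regret of the meta-update against the fixed comparator $\phi^{*}$ plus the comparator's average loss, invoke the variability-based static regret bound for IOMD/FTRL with fixed $\eta$ (yielding $B^{2}_{*}/\eta + V_{M}$ plus boundary terms that are absorbed into the $\hat{D}_{*}$ coefficient), and use the per-task inexactness inequality $D_{KL}(\phi^{*}_{m}|\phi)-D_{KL}(\hat{\phi}_{m}|\phi)\le\epsilon_{m}$ to arrive at $\hat{D}_{*}$ and $\boldsymbol{\epsilon}_{M}$. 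The only cosmetic difference is that you convert to the empirical losses $\hat{l}_{m}$ before applying the regret decomposition, whereas the paper decomposes first and applies the inexactness bound only to the comparator term; the substance is identical.
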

\begin{remark}Theorem \ref{TAOGstatic} establishes a regret bound that preserves the benefits of task similarity within meta-reinforcement learning. However, there is a trade-off as the bound introduces an additional cost term associated with inexactness, denoted by $\epsilon_{M}$. Notably $\epsilon_{M}/M$ decreases, either by increasing the number of iterations $T$ within each task or by optimizing the meta-initialization $\phi^{*}$ to require minimal task-specific adaptation, the inexactness term's influence on overall performance is expected to be reduced. This suggests that improving task similarity or selecting a well-suited initialization can effectively mitigate the additional cost from $\epsilon_{M}$.

\end{remark}

\subsubsection{Dynamic Regret Analysis}
In many real scenarios, we encounter dynamic environments, making it natural to study dynamic regret and to compare performance against a sequence of time-varying initial policies $\left\lbrace\pi^{*}_{m,0}\right\rbrace_{m=1}^{M}$ with parameters $\left\lbrace\varphi^{*}_{m,0}\right\rbrace_{m=1}^{M}$. In dynamic environments, the task distribution may change over time, requiring the meta-learner to adapt to a time-varying sequence of comparators $\left\lbrace\varphi^{*}_{m}\right\rbrace_{m=1}^{M}$. The performance of MGF-RL in this setting is measured by the \textit{dynamic regret}: $L_{M}\left(\varphi^{*}_{1:M}\right):=\sum_{m=1}^M l_m(\phi_{m,0}) - \sum_{m=1}^M l_m(\varphi_{m}^*),$ where $\varphi_{m}^* \in \arg\min_{\varphi\in \Delta(\mathcal{A})} l_m(\varphi)$ represents the optimal initial policy for each task $m$, and the function $l_m(\cdot) = \mathbb{E} \left[ D_{\mathrm{KL}}(\phi_m^* | \cdot) \right]$ quantifies the online loss.
To quantify how dynamically changing environment impact the learning process we use two standard measure \textbf{path length} \cite{zhao2020dynamic}, $P_M := \sum_{m=2}^M \|\phi_{m,0}^* - \phi_{m-1}^*\|$, and \textbf{temporal variability} of the loss function, \cite{besbes2015non}. These quantities reflect the degree of change in the comparator sequence and the task losses, respectively.

\begin{theorem}[Dynamic regret for online learning]\label{dynamic regret}
     For any dynamical varying comparator $\varphi^{*}_{m}$. Let \( g_m' \in \partial l_m(\phi_{m+1,0}) \) be a subgradient of the loss at \( \phi_{m+1,0} \),and if $\exists$ $\gamma \in \mathbb{R}$ such that $B_{\Psi}(x,z)-B_{\Psi}(y,z)\leq \gamma\|x-y\|,$ and IOMD or FTRL applied with learning rates \( \{\eta_m\}_{m=1}^{M} \), the dynamic regret is bounded by:
\begin{equation}\nonumber
    \begin{split}
L_{M}\leq \sum_{m=1}^{M}\frac{\|\varphi^*_{m}-\varphi^*_{m-1}\|}{\eta_{m}}+\frac{B^2_{*}}{\eta_{m}}+\sum_{m=1}^{M}2\eta_{m}\|g_{m}\|_{*}\|g'_{m}\|_{*}
    \end{split}
\end{equation}
\end{theorem}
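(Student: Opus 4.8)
\textbf{Proof proposal for Theorem~\ref{dynamic regret}.}

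The plan is to treat the meta-update $\phi_{m+1,0} \leftarrow \phi_{m,0} + \eta_m(\hat\phi_{m,t} - \phi_{m,0})$ as one step of implicit online mirror descent / FTRL on the losses $l_m(\cdot)=\mathbb{E}[D_{\mathrm{KL}}(\phi_m^*|\cdot)]$, and to bound the dynamic regret $L_M(\varphi^*_{1:M}) = \sum_m l_m(\phi_{m,0}) - \sum_m l_m(\varphi^*_m)$ by the standard mirror-descent telescoping argument, modified to accommodate a moving comparator. First I would write down the one-step progress inequality for IOMD: using the three-point identity for Bregman divergences together with the first-order optimality condition at $\phi_{m+1,0}$ (which involves the subgradient $g_m' \in \partial l_m(\phi_{m+1,0})$), one obtains for each $m$ an inequality of the form
\begin{equation}\nonumber
\eta_m\big(l_m(\phi_{m,0}) - l_m(\varphi^*_m)\big) \leq B_\Psi(\varphi^*_m,\phi_{m,0}) - B_\Psi(\varphi^*_m,\phi_{m+1,0}) + \eta_m^2\,C_m,
\end{equation}
where the quadratic-in-$\eta_m$ correction term $C_m$ will be controlled by $\|g_m\|_*\|g'_m\|_*$ (the implicit-update gap between the gradient at the old and new iterate). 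Dividing by $\eta_m$ and summing over $m=1,\dots,M$ is the core step.

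The second step is to handle the telescoping sum $\sum_m \tfrac{1}{\eta_m}\big(B_\Psi(\varphi^*_m,\phi_{m,0}) - B_\Psi(\varphi^*_m,\phi_{m+1,0})\big)$ when the comparator $\varphi^*_m$ drifts. I would re-index so that the divergence $B_\Psi(\varphi^*_m,\phi_{m+1,0})$ appearing with a minus sign at step $m$ is re-matched against $B_\Psi(\varphi^*_{m+1},\phi_{m+1,0})$ at step $m+1$; the mismatch is exactly $B_\Psi(\varphi^*_{m+1},\phi_{m+1,0}) - B_\Psi(\varphi^*_m,\phi_{m+1,0})$, which by the stated hypothesis $B_\Psi(x,z)-B_\Psi(y,z) \le \gamma\|x-y\|$ is at most $\gamma\|\varphi^*_{m+1}-\varphi^*_m\|$. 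This is what produces the path-length-type term $\sum_m \|\varphi^*_m - \varphi^*_{m-1}\|/\eta_m$ in the bound (absorbing $\gamma$ into constants or carrying it explicitly), while the leftover boundary term $B_\Psi(\varphi^*_1,\phi_{1,0})$ and the per-step residuals are absorbed into the $B^2_*/\eta_m$ term via $B^2_* = \max_{a,b} B_\Psi(a,b)$. Collecting the $\eta_m^2 C_m / \eta_m = \eta_m C_m$ remainders gives the final $\sum_m 2\eta_m\|g_m\|_*\|g'_m\|_*$ term.

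The main obstacle I anticipate is the implicit-update correction term: because IOMD/FTRL evaluates the subgradient at the \emph{new} iterate $\phi_{m+1,0}$ rather than the old one, the usual "linearized loss" regret argument does not apply verbatim, and one must carefully bound the gap $l_m(\phi_{m,0}) - l_m(\phi_{m+1,0}) - \langle g_m', \phi_{m,0}-\phi_{m+1,0}\rangle$ and the stability $\|\phi_{m,0}-\phi_{m+1,0}\|$ of consecutive iterates in terms of $\eta_m\|g_m\|_*$. Establishing $\|\phi_{m+1,0}-\phi_{m,0}\| \le \eta_m\|g_m\|_*$ (or a comparable bound) from strong convexity of $\Psi$, and then combining it with $\|g'_m\|_*$ via Cauchy–Schwarz, is the delicate accounting that yields the stated cross-term; everything else is the standard dynamic-regret telescoping. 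A secondary subtlety is that the comparators live in the simplex $\Delta(\mathcal{A})$ while the iterates $\phi_{m,0}$ are meta-initializations, so I would note that the projection/feasibility is implicit in the mirror map and does not affect the inequalities. Finally, I would remark that with adaptive $\eta_m$ (e.g. $\eta_m \propto 1/\sqrt{m}$ as used in the algorithm) the first two terms scale like $\sqrt{M}(1+P_M)$ and the last like $\sqrt{M}$, recovering the $\min\{V_M, \sqrt{M(1+\mathcal{P}_M)}\}$ behavior quoted in the introduction once this bound is combined with Theorem~\ref{esrlconvergence}.
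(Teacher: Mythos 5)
Your proposal follows essentially the same route as the paper's proof: a one-step IOMD inequality obtained from the first-order optimality condition at $\phi_{m+1,0}$ combined with the Bregman three-point identity, a telescoping of $\sum_m \bigl(B_{\Psi}(\varphi^*_m,\phi_{m,0})-B_{\Psi}(\varphi^*_m,\phi_{m+1,0})\bigr)/\eta_m$ in which the comparator drift is absorbed via the $\gamma$-Lipschitz assumption on the Bregman divergence to produce the path-length term, and the implicit-update residual $l_m(\phi_{m,0})-l_m(\phi_{m+1,0})-B_{\Psi}(\phi_{m+1,0},\phi_{m,0})/\eta_m$ bounded by $2\eta_m\|g_m\|_*\|g'_m\|_*$ through exactly the stability estimate $\sqrt{2B_{\Psi}(\phi_{m+1,0},\phi_{m,0})}\le 2\eta_m\|g'_m\|_*$ you identify as the delicate step. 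The plan is correct and matches the paper's argument in both decomposition and key inequalities.
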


\begin{theorem}[Regret with adaptive learning rate]\label{adaptivelr}
  Under the assumption of Theorem \ref{dynamic regret} and path-length satisfies $P_{M}(\varphi^{*}_{m})\leq \mathcal{C}_{1}$ with $\mathcal{C}_{1}\geq 0$. Then with decreasing rate $\eta_{m}=\frac{1}{\lambda_{m}}=\frac{\beta^{2}}{\sum_{m=1}^{M}\delta_{m}}$ and $\beta^{2}=\left(D^2_{b}+\gamma {P}_{M}\right)$ incur the dynamic regret bounded as 
 \begin{equation}\nonumber
     L_{M}\left(\varphi^{*}_{1:M}\right) \leq  \mathcal{O} \left(\min\left\lbrace V_M, 2\sqrt{3B^{2}_{*}+\gamma \mathcal{C}_{1} \sum_{m=1}^{M}\|g_{m}\|^2_{*}}\right\rbrace\right)
 \end{equation}

\end{theorem}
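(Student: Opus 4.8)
\noindent\emph{Proof sketch.} The plan is to start from the per-task dynamic-regret inequality of Theorem~\ref{dynamic regret},
\begin{equation}\nonumber
L_{M}\le\sum_{m=1}^{M}\frac{\|\varphi^{*}_{m}-\varphi^{*}_{m-1}\|}{\eta_{m}}+\sum_{m=1}^{M}\frac{B^{2}_{*}}{\eta_{m}}+\sum_{m=1}^{M}2\eta_{m}\|g_{m}\|_{*}\|g'_{m}\|_{*},
\end{equation}
and to tune $\{\eta_m\}$ so that the comparator-drift and curvature terms are balanced against the gradient-energy term. First I would merge the first two sums into a single path-dependent penalty: since the Bregman divergences are uniformly bounded by $B^2_*$ (as in Theorem~\ref{TAOGstatic}), Abel summation---needed because $1/\eta_m$ now varies with $m$---together with the hypothesis $B_{\Psi}(x,z)-B_{\Psi}(y,z)\le\gamma\|x-y\|$ collapses $\sum_m\|\varphi^{*}_m-\varphi^{*}_{m-1}\|/\eta_m+\sum_m B^2_*/\eta_m$ into a term of order $(3B^2_*+\gamma P_M)/\eta$, where $P_M=\sum_{m\ge2}\|\varphi^{*}_m-\varphi^{*}_{m-1}\|\le\mathcal{C}_1$ by hypothesis. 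Writing $\delta_m:=\|g_m\|_*\|g'_m\|_*$ and using $\|g'_m\|_*\lesssim\|g_m\|_*$ (both are $\partial l_m$-subgradients of the Lipschitz loss $l_m$ at nearby points), the last sum is at most a constant times $\eta\sum_m\|g_m\|^2_*$.

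Next I would substitute the prescribed rate $\eta_m=\beta^2/\sum_m\delta_m$ with $\beta^2=D^2_b+\gamma P_M$. This is precisely the minimizer of the scalarized surrogate $A/\eta+\eta\sum_m\delta_m$ with $A\asymp 3B^2_*+\gamma\mathcal{C}_1$; if the intended reading is the running-sum tuning $\eta_m=\beta^2/\sum_{k\le m}\delta_k$, the identical conclusion follows from the standard self-confident inequality $\sum_m\delta_m/\sqrt{\sum_{k\le m}\delta_k}\le 2\sqrt{\sum_m\delta_m}$. Either way one gets, up to absolute constants,
\begin{equation}\nonumber
L_{M}\le 2\sqrt{\Big(3B^2_*+\gamma\mathcal{C}_1\Big)\sum_{m=1}^{M}\|g_m\|^2_*},
\end{equation}
which is the $\sqrt{M(1+\mathcal{P}_M)}$-type branch of the claim ($\sum_m\|g_m\|^2_*\asymp M$ for bounded subgradients, and $\gamma\mathcal{C}_1$ encodes the path length). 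To obtain the minimum with $V_M$, I would then invoke the complementary, assumption-free fact that IOMD/FTRL on $\{l_m\}$ also has dynamic regret of order $V_M=\sum_{m\ge2}\sup_\phi|l_m(\phi)-l_{m-1}(\phi)|$, which follows from a be-the-leader argument against the moving comparator $\varphi^{*}_{1:M}$ (equivalently, a restarting/stability argument); since both bounds hold simultaneously, $L_M\le\mathcal{O}\!\big(\min\{V_M,\,2\sqrt{3B^2_*+\gamma\mathcal{C}_1\sum_m\|g_m\|^2_*}\}\big)$.

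The main obstacle is the bookkeeping in the first step: with genuinely time-varying $\eta_m$ the Bregman terms do not telescope cleanly, and one must use Abel summation plus the $\gamma$-Lipschitz condition on $B_\Psi$ to absorb the residual $(1/\eta_m-1/\eta_{m-1})$-weighted diameters---this is exactly what turns a single $B^2_*$ into the constant $3B^2_*$. A secondary subtlety is making the $V_M$ branch rigorous, since the be-the-leader argument must be adapted to a time-varying comparator; once $L_M$ is controlled, composing it with the within-task guarantee of Theorem~\ref{esrlconvergence} and the $\epsilon_m$-inexactness of Section~\ref{section4b} recovers the full performance bound quoted in the introduction.
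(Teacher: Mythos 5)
Your starting point (the bound of Theorem~\ref{dynamic regret}) is the same as the paper's, and your balancing argument plausibly recovers the square-root branch, but there are two concrete mismatches with what the theorem actually asserts and one genuine gap. First, the $\delta_m$ in the prescribed learning rate is not the gradient product $\|g_m\|_*\|g'_m\|_*$: in the paper it is the implicit-update progress $\delta_m = l_m(\phi_{m,0}) - l_m(\phi_{m+1,0}) - B_{\Psi}(\phi_{m+1,0},\phi_{m,0})/\eta_m$, and the rate is $\eta_m = \beta^2/\sum_k \delta_k$, i.e.\ inversely proportional to the \emph{sum} of the $\delta_k$, not to its square root. Consequently neither your fixed-rate balancing of $A/\eta + \eta\sum_m\delta_m$ nor the self-confident inequality $\sum_m \delta_m/\sqrt{\sum_{k\le m}\delta_k}\le 2\sqrt{\sum_m\delta_m}$ matches the tuning the theorem prescribes. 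The paper's mechanism is instead a self-bounding one: with $\lambda_{M+1}=\sum_m\delta_m/\beta^2$ the regret collapses to $(B_*^2+\gamma\mathcal{C}_1+\beta^2)\lambda_{M+1}$, which for $\beta^2 = B_*^2+\gamma\mathcal{C}_1$ is proportional to the single quantity $\sum_m\delta_m$.

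The genuine gap is in the $V_M$ branch. In the paper the minimum is obtained for free by bounding that same quantity $\sum_m\delta_m$ in two different ways: (i) since $B_\Psi\ge 0$, $\sum_m\delta_m \le \sum_m\bigl(l_m(\phi_{m,0})-l_m(\phi_{m+1,0})\bigr) \le l_1(\phi_{1,0})-l_M(\phi_{M+1,0})+V_M$ by telescoping against the temporal variability, and (ii) $\delta_m\le\sqrt{2}\,B_*\|g_m\|_*$ via the strong convexity of $\Psi$, which yields the gradient-energy branch. Both bounds hold simultaneously for the \emph{same} algorithm run, which is what licenses the $\min$. Your substitute --- a standalone claim that IOMD/FTRL enjoys $\mathcal{O}(V_M)$ dynamic regret against a moving comparator via a be-the-leader or restarting argument --- is asserted without proof and is not an off-the-shelf fact: dynamic regret against arbitrary comparator sequences is not sublinear in general, and the $V_M$ bound here relies precisely on the telescoping structure of the implicit update's $\delta_m$ together with $\varphi_m^*\in\arg\min_\varphi l_m(\varphi)$. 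Until that branch is derived (most naturally by adopting the paper's two-bounds-on-$\sum_m\delta_m$ device), the $\min\{V_M,\cdot\}$ form of the conclusion is not established. Your speculation that Abel summation is what produces the constant $3B_*^2$ is also off target; that constant arises from $B_*^2+\gamma\mathcal{C}_1+\beta^2$ with the choice $\beta^2=B_*^2+\gamma\mathcal{C}_1$, not from re-summing the Bregman telescope.
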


\begin{remark}
    If we assume $\|g_{m}\|_{*}^{2}\leq \max_{m\in[M]}\|g_{m}\|_{*}^{2}\leq 1$ and $\gamma=B_{*}=1$, then the above result give us the dynamic regret bound of $\mathcal{O} \left( \min \left\lbrace V_{M},\sqrt{M(1+\mathcal{C}_{1})} \right\rbrace \right)$. 
    This bound is tight for sequences whose path-length $P_{M}=\mathcal{C}_{1}$, matching the lower bounds for both the path-length and temporal variability.
\end{remark}
\begin{theorem}[TAOG for dynamic environment]\label{TAOGdynamic}
  Let $\left\lbrace\phi_{m,0}\right\rbrace_{m=0}^{M}$ 
  be determined by 
   follow the average leader, 
    and for each task we train the policy 
   for $T$ steps with learning rate $\alpha$ and obtain $\left\lbrace\hat{\phi}_{m,T}\right\rbrace_{m=1}^{M}$. If $\varphi_{m}^*$ is the optimal meta initialization for each task, then the TAOG is bounded as      
\begin{equation}\label{TAOG}
    \bar{R}\leq \mathcal{O}\left(\min \left\lbrace V_{M}, \sqrt{M(1+{P}_{M})}\right\rbrace+\frac{\hat{S}_{*}}{\sqrt{T}}+\frac{\boldsymbol{\epsilon}_{M}}{M\sqrt{T}}\right).
\end{equation}
where, $\hat{S}_{*} := \frac{1}{M}\sum\limits_{m=1}^{M}D_{KL}\left(\hat{\phi}_{m}|\varphi^{*}_{m}\right)$ is task-relatedness.
\end{theorem}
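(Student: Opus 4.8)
The plan is to combine three ingredients already assembled in the paper: the per-task ES-RL convergence bound (Theorem \ref{esrlconvergence}), the inexactness bound $D_{KL}(\phi_m^*|\phi) - D_{KL}(\hat\phi_m|\phi) \le \epsilon_m$ from \cite{khattar2022cmdp}, and the adaptive-learning-rate dynamic regret bound of Theorem \ref{adaptivelr} (with the remark instantiating it as $\mathcal{O}(\min\{V_M,\sqrt{M(1+P_M)}\})$). First I would write the task-averaged optimality gap exactly as in the static case: averaging \eqref{esconv} over $m=1,\dots,M$ gives $\bar R \le \frac{2(d_\phi+4)L_F}{M\sqrt T}\sum_{m=1}^M D_{KL}(\phi_m^*|\phi_{m,0})$. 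The goal is then to control $\sum_m D_{KL}(\phi_m^*|\phi_{m,0})$, which is precisely the cumulative online loss $\sum_m l_m(\phi_{m,0})$ with $l_m(\cdot)=\mathbb{E}[D_{KL}(\phi_m^*|\cdot)]$.

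Next I would split this cumulative loss into a comparator term plus a regret term: $\sum_m l_m(\phi_{m,0}) = L_M(\varphi_{1:M}^*) + \sum_m l_m(\varphi_m^*)$, where $\varphi_m^*$ is the per-task optimal initialization. The regret term $L_M$ is bounded by Theorem \ref{adaptivelr} / its remark by $\mathcal{O}(\min\{V_M,\sqrt{M(1+P_M)}\})$ once we note that the path-length assumption $P_M(\varphi_m^*)\le\mathcal{C}_1$ is subsumed (set $\mathcal{C}_1 = P_M$), so dividing by $\sqrt T$ yields the first term of \eqref{TAOG}. For the comparator term, I would use the inexactness bound to pass from $\phi_m^*$ to $\hat\phi_m$: $\frac{1}{M}\sum_m l_m(\varphi_m^*) = \frac{1}{M}\sum_m D_{KL}(\phi_m^*|\varphi_m^*) \le \frac{1}{M}\sum_m D_{KL}(\hat\phi_m|\varphi_m^*) + \frac{1}{M}\sum_m\epsilon_m = \hat S_* + \frac{\epsilon_M}{M}$. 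Plugging these back, dividing through by $\sqrt T$ and absorbing the constant $2(d_\phi+4)L_F$ into the $\mathcal{O}(\cdot)$, we obtain $\bar R \le \mathcal{O}\big(\min\{V_M,\sqrt{M(1+P_M)}\}/\sqrt T + \hat S_*/\sqrt T + \epsilon_M/(M\sqrt T)\big)$, but since $1/\sqrt T\le 1$ the first term can be stated without the $\sqrt T$ in the denominator to match \eqref{TAOG} exactly — I would check the paper's stated form and keep whichever is tighter / as written.

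The delicate points I expect to need care are: (i) verifying that "follow the average leader" used for $\{\phi_{m,0}\}$ is an instance (or close relative) of the IOMD/FTRL update required by Theorem \ref{adaptivelr}, so that its dynamic regret guarantee actually applies — this is the main obstacle, since the meta-update in Algorithm \ref{alg:meta-rl} is $\phi_{m+1,0}\leftarrow\phi_{m,0}+\eta_m(\hat\phi_{m,T}-\phi_{m,0})$ with $\eta_m=1/m$, which is the online-gradient/averaging form of FTRL on the quadratic-regularized losses, and one must confirm the subgradient and Bregman conditions ($B_\Psi(x,z)-B_\Psi(y,z)\le\gamma\|x-y\|$, $g_m'\in\partial l_m(\phi_{m+1,0})$) hold in the probability-simplex geometry $\Delta(\mathcal{A})^{|\mathcal{S}|}$; (ii) ensuring the inexactness transfer is applied at the right argument ($\varphi_m^*$ rather than a fixed $\phi$), which is legitimate since the Lemma 17 bound of \cite{khattar2022cmdp} holds uniformly in the second argument; and (iii) tracking the $2(d_\phi+4)L_F$ factor and the $1/\sqrt T$ scaling consistently so that the dimension- and horizon-dependence is correctly suppressed inside $\mathcal{O}(\cdot)$. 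Once (i) is settled, the rest is the same decomposition-and-substitution bookkeeping as in the proof of Theorem \ref{TAOGstatic}.
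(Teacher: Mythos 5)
Your proposal is correct and follows essentially the same route as the paper: average the per-task ES-RL bound, split $\sum_m D_{KL}(\phi_m^*|\phi_{m,0})$ into the dynamic regret $L_M(\varphi^*_{1:M})$ plus the comparator loss $\sum_m D_{KL}(\phi_m^*|\varphi_m^*)$, bound the former via Theorem~\ref{adaptivelr} and the latter via the inexactness bound to obtain $\hat S_* + \boldsymbol{\epsilon}_M/M$. Your side observation about the $1/(M\sqrt{T})$ scaling of the first term is apt — the paper's intermediate step carries that factor but the final statement drops it — so your bookkeeping is, if anything, slightly more careful than the paper's.
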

\begin{remark}
 The bound in \eqref{TAOG} highlights smaller path length $P_{M}$, temporal variability $V_{M}$ and task relatedness $\hat{S}_{*}$, lead to tighter regret bounds, enabling better generalization. Adaptive learning rates help account for non-stationarity, and the regret decreases sublinearly in both the number of tasks $M$ and within-task iteration $T$. 
\end{remark}

\section{Case Study} \label{casestudy}
\subsection{Environment Description and Experiment Setup}
To evaluate the adaptability and effectiveness of the proposed MGF-RL method, experiments are conducted on modified IEEE-13 and IEEE-123 bus distribution systems. As illustrated in Fig.~\ref{bus_sys}, the IEEE-13 bus system includes 15 critical loads distributed across a multi-phase network and integrates four DERs: a battery energy storage, solar photovoltaic, wind turbine, and microturbine. 
The detailed operational constraints of these DERs are summarized in Table~\ref{tableparam}. The load restoration process operates over a six-hour control horizon with a control interval of five minutes ($\tau = 1/12$), resulting in a total of 72 time steps~($|\mathcal{T}| = 72$). Each load $i \in \mathcal{L}$ is assigned a priority weight ${\varsigma}^{i}\in [0.2, 1.0] $ to reflect its criticality, with specific values listed in Table~\ref{tableparam}. The system enforces voltage limits between  0.95 and 1.05 p.u., with violations penalized by cost parameter $\lambda = 10^{8}$.

To train and evaluate the adaptability of the proposed meta-based RL method, a set of 60 distinct load restoration tasks is generated by varying the base load demand profile ($\bf{p}_t$), representing diverse grid operating conditions. The active power (kW) demands of the loads across all 60 tasks are illustrated in Fig.~\ref{baseload}, showing substantial variation in load magnitudes (ranging from 20 to 160 kW per load) and distribution patterns across different network locations. Among these 60 tasks, 32 tasks are used for meta-training, while the remaining unseen tasks are randomly selected for testing. The renewable generation profiles are constructed from historical data in the NREL WIND Toolkit \cite{draxl2015wind}, with forecast errors ranging from 0\% to 25\% to evaluate robustness under prediction uncertainty (analyzed in Section~\ref{secforecasterror}).
\begin{figure}
\centering
\includegraphics[width=0.82\linewidth]{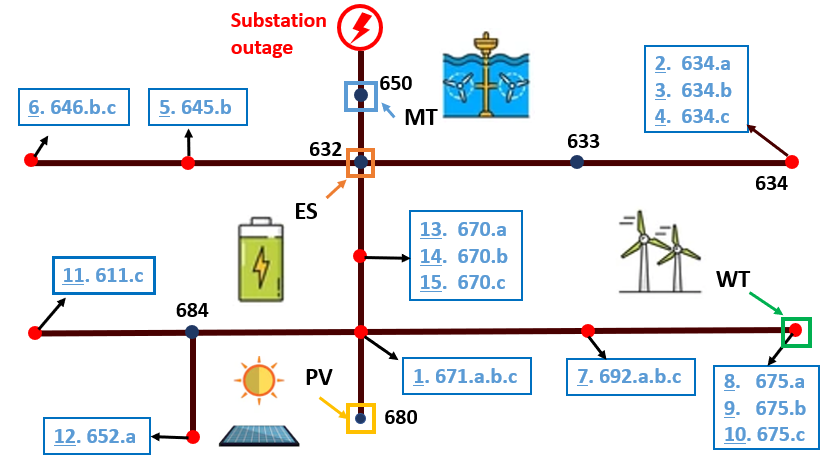}
\caption{Modified IEEE-13 bus system}
\label{bus_sys}
\end{figure}

\begin{table}[ht]
\begin{center}
\caption{DER Characteristics and Operational Constraints}
\begin{tabular}{c c } 
 \hline\hline
 \textbf{DERs} & \textbf{Parameters} \\ [0.5ex] 
 \hline\hline
 Energy Storage (ST) & $-P^{\theta,ch}=P^{\theta,dis}=250$\\&$160\leq S^{\theta}_{t}\leq 1250,~~~\alpha^{\theta}\in \left[0.\pi/4\right]$ \\ 
 \hline
 Micro-Turbine (MT) & $p^{\mu}\in [0,400],\quad E^{\mu}=1200$\\&$\alpha^{\mu}\in \left[0,\pi/4\right]$  \\
 \hline
 Photovoltaic (PV) & $p^{\rho}\in\left[0,300\right],\quad \alpha^{\rho}\in \left[0,\pi/4\right]$  \\
 \hline
 Wind Energy (WT) & $p^{\omega}\in\left[0,300\right],\quad \alpha^{\omega}\in \left[0,\pi/4\right]$ \\
 \hline
 Priority factor & $\boldsymbol{\varsigma}$=[1.0, 1.0, 0.9, 0.85, 0.8,0.8, 0.75,0.7, \\&~~\quad~~~~0.65, 0.5, 0.45, 0.4, 0.3, 0.3, 0.2]  \\ 
 \hline
\end{tabular}
\label{tableparam}
\end{center}
\end{table}
\begin{figure}[t]
\centering
\includegraphics[width=0.7\linewidth]{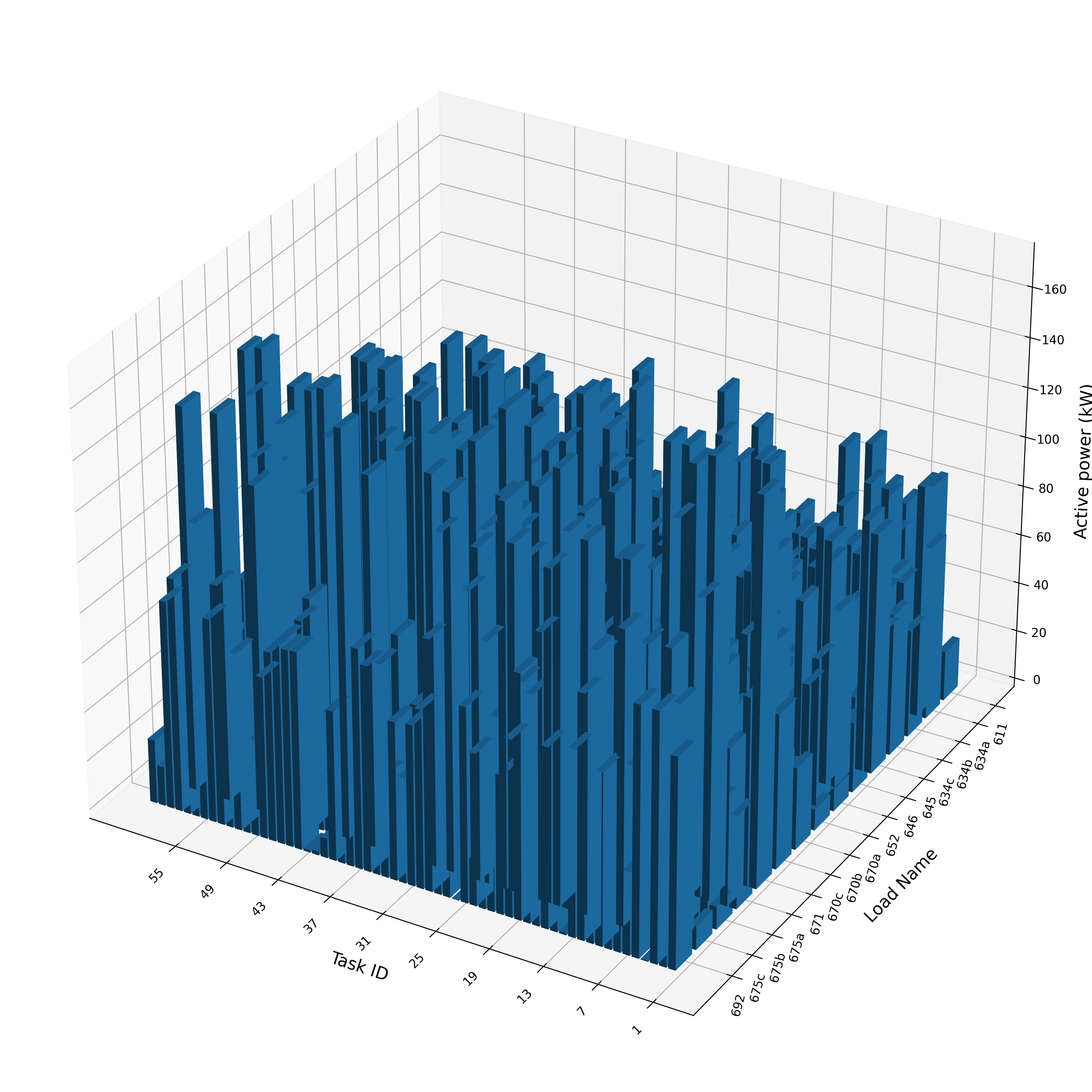}
\caption{Active power demand profiles (kW) across 60 restoration tasks. The x-axis represents Task ID, the y-axis shows load identifiers (node and phase, e.g., `675c' indicates node 675, phase c), and the z-axis indicates active power demand. Loads without phase labels are balanced three-phase loads. The substantial variation demonstrates task diversity for meta-learning evaluation.
}
\label{baseload}
\end{figure}
\subsection{Performance Comparison of Generalized Policy } 
We evaluate the efficacy of the proposed meta-RL Algorithm~\ref{alg:meta-rl} against several other strategies: ES-RL (a non-meta, task specific version trained from scratch on each task), a warm-start RL that transfers the final policy from one task as initialization to the next~\cite{ul2024enhancing}, MAML-RL using TRPO for within-task optimization \cite{finn}, and automated curriculum based RL (AC-RL), which applies supervised transfer from a simplified training setup, for detail AC-RL, (see \ref{ACL} for details). All methods are trained and tested under the same task distributions and resource models to ensure fair comparison.

Performance is evaluated using a set of metrics designed to reflect both learning efficiency and restoration quality. These include: (i) \textit{Mean cumulative reward} during fine-tuning on test tasks indicating overall policy quality; (ii) \textit{Jump-start performance} $\left(\Delta_{init}\right)$ defines as initial performance gap relative to ES-RL baseline, measuring the immediate adaption capability without fine-tuning; (iii) \textit{asymptotic reward gain} $\left(\Delta R\right)$ representing final converged reward improvement over ES-RL after fine-tuning, indicating long-term adaptation quality. Additionally, we report system-level reliability indicator such as \textit{SAIDI (System Average Interruption Duration Index)}, which measure the average outage duration per load, and \textit{restoration time}, defined as the elapsed time to restore a given percentage of total system load, measuring restoration speed.

\begin{figure}
\centering
\includegraphics[width=6.9cm]{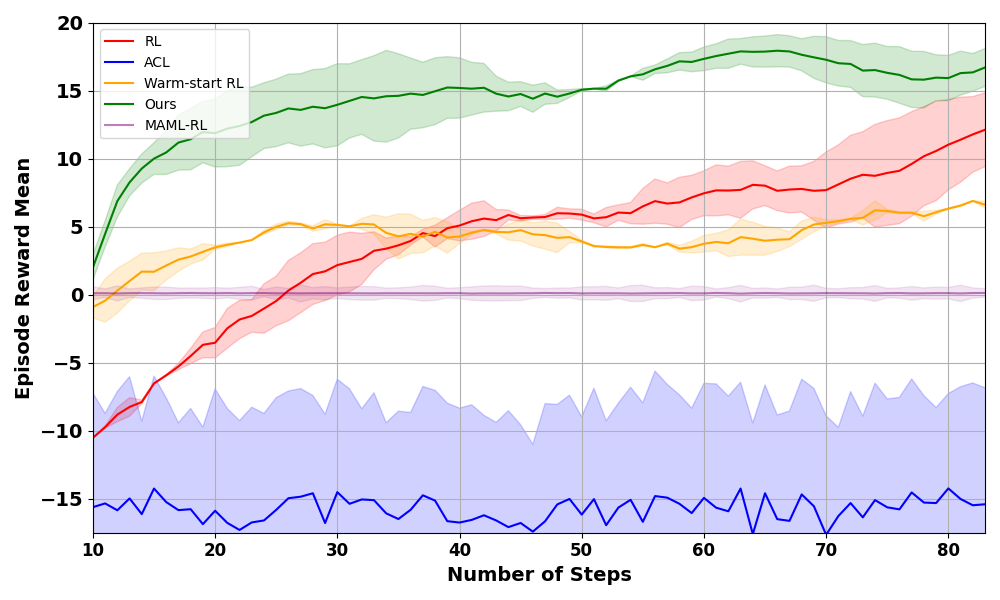}
\caption{Learning curves showing mean and variance of episode rewards over 5 runs, with MGF-RL achieving higher and stable performance across tasks. }
\label{fig1}
\end{figure}
Figure~\ref{fig1} presents the episode reward trajectory for each method during fine-tuning. MGF-RL consistently demonstrates higher reward curves with low variance across runs, indicating both effective generalization of training and reliable adaptation to unseen tasks. The meta-learned initialization provides strong jump-start performance, allowing MGF-RL to begin with near-optimal behavior and continue improving throughout fine-tuning. 
Warm-start RL benefits from prior policy transfer and shows a better starting point than ES-RL, but converges to a lower reward than MGF-RL. ES-RL steadily improves, but remains below MGF-RL throughout. AC-RL exhibits high variance across runs and under-performs on average, suggesting sensitivity to task conditions and instability. MAML-RL shows stable behavior but limited reward gain, likely due to conservative gradient-based updates that slow its adaptation process.

\begin{table}[ht]
\caption{MGF-RL consistently achieves positive values in both $\Delta_{init}$ and $\Delta R$ metrics, demonstrating superior initial adaptation and sustained improvement.}
\label{metric}
\centering
\begin{adjustbox}{width=\columnwidth}
\begin{tabular}{c c c c c c c c c} 
 \toprule
 \textbf{Ids} & \multicolumn{2}{c}{\textbf{MGF-RL(Ours)}}& \multicolumn{2}{c}{\textbf{Warm-Start-RL}}& \multicolumn{2}{c}{\textbf{AC-RL}}& \multicolumn{2}{c}{\textbf{MAML-RL}} \\
 \cmidrule(lr){2-3} \cmidrule(lr){4-5} \cmidrule(lr){6-7} \cmidrule(lr){8-9}
 & ${\Delta_{init}}$  & ${\Delta{R}}$ & ${\Delta_{init}}$ & ${\Delta{R}}$ & ${\Delta_{init}}$ & ${\Delta{R}}$ & ${\Delta_{init}}$ & ${\Delta{R}}$ \\
 \midrule
34 & \bf{9.95} & \bf{1.35} & -532.22 & 0.92 & -10.54 & -19.73 & 0.84 & -16.07 \\
35 & \bf{1.63} & \bf{0.21} & 1.30 & 0.12 & -9.14 & -7.59 & -4.99 & -12.06 \\
36 & \bf{279.37} & \bf{6.19} & -253.81 & 1.82 & -123.27 & -19.21 & -2.15 & -11.20 \\
37 & -69.23 & 10.05 & -321.31 & \bf{10.70} & 539.05 & -20.32 & \bf{698.75} & -16.25 \\
38 & \bf{5.43} & \bf{2.4} & 2.65 & 1.35 & -17.86 & -20.57 & -11.86 & -15.57 \\
39 & \bf{4.6} & \bf{10.41} & 2.23 & 4.09 & 0.15 & -18.40 & -4.15 & -15.40 \\
\bottomrule
\end{tabular}
\end{adjustbox}
\end{table}

To further examine task-level adaptation, we analyze  $\Delta_{init}$ and $\Delta R$ values across several test tasks presented in Table~\ref{metric}. MGF-RL achieves positive $\Delta_{init}$ and $\Delta R$ in all cases, indicating that its meta initialization not only provides early advantage but also support continued improvement. In contrast, warm-start RL shows variable initial performance and inconsistent long-term gains. AC-RL and MAML-RL frequently fall behind ES-RL, with negative values in both metrics across most tasks. While MAML-RL occasionally starts well, it consistently fails to sustain improvement, highlighting limitations in gradient-based generalization. For instance, in \textbf{Task 36}, MGF-RL demonstrates a significant $\Delta_{init}$ of 279.37 and a $\Delta R$ of 6.19, substantially outperforms than other meta-based algorithm, highlighting MGF-RL ability to adapt rapidly to new tasks while maintaining a trajectory of improvement throughout learning.

\subsection{Controller Evaluation}
To complement the learning-centric evaluation, in this subsection we analyze the operational behavior and decision-making quality of the trained MGF-RL controller controller from three perspectives: (1) how the controller react towards intermittent renewable generation, (2) comparative analysis of restoration strategies under a representative scenario, and (3) system-level reliability metrics aggregated across multiple test cases.


\begin{figure}[ht]
\centering
\includegraphics[width=6.0cm]{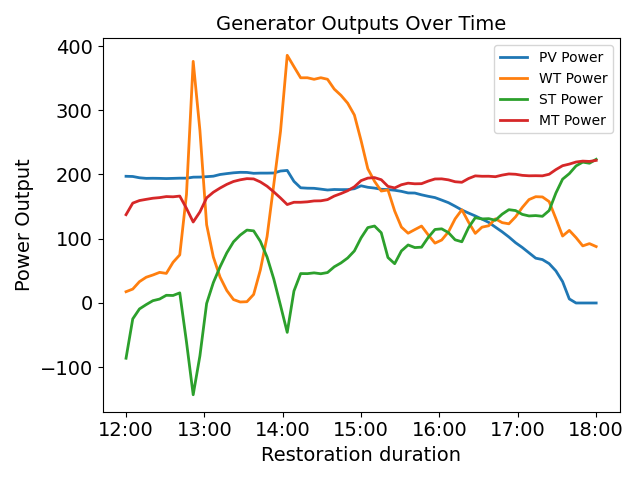}
\caption{Generation output from DERs over restoration duration }
\label{generationoutput}
\end{figure}

Fig.~\ref{generationoutput} illustrates the generation profiles of DERs over the control horizon under a specific restoration scenario. The WT output exhibits significant variability; however, the proposed MGF-RL controller effectively manages the dispatchable microturbine and energy storage systems at each timestep to mitigate fluctuations in renewable generation, which helps providing continuous support and stable power to loads that have been restored. During periods of high renewable output (e.g., 14:00-15:00), the controller strategically allocates excess generation to recharge battery storage rather than immediately restoring additional loads, building energy reserves for later periods when renewable generation may be insufficient. This conservative approach avoids the operational penalties associated with premature load restoration followed by necessary load shedding when resources become unavailable.  These observations suggest the proposed controller demonstrates decision-making capabilities at each time step. 
\begin{figure*}[ht]
    \centering
    \begin{subfigure}[b]{\textwidth}
        \centering
        \subcaptionbox{MGF-RL}[0.98\textwidth]{ 
            \begin{minipage}{0.23\textwidth}
                \centering
                \includegraphics[width=\textwidth]{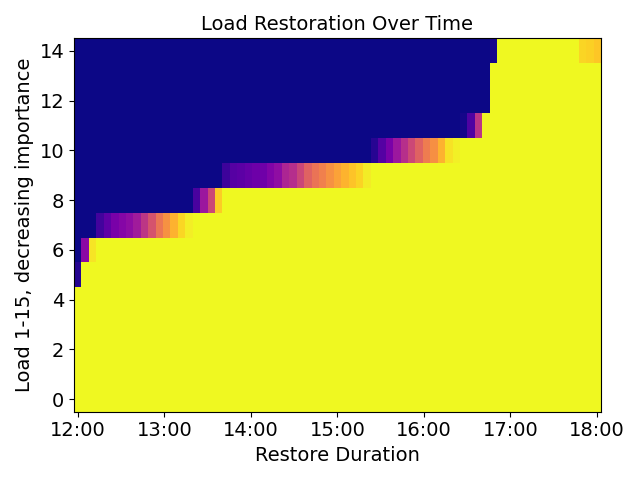}
            \end{minipage}
            \begin{minipage}{0.23\textwidth}
                \centering
                \includegraphics[width=\textwidth]{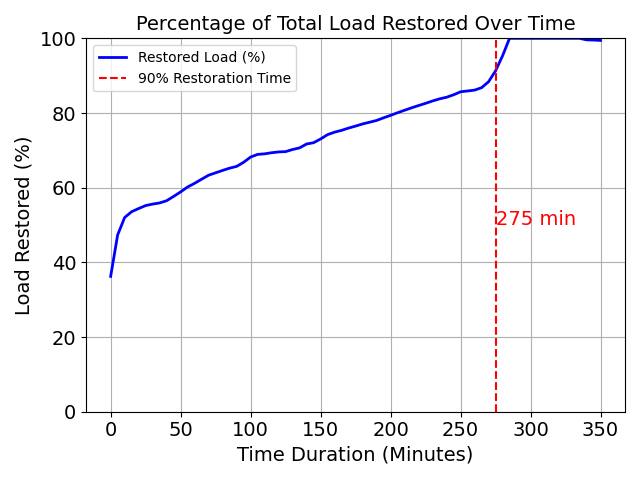}
            \end{minipage}
            \begin{minipage}{0.22\textwidth}
                \centering
                \includegraphics[width=\textwidth]{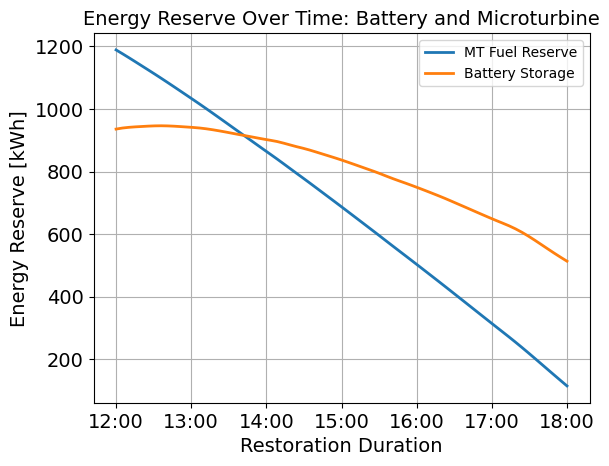}
            \end{minipage}
            \begin{minipage}{0.235\textwidth}
                \centering
                \includegraphics[width=\textwidth]{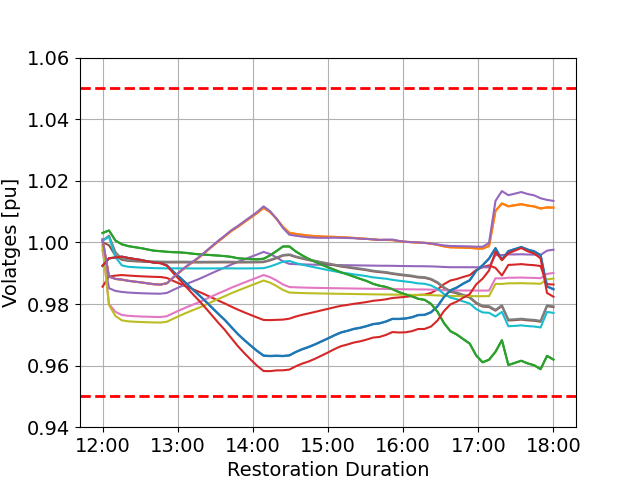}
            \end{minipage}
        }
    \end{subfigure}

    \begin{subfigure}[b]{\textwidth}
        \centering
        \label{mpcsub}
        \subcaptionbox{MPC}[0.98\textwidth]{ 
            \begin{minipage}{0.23\textwidth}
                \centering
                \includegraphics[width=\textwidth]{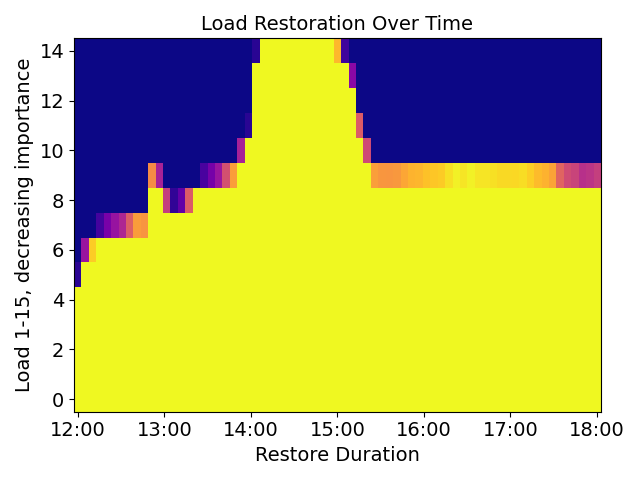}
            \end{minipage}
            \begin{minipage}{0.23\textwidth}
                \centering
                \includegraphics[width=\textwidth]{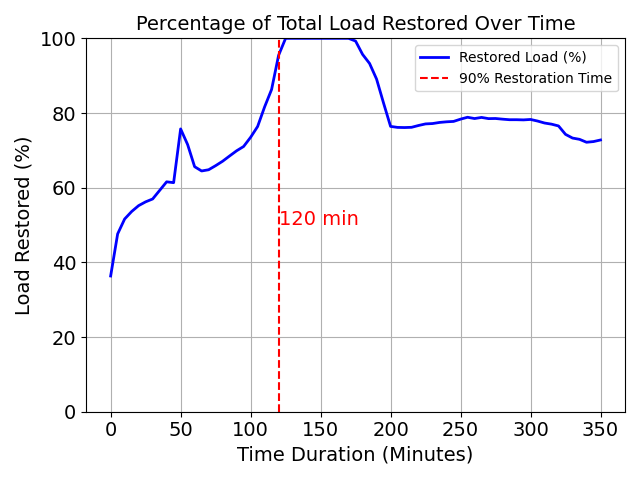}
            \end{minipage}
            \begin{minipage}{0.22\textwidth}
                \centering
                \includegraphics[width=\textwidth]{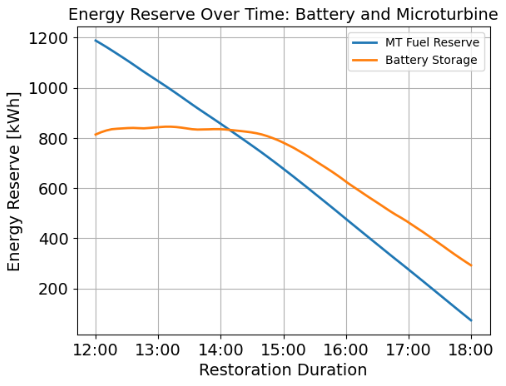}
            \end{minipage}
            \begin{minipage}{0.235\textwidth}
                \centering
                \includegraphics[width=\textwidth]{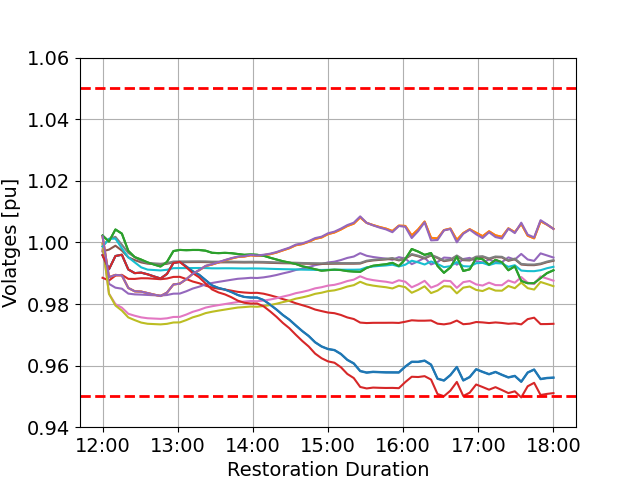}
            \end{minipage}
    }
    \end{subfigure}

    \begin{subfigure}[b]{\textwidth}
        \centering
        \subcaptionbox{MAML-RL}[0.98\textwidth]{ 
            \begin{minipage}{0.24\textwidth}
                \centering
                \includegraphics[width=\textwidth]{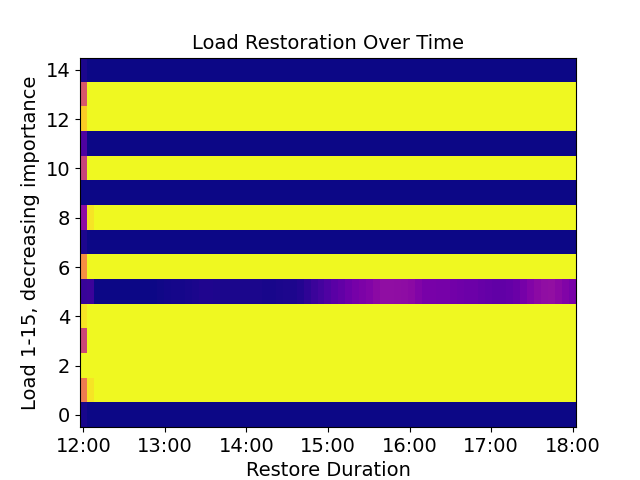}
            \end{minipage}
            \begin{minipage}{0.23\textwidth}
                \centering
                \includegraphics[width=\textwidth]{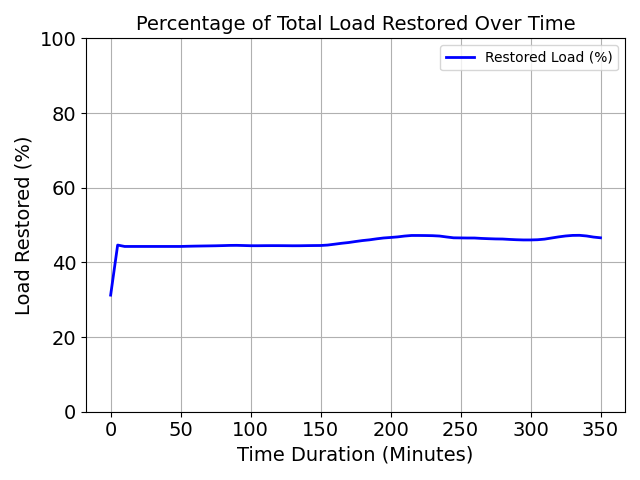}
            \end{minipage}
            \begin{minipage}{0.23\textwidth}
                \centering
                \includegraphics[width=\textwidth]{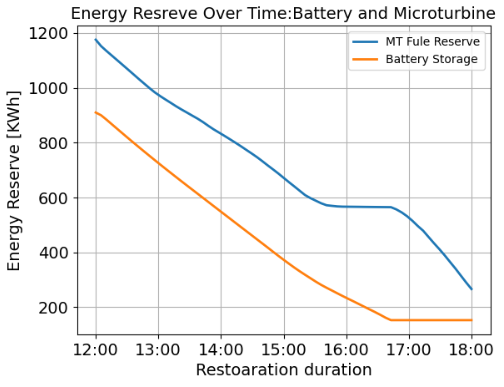}
            \end{minipage}
            \begin{minipage}{0.24\textwidth}
                \centering
                \includegraphics[width=\textwidth]{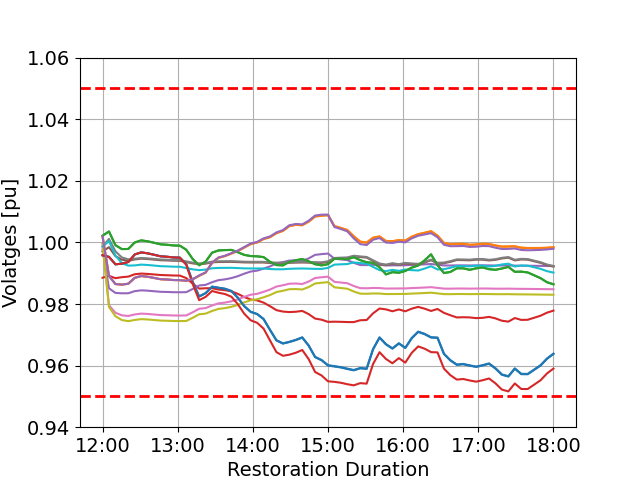}
            \end{minipage}
        }
    \end{subfigure}

    \caption{Load restoration comparison between MAML, MGF-RL, and MPC-based controllers. Brighter color means a higher percentage of a load is restored (i.e., Color yellow means 100\% restoration and dark blue means 0\%).}
    \label{singlescenario}
\end{figure*}

To further assess decision-making quality, Fig.~\ref{singlescenario} presents comparative analysis of three load‐restoration strategies: MPC, MGF‐RL, and MAML, under the same scenario. The comparison evaluates four key operational aspects across all methods: load restoration progression, percentage of load restoration progress over time, energy reserve usage, and voltage regulation performance. 
The leftmost column reveals fundamental differences in restoration strategies. MPC adopts an aggressive approach, restores a large number of high-priority loads early in the horizon when renewable energy is abundant, but later sheds several previously restored loads (visible as transitions from yellow to blue in the heatmap) due to resource exhaustion and forecast inaccuracies during periods of low wind generation. This load cycling behavior is operationally undesirable, as it subjects critical loads to repeated interruptions and potentially damages sensitive equipment. In contrast, MGF‐RL demonstrates a strategic approach, where the controller starts by restoring four critical loads with high priority, and then gradually increase restoration level as it accounts for uncertainty in renewable generation from experience that predicted information cannot be totally relied on.
Thus rather than immediately consuming all the available renewable energy,  MGF‐RL proactively stores excess generation during high-wind periods, reducing the likelihood of needing to shed loads later. 
MAML-RL exhibits the least effective strategy, restores lower-priority loads prematurely and fails to achieve full restoration, likely due to limitations of gradient-based updates under abrupt wind or load changes. 
The second column quantifies cumulative restoration performance over the six-hour horizon. MGF-RL achieves the critical 90\% restoration milestone in approximately 275 minutes, while MPC exhibits instability ultimately failing to reach 90\% restoration with the time horizon and MAML-RL remain below 50\%, reflects MGF-RL ability to balance restoration with resource availability. Resource trajectories in column 3 illustrate that MGF-RL manages reserves more gradually: battery SOC decreases gradually from 1200 kWh to approximately 600 kWh, while fuel reserves decline from 1200 kWh to roughly 500 kWh, maintaining adequate reserves throughout the horizon for contingency response. The smooth, controlled depletion curves indicate coordinated dispatch that avoids premature resource exhaustion. MPC demonstrates more aggressive resource consumption early in the horizon, with steeper initial declines in both battery SOC and fuel reserves. This aggressive depletion leaves MPC vulnerable during later periods of low renewable generation, forcing load shedding to maintain system balance. MAML-RL's resource trajectories show erratic behavior with less strategic coordination between storage and dispatchable generation, contributing to its poor overall restoration performance. In column 4 Voltages profiles are more stable under MGF-RL, while MPC and MAML-RL show larger deviations due to aggressive or uncoordinated dispatch.

To get more comprehensive comparison beyond single-scenario analysis, Table~\ref{tablereliability} summarizes system-level performance based on three key reliability metrics, averaged over 15 test scenarios encompassing a range of load demands, renewable generation patterns, and forecast uncertainties. MGF-RL achieves the lowest SAIDI, representing a 27\%,  41\% and 33\% improvement over MPC, warm-start RL, and MAML-RL respectively. It is the only method to consistently restore 90\% of the load within the time horizon. These results indicate that MGF-RL not only learns effective dispatch policies but also maintains system reliability.

\begin{table}

\caption{Average values of reliability indices for load restoration strategies over 15 scenarios.}
\begin{adjustbox}{width=\columnwidth}
\begin{tabular}{c c c c c c  } 
 \hline
 \textbf{Metric} & \multicolumn{1}{c}{\textbf{{Ours}}} & \textbf{Warm-start RL} & \textbf{MAML} & \textbf{MPC}\\ 

 \hline
 SAIDI & 135.3 min   & 230.0 min & 203.5 min & 184.7 min \\
90\% Restore & 305 min  & not restore & not restore & not restore\\
\% Restored & 96\%  & 68\% & 44\% & 82\%\\

 \hline
\end{tabular}
\end{adjustbox}
\label{tablereliability}
\end{table}

\subsection{Impact of Forecast Error on Load Restoration Reward}\label{secforecasterror}

This section investigates how renewable forecast inaccuracies affect the performance of load restoration controllers and evaluates how different forecast horizons and learning strategies respond to varying levels of uncertainty. In practice, forecast accuracy varies depending on geography, resource type, and operator methodology, making it essential to assess how restoration policies generalize under imperfect information. To simulate this, we use 30 days of historical renewable generation data to create training scenarios ($\Re^{tr}$), and reserve the following 7 days for testing ($\Re^{ts}$). We consider six levels of forecast error: $\Xi_{T} = \left\lbrace0\%, 5\%, 10\%, 15\%, 20\%, 25\%\right\rbrace$, and train separate agents under each. We also vary the forecast lookahead window, using $\kappa \in \left\lbrace1, 2, 4, 6\right\rbrace$ hours to study how the depth of prediction influences learning and control quality.


\subsubsection{Learning Behavior under Forecast Error levels}
Fig.~\ref{9a} shows learning curves for different forecast error levels $\Xi_{T}$. As expected, lower error levels lead to faster convergence and higher restoration rewards. As forecast error increases, learning slows and reward variance increases (e.g., $\Xi_{T}=0.20$), indicating greater difficulty in managing uncertainty. The inset highlights differences in convergence behavior, emphasizing how imperfect forecasts can hinder learning dynamics. This experiment quantifies the sensitivity of RL controllers with respect to forecast reliability and shows that even moderate errors impact restoration efficiency.

\subsubsection{Effect of Forecast Horizon ($\kappa$) Under Forecast Error}
Fig.~\ref{9b} compares $\kappa$-lookahead strategies under different forecast error levels. Longer lookaheads ($\kappa = 4, 6$) generally improve performance when forecast error is moderate or low, as they help anticipate future conditions. However, under high error ($\Xi_T \geq 20\%$), their advantage diminishes, as long-range forecasts become increasingly unreliable. Controllers using shorter horizons perform more consistently across all error levels, suggesting that conservative forecasting can help maintain robust control when prediction accuracy is limited. This highlights a key tradeoff between anticipation and reliability.

\begin{table}
\centering
\caption{ES-RL with forecast head 2 achieves the highest reward at $\Xi= 10\%$ and degrades more gracefully than PPO, DDPG, and MPC as $\Xi_{T}$ increases.
}
\begin{adjustbox}{width=\columnwidth}
\begin{tabular}{c c c c c c c c} 
 \hline
 \textbf{$\Xi$} & \multicolumn{4}{c}{\textbf{ES-RL (Forecast Head )}} & \textbf{PPO} & \textbf{DDPG} & \textbf{MPC}\\ 
 \hline
 & \textbf{1} & \textbf{2} & \textbf{4} & \textbf{6} & \textbf{2}&\textbf{2} & \\ [0.5ex]
 \hline
 0\% & 18.18 & 18.95 & 18.53 & 18.94 & -7.38 & 10.25  & \bf{19.98}\\
 \hline
 5\% & 18.65 & 18.59 & 18.54 & 18.56 & -5.52 & 11.31 & 18.63 \\
 \hline
 10\% & 16.63 & {18.29} & 18.19 & 15.81 & -37.47 & 09.51 & 17.28 \\
 \hline
 15\% & 18.63 & 17.58 & 16.65 & 15.79 & -38.74 & 08.18 & 15.10 \\
 \hline
 20\% & 18.48 & 17.96 & 16.68 & 15.19 & -12.05 & 06.08 &14.57 \\
 \hline
 25\% & 16.70 & 16.64 & 16.27 & 14.07 & -17.68 & 06.29 & 12.86\\ [0.5ex]
 \hline
\end{tabular}
\end{adjustbox}
\label{table3}
\end{table}
\subsubsection{Performance Comparison Across Controllers Under varying $\Xi_{T}$} Table~\ref{table3} provides a comparative evaluation of RL controllers (ES-RL, PPO~\cite{PPO}, DDPG~\cite{ddpg}) and MPC \cite{RC-MPC} across different $\Xi_T$ for load restoration rewards $\sum_{t\in\mathcal{T}}\left(\boldsymbol{\varsigma}^{\top} \textbf{p}_t - {\mu}\boldsymbol{\varsigma}^{\top}[\textbf{p}_{t-1}-\textbf{p}_t]^{+}+\mathcal{V}_t\right)$. As expected, all controllers perform better at lower error levels. Notably, MPC provides highest reward 19.98 at $\Xi_T = 0\%$, reflecting its reliance on deterministic forecasts. ES-RL approaches this upper bound with 2 and 6-hours forecast head reaching 18.95 and 18.94. However, MPC performance degrades faster than ES-RL as forecast error grows. For instance, at $\Xi_T = 25\%$, MPC reward drop to 12.86, while ES-RL with 2-hours forecast head still maintains 16.64. This highlights the greater resilience of model-free methods trained under uncertainty. Unlike MPC, which assumes accurate forecasts during planning, ES-RL learns to hedge against inaccuracy and adopt robust, cautious strategies over time. Among the ES-RL variants, short-horizon controllers are more robust under high uncertainty, as 
forecast head 6 performs well under low error but drops off more steeply with higher uncertainty. 
The gradient-based method PPO struggles across all settings, yielding negative rewards and failing to adapt to uncertainty. DDPG performs better than PPO but performs less well relative to ES-RL, especially as $\Xi_T$ increases. These results validate that model-free methods like ES-RL, when paired with appropriately chosen forecast horizons, can outperform both model-based and gradient-based controllers under realistic forecast uncertainty.

\begin{figure}[t]
    \centering
    
    \begin{subfigure}[b]{0.2\textwidth}
        \centering
        \includegraphics[width=\textwidth]{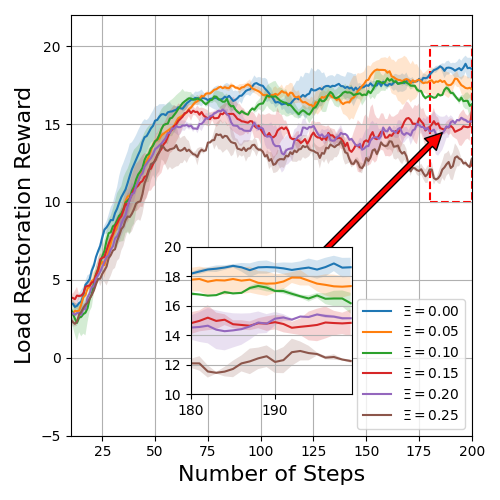}
        \caption{ }\label{9a}
    \end{subfigure}
    \hfill
    \begin{subfigure}[b]{0.25\textwidth}
        \centering
        \includegraphics[width=\textwidth]{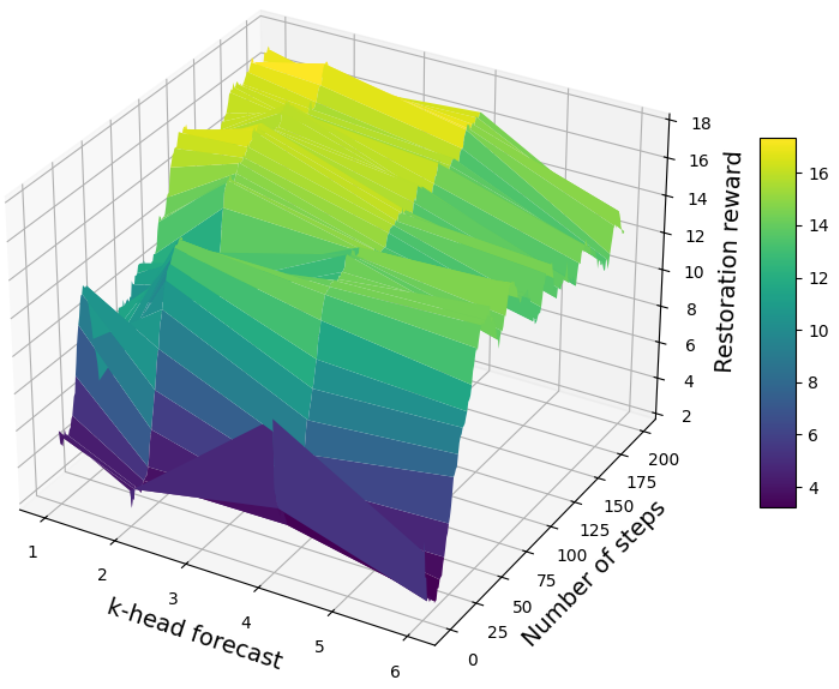}\caption{} \label{9b}
    \end{subfigure}
     \caption{(a) Learning curves under varying $\Xi_{T}$. (b) Restoration rewards for $\kappa-$lookahead RL controllers across forecast error levels. Longer lookaheads improve performance under low-to-moderate errors but are less effective as uncertainty increases.}
\end{figure}
\begin{table}[ht]
\centering
\caption{Comparison of control complexity and training time across systems}
\label{tab:scalability}
\begin{tabular}{lccccc}
\toprule
\textbf{System} & \textbf{\# Loads} & \textbf{\# DERs} & $\boldsymbol{|\mathcal{S}|}$ & $\boldsymbol{|\mathcal{A}|}$& \textbf{(min/task)} \\
\midrule
IEEE-13    & 15  & 4 & 114 & 20 &  $\approx$25 \\
IEEE-123   & 20  & 6 & 121 & 29 & $\approx$30 \\
\bottomrule
\end{tabular}
\end{table}

\subsection{Scalability to Large Test System}
We evaluate the scalability of MGF-RL on a modified IEEE-123 bus system to assess whether the method remains effective under increased control complexity. Unlike optimization-based methods, where computational cost increases with the number of variables (e.g., buses, lines), RL complexity depends primarily on the dimensionality of the state and action spaces, which are determined by the number of controllable DERs and critical loads. To increase the complexity, we increase the number of critical load $|\mathcal{L}|=20$ and DERs $|\mathcal{G}|=6$ in IEEE-123 bus system, resulting in an action space of 29 dimensions and a state space dimensions depends on renewable forecast, e.g. based on a 4-hour lookahead horizon the state dimension is 121. Table \ref{tab:scalability} compares the scale of two cases and shows the required time for convergence. We train the policy using the same number of iterations and policy architecture as for the 13-bus system, according to the results, scaling from the 13-bus system to the 123-bus system does not increase the computational resources significantly. 
Fig.~\ref{123bussystem} shows the DER outputs and restoration progression for a representative test scenario, confirming that MGF-RL generalizes well to larger systems with more granular control demands.

\begin{figure}
    \centering
    \begin{subfigure}[b]{0.23\textwidth}
        \centering
        \includegraphics[width=\textwidth]{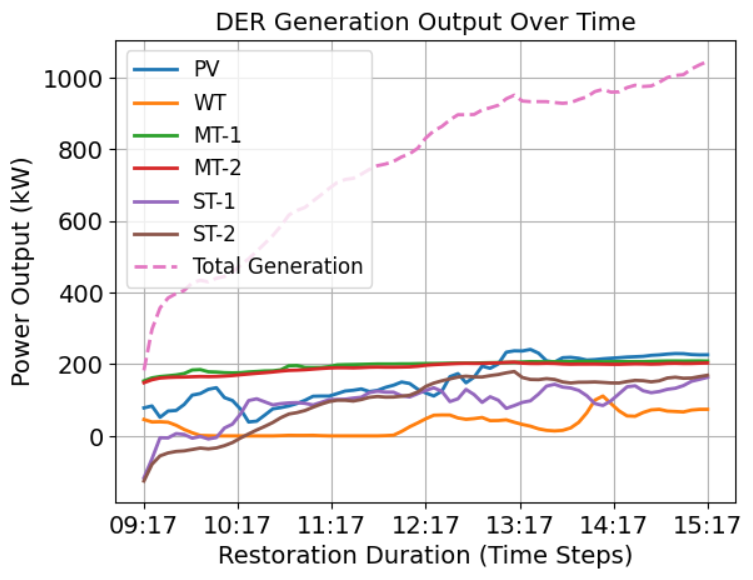}
    \end{subfigure}
    \hfill
    \begin{subfigure}[b]{0.23\textwidth}
        \centering
        \includegraphics[width=\textwidth]{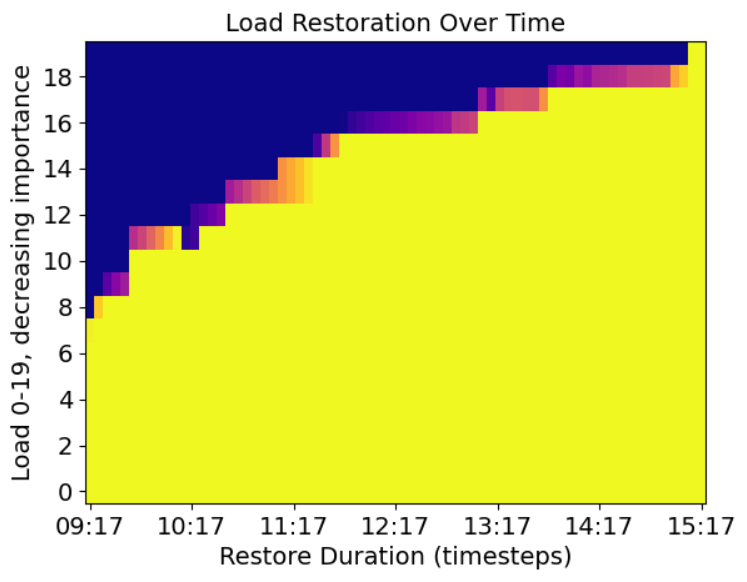}
    \end{subfigure}
    \caption{DERs generation profiles and load restoration process in one testing
 scenario in the IEEE 123-bus system}
    \label{123bussystem}
\end{figure}

\section{Conclusion}\label{conclusion}

We present a meta-learning framework for CLR that addresses key scalability and adaptability challenges in distribution system control under renewable uncertainty. The proposed MGF-RL algorithm integrates first-order meta-updates with gradient-free evolutionary strategies, enabling model-free policy learning in non-differentiable simulation environments. By leveraging shared structure across tasks, MGF-RL generalizes to unseen outage scenarios without requiring second-order derivatives or retraining for each instance. Empirical evaluations on modified IEEE-13 and IEEE-123 systems demonstrate consistent gains in convergence, restoration speed, and robustness to forecast errors compared to baseline RL and MPC approaches. Theoretical regret analysis further quantifies how task similarity and temporal variability impact learning performance in both static and dynamic settings, supporting the observed empirical trends. We benchmark MGF-RL against both model-based (MPC) and model-free (PPO, DDPG, MAML, AC-RL) baselines to assess its adaptability and generalization. While model-based strategies such as adaptive optimal power control (OPC) and robust distribution OPF (e.g., chance-constrained formulations) offer principled approaches under uncertainty, they often rely on full observability, linear approximations, and repeated reoptimization. In contrast, MGF-RL provides a scalable alternative by learning policies directly from data. This study focuses on post-reconfiguration DER scheduling and assumes fixed topology and constant load demand during outages. Future work will extend the framework to support dynamic load profiles, adaptive network topologies, and hybrid formulations that combine meta-RL with robust optimization methods for long-horizon, physics-informed planning and control.

\appendix
\section{Nomenclature}
\noindent
\begin{tabular}{ll}
    \textbf{Notation} & \textbf{Definition} \\
$\mathcal{D}^{f}$,$\mathcal{D}^{s}$,$\mathcal{R}$&Dispatchable fuel and battery storage resources\\& and Renewable-based energy.\\
{$S^{\theta}_{t}$,$\zeta_{t}$,$E^f$}&State of charge, battery storage efficiency and \\&maximum fuel reserve.\\
$\boldsymbol{\varsigma}$&{Load priority vector.}\\
{$\textbf{p}_t$,$\textbf{q}_{t}$}&{Active and reactive power.}\\
{$N$,$N_b$}&{Number of loads and number of buses.}\\
{$\hat{p}^{r}_{t}$, ${\Xi}$}&Renewable generation forecast, forecast error.\\
 ${\kappa}$&Forecast look-ahead. \\
{$\mathcal{T}$}&{Control interval length.}\\
{$m\in[M]$}&{Set of all tasks.}\\
{$\pi_{M}(\cdot)$, $\phi_{M}$}&{Meta policy with meta policy parameters.}\\
{$\hat{\pi}_{m,t}$}&Sub-optimal policy received by with-in task \\&algorithm for task $m$.\\
{$\alpha_t$}&{Task specific learning rate for ES-RL.}\\
{$\eta_m$}&{Meta learning rate for meta-update at task $m$.}\\
{$T$}&{Number of iterations to update the policy.}\\
{$\boldsymbol{\varepsilon}$}&{Gaussian random noise.}\\
{$D^*,~\hat{D}^*$}&{Actual and empirical task-similarity.}\\
{$S^*, \hat{S}^*$}&{Actual and empirical task-relatedness.}\\
{$\phi^{*}_{m}$}&{Optimal meta initialization.}\\
{$P_{M},S_{M}$}&Path-length and squared path-length of optimal  \\&policy initialization $\pi^{*}_{m,0}$.\\
{$L_{F}$}&{Lipschitz constant of $F_{m}$.}\\{$L_g$} &Bound for $\nabla_{\phi}D_{KL}(\phi^{*}|\phi)$.\\
{$V_{M}(\cdot)$}&{Temporal variability of loss function $l_{m}(\cdot)$.}\\
{$\boldsymbol{\epsilon}_{M}$}&{Cumulative inexactness in KL divergence.}\\
\end{tabular}
\section{Automated Curriculum-based RL}\label{ACL}
Motivated by the work on curriculum-based RL for critical load restoration problem \cite{zhang2022curriculum}, a two stage automated curriculum learning approach is devised. The training stage of this approach focuses on learning a meta-policy, $\pi_{M}$, tailored to a simplified problem scope with a reduced action space as defined in \cite{zhang2022curriculum}:
$$a_{t}^{1}:=\left[(p^{\theta}_{t})^{\top},(p_{t}^{\mu})^{\top}, \left(\textbf{H}_{\alpha}\boldsymbol{\alpha}_{t}^{\mathcal{G}}\right)^{\top}\right]^{\top}\in\mathcal{A}$$
\begin{algorithm}[t]
{\small

\caption{Automated Curriculum Learning}
\label{alg:curriculum}

\DontPrintSemicolon  
\SetAlgoNoEnd
\SetAlgoNoLine
\SetKwFunction{FMain}{Train}
\SetKwFunction{FTesting}{Testing}
\SetKwProg{Fn}{Function}{:}{}

\Fn{\FMain{}}{
Initialize random policy $\pi_{1,0}$ with parameters $\phi_{1,0}$\;

\For{each task $(m = 1, \dots, M)$}{
  Load initial policy with parameter $\phi_{m,0}$ for task $m$ \;
  \For{iteration $(j = 1, \dots, T)$}{
    Train policy using ES-RL to solve simplified CLR problem with control agent the determine only DERs dispatch instead of learning both load pickup and DERs set point, i.e.,
    $a_{t}^{train}:=\left[\left(\textbf{p}_{t}\right)^{\top}, \left(\boldsymbol{\alpha}_{t}^{\mathcal{G}}\right)^{\top}\right]$
    
    Save the best model $\hat{\pi}_{m,j}$ and model parameter $\hat{\phi}_{m,j}$ \;
  }
  \textbf{meta-update:} $\phi_{m+1,0} \leftarrow \phi_{m,0} + \frac{\epsilon}{M} \sum_{m=1}^{M} (\hat{\phi}_{m,j} - \phi_{m,0})$ \;
}

Save meta-policy $\pi_M = \hat{\pi}_{M,j}$ \;
}

\Fn{\FTesting{}}{
Load meta-policy $\pi_M$ \;
Generate behavior data and control trajectories using the meta-policy under training conditions i,e $\left\lbrace\left(s_{t}^{train}, a_{t}^{train},\dots\right)\right\rbrace$\;
Convert the generated actions $a_{t}^{train}$ to the testing-stage format $a_{t}$ by aligning them with the full action space (generation set points and load pickup decisions) using a greedy load restoration strategy\;
Train a neural network to map $s^{train}_t$ to $a_t$ using supervised learning, resulting in the initialization for test Stage policy parameters $\phi^{test}_0$\;
Fine-tune meta-policy at test time on the new unseen task using PPO to receive the optimal policy $\hat{\pi}_M$. \;
}
}
\end{algorithm}
In this context, the RL controller focuses solely on the dispatch of DERs, while load restoration is managed through a rule-based greedy approach that prioritizes higher demand loads given the available generation. Notably, during each control step $t$, the RL agent is provided with exact forecasts, i.e., the actual renewable generation data for the look-ahead period, rather than relying on $\hat{p}^{r}{t}$. This adaptation leads to a modified state representation:
$$s_{t}^{1}:= [(p^{r}_{t})^{\top},(\tilde{p}_{t-1})^{\top},(S^{\theta}_{t})^{\top}, (E^{\mu}_{t})^{\top},t]^{\top}.$$
This allows the RL agent to focus on mastering the grid control problem with reduced uncertainty. Within
each task during training we use the ES-RL, and the meta-parameters are updated through a first-order approximation of the meta-gradient, based on the cumulative performance across all tasks. This meta-update step is crucial as it accumulates knowledge across tasks and fosters the learning of a generalized meta-policy $\hat{\pi}_{m,j}$, which is then used to initialize the policies for the test tasks. A key challenge remains in the transfer of knowledge from the training to the testing stage due to the structural differences between the policy networks of the two stages ($\mathcal{A}^{1} \neq \mathcal{A}$), making direct weight copying impractical. To address this, the approach proposed in \cite{zhang2022curriculum} utilizes a behavior cloning technique to train a policy network that is compatible with the test stage format (see Algorithm~\ref{alg:curriculum}).

\section{Provable Guarantees for MGF-RL within-Online Framework}\label{appendixtheoretical}

In this section, we present a theoretical analysis of the proposed MGF-RL algorithm. We first provide regret bounds for the ES-RL algorithm when applied to a single task. We then extend the analysis to the multi-task setting, introducing \textit{task-averaged regret}, task similarity metrics, and their impact on convergence guarantees. Finally, we derive \textit{dynamic regret bounds} to quantify performance in non-stationary environments.
 \subsection{Preliminaries}
Before presenting the proofs of our main results, in this section, we establish necessary definitions and notational conventions. Let $f:\mathbb{R}^{d} \to \mathbb{R}$ be a function, the sub-gradient of $f$ at a point $x$ is denoted by $\partial f(x)$. We say that $f$ is $\mu$-strongly convex over a convex set $V \subseteq \text{int dom}(f)$ with respect to a norm $ \| \cdot \|$ if, for any $ x, y \in V $ and $ g \in \partial f(x) $, it holds that $f(y) \geq f(x) + \langle g, y-x \rangle + \frac{\mu}{2} \|x-y\|^2.$
Moreover, define $ \psi: \mathcal{X} \to \mathbb{R} $ as a strictly convex and continuously differentiable function on $ \text{int}~\mathcal{X} $. The Bregman Divergence associated with $ \psi $ is given by $ B_{\psi}(x, y) = \psi(x) - \psi(y) - \langle \nabla\psi(y), x - y \rangle $, assuming $\psi$ is strongly convex with respect to the norm $ \| \cdot \| $ on $ \text{int} \mathcal{X} $.
\begin{assumption}\label{assumption1}
    The meta initialization policy $\pi_{m,0}$ for any task $m$ lies inside a shrinkage simplex set i.e. $\pi_{m,0}(\cdot|s)\in \Delta \mathcal{A}_{\rho}:=\left\lbrace a_{1}e_{1}+\dots a_{n}e_{n} |\sum_{i=1}^{n}a_{i}=1,\quad a_{i}\geq \rho \right\rbrace$ for all $s\in \mathcal{S}$.
\end{assumption}
 
\subsection{Task-Averaged Regret and Task Similarity}We can observe from \eqref{esconv}, that the regret bound is depending on the parameters of the policy initialization $\left\lbrace\pi_{m,0}\right\rbrace _{m=1}^{M}$. Beyond the single task, we consider the lifelong extension of MDPs in which we have sequence of online learning problems $m=1,2,\dots,M$. In each single task $m$, the agent must sequentially optimize the policy $\left\lbrace\pi_{m,t}\right\rbrace _{t=1}^{T}$, so that the corresponding sub-optimality decay sub-linearly in $T$, as give in \eqref{esconv}. The proposed MGF-RL algorithm aims to sequentially update the initial policy $\pi_{m,0}$ with parameters $\phi_{m,0}$, with aim to minimize the task average optimality gap (TAOG) after $M$ tasks and defined as:
\begin{equation}
\begin{split}
    \bar{R}=\frac{1}{M}\sum_{m=1}^{M}&\mathbb{E}\left[F_m(\hat{\phi}_{m,T})\right] -F_m(\phi_{m}^{*})\\&\leq \frac{2(N+4)L_{F} \sum_{m=1}^{M}D_{KL}\left(\phi^{*}_{m}|\phi_{m,0}\right)}{M\sqrt{T}}
\end{split}
\end{equation}
The definition of the TAOG consider the generalization ability of $\hat{\phi}_{m,T}$ to unseen test task. In meta-RL, the extent to which TAOG improves is influenced by the similarity among the sequential MDP tasks. We now discuss the notion of similarity in an environment.
For any fixed  initial policies parameters $\left\lbrace \phi\right\rbrace$, and given the optimal policies parameters $\left\lbrace\phi^{*}_{m}\right\rbrace_{m=1}^{M}$ for every $m$, the \textbf{task similarity} in a static environment can be measured by $D_{*}=\min_{\phi\in \Delta(\mathcal{A})^{|\mathcal{S}|}}\frac{1}{M}\sum\limits_{m=1}^{M}D_{KL}\left(\phi^{*}_{m}|\phi\right)$. This notion of task similarity is natural, as it implies that there exists a meta initialization $\phi$ with respect to which optimal policies for individual tasks are all close together. In practical scenario where only suboptimal policies are accessible, we denote the \textbf{empirical task similarity} defined in \cite{khattar2022cmdp} as $\hat{D}_{*}=\min_{\phi\in \Delta(\mathcal{A})^{|\mathcal{S}|}}\frac{1}{M}\sum\limits_{m=1}^{M}D_{KL}\left(\hat{\phi}_{m}|\phi\right)$, which depends on the suboptimal policies returned by a within-task algorithm. To understand the impact of task-similarity on the upper bounds of the TAOG, we present the following theorem shows the TAOG for the proposed MDP-within-online framework under the ideal setting where optimal policy parameters $\left\lbrace\phi_{m}^{*}\right\rbrace_{m=1}^{M}$ are available for each each task $m$ and the task similarity $D_{*}$ is known.
\begin{theorem}\label{TAOGwithoptimal}
    Suppose $D_{KL}\left(\phi^{*}_{m}|\cdot\right)$ is $\mu_{\phi}$-strongly convex and we are given with optimal policy parameters $\left\lbrace\phi_{m}^{*}\right\rbrace_{m=1}^{M}$ for each task and the task similarity $D_{*}$ is known. For each task $m$, we run the ES-RL algorithm for $T$ and meta policy is obtained by FTRL or OMD on the function $D_{KL}\left(\phi^{*}_{m}|\cdot\right)$, the TAOG is bounded as
    \begin{equation}\label{eq13}
        \begin{split}
\sum_{m=1}^{M}&\frac{\mathbb{E}\left[F_m(\hat{\phi}_{m,T})\right] -F_m(\phi_{m}^{*})}{M}\\&\leq 2L_{F}(N+4)\left(\frac{L_{g}^{2}\log(M+1)}{\mu_{\phi}M\sqrt{T}}+\frac{D_{*}}{\sqrt{T}}\right)
        \end{split}
    \end{equation}
    where $L_{g}$ is the upper bound on $\nabla_{\phi}D_{KL}\left(\phi^{*}|\phi\right)$.
\end{theorem}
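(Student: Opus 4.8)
The plan is to combine the single-task convergence guarantee of Theorem~\ref{esrlconvergence} with a standard online-learning regret bound applied to the meta-level sequence $\{\phi_{m,0}\}_{m=1}^{M}$. First I would invoke Theorem~\ref{esrlconvergence} for each task $m$, which after $T$ within-task ES-RL iterations yields
\[
\mathbb{E}\!\left[F_m(\hat{\phi}_{m,T})\right]-F_m(\phi_m^{*})\;\le\;\frac{2(N+4)L_F\,D_{KL}\!\left(\phi_m^{*}\,\middle|\,\phi_{m,0}\right)}{\sqrt{T}}.
\]
Averaging over $m=1,\dots,M$ and pulling the common constant out, the TAOG is controlled by $\frac{2(N+4)L_F}{M\sqrt{T}}\sum_{m=1}^{M}D_{KL}(\phi_m^{*}\,|\,\phi_{m,0})$. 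So the whole problem reduces to bounding the cumulative meta-loss $\sum_{m=1}^{M}\ell_m(\phi_{m,0})$ where $\ell_m(\phi):=D_{KL}(\phi_m^{*}\,|\,\phi)$, relative to the best fixed comparator $\phi^{*}$ achieving $\frac{1}{M}\sum_m \ell_m(\phi^{*})=D_{*}$.

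Next I would exploit the hypotheses that each $\ell_m=D_{KL}(\phi_m^{*}\,|\,\cdot)$ is $\mu_\phi$-strongly convex with gradient norm bounded by $L_g$, and that $\{\phi_{m,0}\}$ is produced by FTRL (or OMD) on these losses. For FTRL on $\mu_\phi$-strongly-convex losses with bounded subgradients, the classical regret bound gives $\sum_{m=1}^{M}\ell_m(\phi_{m,0})-\sum_{m=1}^{M}\ell_m(\phi^{*})\le \frac{L_g^{2}}{2\mu_\phi}\bigl(1+\log M\bigr)$ (the $O((L_g^2/\mu_\phi)\log M)$ rate for strongly convex online learning). Rearranging, $\sum_{m=1}^{M}\ell_m(\phi_{m,0})\le M D_{*}+\frac{L_g^{2}\log(M+1)}{\mu_\phi}$ (absorbing the constant and the $+1$ into the log). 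Substituting this into the averaged bound from the first paragraph yields
\[
\bar R \;\le\; \frac{2(N+4)L_F}{M\sqrt{T}}\!\left(M D_{*}+\frac{L_g^{2}\log(M+1)}{\mu_\phi}\right)\;=\;2L_F(N+4)\!\left(\frac{L_g^{2}\log(M+1)}{\mu_\phi M\sqrt{T}}+\frac{D_{*}}{\sqrt{T}}\right),
\]
which is exactly \eqref{eq13}.

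The two steps I expect to need the most care are: (i) verifying that the within-task bound of Theorem~\ref{esrlconvergence} can legitimately be applied with $\phi_{m,0}$ being the \emph{online-learned} iterate rather than an arbitrary fixed initialization---this requires that the meta-update keeps each $\phi_{m,0}$ inside the shrinkage simplex $\Delta\mathcal{A}_\rho$ of Assumption~\ref{assumption1} so that the KL divergence and its gradient remain bounded (this is what licenses the $L_g$ bound and the strong-convexity constant $\mu_\phi$); and (ii) pinning down the exact FTRL regret constant, since the statement writes $\log(M+1)$ rather than the looser $1+\log M$, so I would either use the $\sum_{m=1}^{M}\frac{1}{m}\le 1+\log M$ telescoping argument or the integral comparison $\sum_{m=1}^{M}\frac{1}{m}\le \log(M+1)+\text{const}$ and fold constants into the $O(\cdot)$. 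The genuinely delicate point---and the main obstacle---is ensuring the strong-convexity of $D_{KL}(\phi_m^{*}\,|\,\cdot)$ holds uniformly in $m$ with a single constant $\mu_\phi$ on the restricted domain; this is where Assumption~\ref{assumption1} is essential, because KL divergence loses strong convexity as probabilities approach the boundary of the simplex, and without the $\rho$-shrinkage one cannot get a task-independent $\mu_\phi$.
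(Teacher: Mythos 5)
Your proposal is correct and follows essentially the same route as the paper's own proof: decompose the cumulative KL term into the FTRL/OMD regret against the fixed comparator $\phi^{*}$ (bounded by the standard $O\bigl(\tfrac{L_g^2}{\mu_\phi}\log(M+1)\bigr)$ rate for strongly convex online losses) plus the comparator's average loss $D_{*}$, then substitute into the within-task bound of Theorem~\ref{esrlconvergence}. Your added remarks on Assumption~\ref{assumption1} being what licenses a uniform $\mu_\phi$ and $L_g$ are accurate and, if anything, make explicit a point the paper leaves implicit.
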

\begin{proof}
    By the within-task gurantee for MDPs, we know that the task average optimality gap is well defined and bounded by  
 \begin{equation}
 \begin{split}
     \bar{R}&\leq \frac{2(N+4)L_{F} \sum_{m=1}^{M}D_{KL}\left(\phi^{*}_{m}|\phi_{m,0}\right)}{M\sqrt{T}}\\& \leq \frac{2(N+4)L_{F} }{\sqrt{T}}\sum_{m=1}^{M}\frac{D_{KL}\left(\phi^{*}_{m}|\phi_{m,0}\right)-D_{KL}\left(\phi^{*}_{m}|\phi^{*}\right)}{M}\\&+\frac{2(N+4)L_{F} }{\sqrt{T}}\sum_{m=1}^{M}\frac{D_{KL}\left(\phi^{*}_{m}|\phi^{*}\right)}{M}
 \end{split}
\end{equation}  
We split the total loss into the loss of meta-update algorithm and the loss if we always start initialized with $\phi^{*}$. Since each $D_{KL}(\phi^{*}_{m}|\cdot)$ is $\mu_{\phi}$-strongly convex because of  assumption \eqref{assumption1} and each $\phi_{m,0}$ is determined by playing OMD or FTRL, we have 
\begin{equation}\label{eq20}
\begin{split}
    \frac{2(N+4)L_{F} }{\sqrt{T}}\sum_{m=1}^{M}&\frac{D_{KL}\left(\phi^{*}_{m}|\phi_{m,0}\right)-D_{KL}\left(\phi^{*}_{m}|\phi^{*}\right)}{M} \\&\leq \frac{2(N+4)L_{F} L_{g}^{2}\log(M+1)}{M\mu_{\phi}\sqrt{T}}
\end{split}
\end{equation}
where, $L_{g}$ is the upper bound of $\nabla_{\phi}D_{KL}\left(\phi^{*}_{m,0}|\phi\right)$. since $\phi^{*}=\arg\min_{\phi}\sum_{m=1}^{M}D_{KL}\left(\phi^{*}_{m}|\phi\right),$ thus by definition of $D_{*}$, we have $D_{KL}(\phi^{*}_{m}|\phi^{*})\leq D_{*}$, so we have,
\begin{equation}\label{eq21}
    \frac{2(N+4)L_{F} }{\sqrt{T}}\sum_{m=1}^{M}\frac{D_{KL}\left(\phi^{*}_{m}|\phi^{*}\right)}{M}\leq \frac{2(N+4)L_{F} D_{*}}{\sqrt{T}}
\end{equation}
Now from the bounds in \eqref{eq20} and \eqref{eq21}, we can obtain the TAOG as:
\begin{equation}
    \bar{R}\leq 2(N+4)L_{F}\left(\frac{L_{g}^{2}\log(M+1)}{\mu_{\phi}M\sqrt{T}} +\frac{D_{*}}{\sqrt{T}}\right)
\end{equation}
\end{proof}

\begin{remark}
Theorem \ref{TAOGwithoptimal} highlights the key advantage of incorporating multiple tasks, as the regret decay at a rate of $\mathcal{O}\left(\frac{\log (M+1)}{M}\right)$, benefiting from task similarity and improving upon single-task guarantees. However, this result relies on the assumption that the optimal policy~$\pi^{*}$ with parameters~$\phi^{*}$ is revealed after each task. In practical scenarios where this assumption does not hold, thus the plug-in estimator $\left\lbrace D_{KL}\left(\hat{\phi}_{m}|\cdot\right)\right\rbrace_{m=1}^{M},$ constructed using learned policy parameters $\hat{\phi}_{m}$ may be biased. This bias poses challenges for standard analyses of FTRL and OMD in the bandit setting. 
\end{remark} 
\subsection{Provable Guarantees for Practical MGF-RL}
In the previous section we present the analysis with access to best actions in hindsight $\phi^{*}_{m}$ for each task that can learn a good meta initialization or meta regularization. While $\phi^{*}_{m}$ is efficiently computable in some cases, often it is more practical to use approximation. one of the key steps to generalize the online-within-online methodology to relax the assumption of access the exact upper bounds of within-task performance by designing analysis to estimate and update their inexact versions.

Once a task $m$ is complete, the meta learner only has access to sub-optimal policy $\hat{\pi}$. Thereby we estimate $D_{KL}\left({\phi^{*}_{m}|\phi}\right)$ with $D_{KL}\left({\hat{\phi}_{m}|\phi}\right)$ by plugging in $\hat{\phi}$ from the with-in task MDP. The KL divergence estimation error bound is given as [Lemma 17; \cite{khattar2022cmdp}]: 
\begin{equation}\label{KLBound}
    D_{KL}\left({\phi^{*}_{m}|\phi}\right)-D_{KL}\left({\hat{\phi}_{m}|\phi}\right) \leq \mathcal{O}\left(h\left(\frac{1}{\sqrt{T}}\right)+\frac{1}{\sqrt{T}}\right)=\epsilon_{m},
\end{equation}
where $h$ is strictly increasing continuous function with property that $h(0)=0$. We define the cumulative inexactness $\boldsymbol{\epsilon}_{M}:=\sum_{m=1}^{M}\epsilon_{m}$. This quantity quantity decays with $T$ at the rate of $\mathcal{O}\left(h\left(\frac{1}{\sqrt{T}}\right)+\frac{1}{\sqrt{T}}\right)$ up to some approximation. 
\subsubsection{Static Regret Bounds}
With the above uniform bound on estimation error \eqref{KLBound}, our next step is to develop static regret bounds for the 
inexact implicit online mirror descent, which are used to furnish the upper bound on TAOG of the proposed inexact within online algorithm.
The lemma below provides a bound on the static regret and will be used to prove the bound for implicit update.

\begin{lemma}[\cite{temporal}]\label{lem4.1}
Assuming the domain of the loss function is a non-empty closed convex set and the Bregman divergence is $\gamma $-Lipschitz continuous with $ D_b = \max_{a, b \in \text{Dom}(f)} B_{\psi}(a, b)$, let $\eta_m$ be a non-increasing sequence. Employing implicit online mirror descent or Follow The Regularized Leader (FTRL) on a sequence of loss functions $ \{l_m\}_{m=1}^M $ where $l_{m}(\phi_{m,0}):= D_{KL}\left(\phi^{*}_{m}|\phi_{m,0}\right)$, the static regret against a fixed comparator $\phi^*_0 $ is bounded by:
\begin{equation}
\begin{split}
\frac{1}{M}\sum_{m=1}^{M}l_m(\phi_{m,0}) &- l_m(\phi^*_0) \leq \frac{1}{M}\sum_{m=1}^{M} \delta_m\\&+ \frac{B_{\Psi}(\phi_{0}^{*},\phi_{m,0})- B_{\Psi}(\phi_{0}^{*},\phi_{m+1,0})}{\eta_m M} 
\end{split}
\end{equation}
where \( \delta_m = l_m(\phi_{m,0}) - l_m(\phi_{m+1,0}) - \frac{B_{\psi}(\phi_{m+1,0}, \phi_{m,0})}{\eta_m} \).
\end{lemma}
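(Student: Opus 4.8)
\textbf{Proof proposal for Lemma~\ref{lem4.1} (static regret of inexact implicit online mirror descent / FTRL).}

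The plan is to follow the classical telescoping argument for online mirror descent, adapted to the \emph{implicit} (proximal) update rule, which is why the bound is stated in terms of the per-round slack $\delta_m$ rather than a clean gradient-norm term. First I would fix the comparator $\phi_0^*$ and write the implicit update as the proximal step $\phi_{m+1,0} = \arg\min_{\phi} \{ \eta_m\, l_m(\phi) + B_\Psi(\phi,\phi_{m,0}) \}$ (for FTRL, the analogous cumulative-loss minimizer). The first-order optimality condition at $\phi_{m+1,0}$ gives a subgradient $g_m' \in \partial l_m(\phi_{m+1,0})$ with $\eta_m g_m' + \nabla\Psi(\phi_{m+1,0}) - \nabla\Psi(\phi_{m,0})$ vanishing on the feasible directions. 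Combining this with the convexity inequality $l_m(\phi_{m+1,0}) - l_m(\phi_0^*) \le \langle g_m', \phi_{m+1,0} - \phi_0^* \rangle$ and the three-point identity for Bregman divergences, $\langle \nabla\Psi(\phi_{m,0}) - \nabla\Psi(\phi_{m+1,0}), \phi_{m+1,0} - \phi_0^* \rangle = B_\Psi(\phi_0^*,\phi_{m,0}) - B_\Psi(\phi_0^*,\phi_{m+1,0}) - B_\Psi(\phi_{m+1,0},\phi_{m,0})$, produces a per-round bound on $l_m(\phi_{m+1,0}) - l_m(\phi_0^*)$.

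Next I would convert the bound on $l_m(\phi_{m+1,0})$ into a bound on $l_m(\phi_{m,0})$, which is the quantity appearing in the regret: simply add and subtract, writing $l_m(\phi_{m,0}) - l_m(\phi_0^*) = \big(l_m(\phi_{m,0}) - l_m(\phi_{m+1,0})\big) + \big(l_m(\phi_{m+1,0}) - l_m(\phi_0^*)\big)$. The first bracket, together with the $-B_\Psi(\phi_{m+1,0},\phi_{m,0})/\eta_m$ term dropped from the Bregman identity, is exactly what is packaged into $\delta_m = l_m(\phi_{m,0}) - l_m(\phi_{m+1,0}) - B_\Psi(\phi_{m+1,0},\phi_{m,0})/\eta_m$. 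After this substitution the remaining terms are the telescoping pair $\big(B_\Psi(\phi_0^*,\phi_{m,0}) - B_\Psi(\phi_0^*,\phi_{m+1,0})\big)/\eta_m$. Summing over $m=1,\dots,M$, dividing by $M$, and invoking the monotonicity of $\{\eta_m\}$ together with $D_b = \max B_\Psi$ to control the residual telescoping mass (so that a non-constant step size only contributes $D_b/\eta_M$-type terms) yields precisely the stated inequality. The $\gamma$-Lipschitz continuity of the Bregman divergence is what will later let one bound $\delta_m$ in terms of $\|\phi_{m,0} - \phi_{m+1,0}\|$ and the loss variation, but for the lemma statement itself $\delta_m$ is left implicit.

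The main obstacle is handling the implicit/proximal optimality condition cleanly when the feasible set is a proper closed convex subset (here the shrinkage simplex from Assumption~\ref{assumption1}): the stationarity condition is a variational inequality, not an equality, so the three-point argument must be carried out with the normal-cone term and one must verify it has the right sign against $\phi_{m+1,0} - \phi_0^*$ (it does, because $\phi_0^*$ is feasible). A secondary subtlety is that FTRL and implicit OMD are not literally the same update, so the proof should either treat them uniformly through the shared proximal-point characterization or note that the FTRL case follows from the standard ``be-the-leader'' lemma with the same Bregman regularizer; either way the bookkeeping for a time-varying $\eta_m$ needs care to ensure the dropped divergence terms are nonnegative and the telescoping is valid. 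Since this is quoted from \cite{temporal}, I would keep the argument at the level of citing the three-point lemma and the proximal optimality condition rather than re-deriving the Bregman identities from scratch.
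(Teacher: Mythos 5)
Your proposal is correct: the per-round inequality $l_m(\phi_{m+1,0})-l_m(\phi_0^*)\le\bigl(B_\Psi(\phi_0^*,\phi_{m,0})-B_\Psi(\phi_0^*,\phi_{m+1,0})-B_\Psi(\phi_{m+1,0},\phi_{m,0})\bigr)/\eta_m$ from the proximal first-order optimality condition plus convexity, followed by adding back $l_m(\phi_{m,0})-l_m(\phi_{m+1,0})$ to package the slack into $\delta_m$, is exactly the standard derivation, and it is the same three-point argument the paper itself deploys in the proof of Theorem~\ref{dynamic regret} (the chain of inequalities around the displayed optimality condition there, with $\varphi_m^*$ in place of $\phi_0^*$); the paper gives no separate proof of Lemma~\ref{lem4.1} itself, citing it from \cite{temporal}. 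One small observation: for the bound in the exact per-round form stated (with the Bregman difference still divided by $\eta_m$ inside the sum), neither the monotonicity of $\{\eta_m\}$ nor $D_b$ is actually needed — those only enter later when the sum is collapsed via Abel summation — so that part of your argument is harmless but superfluous.
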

\begin{theorem}[Static regret for implicit online update]\label{stataicregretbound}
Let $l_{m}(\phi_{m,0}):= D_{KL}\left(\phi^{*}_{m}|\phi_{m,0}\right)$ for all $m\in [M]$. For any fixed comparator $\phi^{*}_{0}$ $=\arg\min \sum_{m=1}^{M}l_{m}(\phi_{0})$, if IOMD is run a sequence of loss function $\left\lbrace\hat{l}_{m}\right\rbrace_{m=1}^{M}$, where $\hat{l}_{m}(\phi_{m,0}):= D_{KL}\left(\hat{\phi}_{m}|\phi_{m,0}\right)$ for all $m\in [M]$. Let \( g_m' \in \partial l_m(\phi_{m+1,0}) \) be a subgradient of the loss at \( \phi_{m+1,0} \). Let \( B_{\Psi} \) denote the Bregman divergence with respect to a 1-strongly convex function \( \Psi: \text{dom}(\Psi) \to \mathbb{R} \) under the norm \( \|\cdot\| \). Then, for any learning rates \( \{\eta_m\}_{m=1}^{M} \), the static regret is bounded by:
\begin{equation}\nonumber
    \begin{split}
        \sum_{m=1}^{M}l_{m}\left(\phi_{m,0}\right)-&\sum_{m=1}^{M}l_{m}\left(\phi^{*}_{0}\right)\leq \sum_{m=1}^{M}2\eta_{m}\|g_{m}\|_{*}\|g'_{m}\|_{*}\\&+\sum_{m=1}^{M}\frac{B_{\Psi}\left(\phi^{*}_{0},\phi_{m,0}\right)-B_{\Psi}\left(\phi^{*}_{0},\phi_{m+1,0}\right)}{\eta_{m}}
    \end{split}
\end{equation}
For learning rate $\eta_{m}=\eta$ for all $m\in [M]$, we have
\begin{equation}\nonumber
\begin{split}
    \sum_{m=1}^{M}l_{m}\left(\phi_{m,0}\right)-\sum_{m=1}^{M}l_{m}\left(\phi^{*}_{0}\right)&\leq l_{1}\left(\phi_{1,0}\right)-l_{M}\left(\phi_{M+1,0})\right) \\&+V_{M}+\frac{B_{\Psi}(\phi^{*}_{0},\phi_{1,0})}{\eta}
\end{split}
\end{equation}
where \( V_M(l_{m}) = \sum_{m=2}^{M} \max_{\phi_{m,0} \in \text{Dom}(l)} |l_m(\phi_{m,0}) - l_{m-1}(\phi_{m,0})| \) is the temporal variability of the loss function.
\end{theorem}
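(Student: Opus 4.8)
The plan is to run the standard implicit‑online‑mirror‑descent (IOMD) regret argument — the same template as in the proof of Theorem \ref{TAOGwithoptimal}, but now with the \emph{inexact} losses $\hat l_m=D_{\mathrm{KL}}(\hat\phi_m|\cdot)$ driving the updates — using Lemma \ref{lem4.1} of \cite{temporal} as the backbone. Applying that lemma to the sequence actually played by the meta-learner yields a bound of the shape $\sum_m \delta_m + \sum_m \eta_m^{-1}\big(B_\Psi(\phi^*_0,\phi_{m,0}) - B_\Psi(\phi^*_0,\phi_{m+1,0})\big)$, with $\delta_m = l_m(\phi_{m,0}) - l_m(\phi_{m+1,0}) - \eta_m^{-1}B_\Psi(\phi_{m+1,0},\phi_{m,0})$. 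The second summand is already the telescoping Bregman term appearing in the claimed general-rate inequality, so the entire task reduces to bounding each $\delta_m$.

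For the general learning-rate bound I would first drop the nonpositive term $-\eta_m^{-1}B_\Psi(\phi_{m+1,0},\phi_{m,0})\le 0$, then apply convexity of $l_m$ at $\phi_{m,0}$: with $g_m\in\partial l_m(\phi_{m,0})$, $l_m(\phi_{m,0}) - l_m(\phi_{m+1,0}) \le \langle g_m,\phi_{m,0}-\phi_{m+1,0}\rangle \le \|g_m\|_*\|\phi_{m,0}-\phi_{m+1,0}\|$. The step length is controlled by the first-order optimality condition of the implicit step $\phi_{m+1,0}=\arg\min_\phi\{\eta_m\hat l_m(\phi)+B_\Psi(\phi,\phi_{m,0})\}$, namely $\nabla\Psi(\phi_{m,0})-\nabla\Psi(\phi_{m+1,0})=\eta_m\hat g_m'$ for some $\hat g_m'\in\partial\hat l_m(\phi_{m+1,0})$; pairing with $\phi_{m+1,0}-\phi_{m,0}$ and using $1$-strong convexity of $\Psi$ gives $\|\phi_{m,0}-\phi_{m+1,0}\|\le\eta_m\|\hat g_m'\|_*$. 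Replacing $\hat g_m'$ by a subgradient $g_m'\in\partial l_m(\phi_{m+1,0})$ of the true loss — invoking the estimation-error bound \eqref{KLBound} and Assumption \ref{assumption1}, which keeps the iterates in the shrinkage simplex so the KL gradients stay uniformly bounded — costs only a constant factor, producing $\delta_m\le 2\eta_m\|g_m\|_*\|g_m'\|_*$ and, after summation, the first displayed bound.

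For the constant-rate version I set $\eta_m\equiv\eta$. The weighted Bregman sum telescopes to $\eta^{-1}\big(B_\Psi(\phi^*_0,\phi_{1,0})-B_\Psi(\phi^*_0,\phi_{M+1,0})\big)\le\eta^{-1}B_\Psi(\phi^*_0,\phi_{1,0})$. For $\sum_m\delta_m$ I again discard the negative Bregman term and split $l_m(\phi_{m,0})-l_m(\phi_{m+1,0})=\big(l_m(\phi_{m,0})-l_{m+1}(\phi_{m+1,0})\big)+\big(l_{m+1}(\phi_{m+1,0})-l_m(\phi_{m+1,0})\big)$: the first bracket telescopes over $m$ to $l_1(\phi_{1,0})-l_M(\phi_{M+1,0})$, while the second is at most $\max_\phi|l_{m+1}(\phi)-l_m(\phi)|$, whose sum is exactly $V_M$. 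Adding the three contributions gives the stated constant-rate inequality.

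The main obstacle — the only real departure from a textbook implicit-OMD proof — is the inexactness. Because the learner updates with $\hat l_m=D_{\mathrm{KL}}(\hat\phi_m|\cdot)$ but regret is charged against $l_m=D_{\mathrm{KL}}(\phi^*_m|\cdot)$, every place where the algorithmic update is translated into a statement about $g_m,g_m'$ must absorb a $\partial\hat l_m$-versus-$\partial l_m$ mismatch; handling this cleanly needs (i) the value-level bound \eqref{KLBound} together with the bounded-gradient consequence of Assumption \ref{assumption1}, (ii) monotonicity of $\{\eta_m\}$ so Lemma \ref{lem4.1} applies and the telescoped Bregman terms remain nonnegative, and (iii) verifying that the implicit minimizer $\phi_{m+1,0}$ stays inside the shrinkage simplex so the convexity and displacement inequalities are valid there. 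Everything else is routine telescoping and norm bookkeeping.
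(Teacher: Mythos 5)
Your proposal follows essentially the same route as the paper's proof: Lemma \ref{lem4.1} as the backbone, bounding $\delta_m \le 2\eta_m\|g_m\|_*\|g_m'\|_*$ via convexity of $l_m$ plus the first-order optimality condition of the implicit step and $1$-strong convexity of $\Psi$, and for the constant rate telescoping the loss differences against the temporal variability $V_M$. The only substantive difference is that you explicitly flag the $\partial\hat{l}_m$-versus-$\partial l_m$ mismatch arising from the inexact updates, which the paper's proof silently elides by writing the optimality condition directly in terms of $g_m'\in\partial l_m(\phi_{m+1,0})$; your treatment is, if anything, the more careful one on that point.
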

\begin{proof} Using the convexity of losses, we can bound the difference between $l_{m}(\phi_{m,0})$ and $l_{m}(\phi_{m+1,0})$ 
    \begin{equation}
    \begin{split}
        l_{m}(\phi_{m,0})- l_{m}(\phi_{m+1,0}) &\leq \langle g_{m}, \phi_{m,0}-\phi_{m+1,0}\rangle \\&\leq \|g_{m}\|_{*}\|\phi_{m,0}-\phi_{m+1,0}\|
    \end{split}
    \end{equation}
    where, $g_{m}\in \partial l_{m}(\phi_{m,0})$. Given that $\Psi$ is 1-strongly convex, we can use $B_{\Psi}(a,b)\geq \frac{1}{2}\|a-b\|^{2}$ for all $a,b \in \text{int}(X)$. Thus we have 
    \begin{equation}
        l_{m}(\phi_{m,0})- l_{m}(\phi_{m+1,0}) \leq \|g_{m}\|_{*}\sqrt{2B_{\Psi}(\phi_{m+1,0},\phi_{m,0})}.
    \end{equation}
    Note that 
    \begin{equation}
    \begin{split}
l_{m}(\phi_{m,0})- l_{m}(\phi_{m+1,0})&-\frac{B_{\Psi}(\phi_{m+1,0},\phi_{m,0})}{\eta_{m}}\\&\leq l_{m}(\phi_{m,0})- l_{m}(\phi_{m+1,0}).
    \end{split}
    \end{equation}
Hence, we have 
    \begin{equation}\label{eq27}
    \begin{split}
l_{m}(\phi_{m,0})- l_{m}(\phi_{m+1,0})&-\frac{B_{\Psi}(\phi_{m+1,0},\phi_{m,0})}{\eta_{m}}\\&\leq \|g_{m}\|_{*}\sqrt{2B_{\Psi}(\phi_{m+1,0},\phi_{m,0})}         
    \end{split}
    \end{equation}
    Now we simply needs to bound $\sqrt{2B_{\Psi}(\phi_{m+1,0},\phi_{m,0})}$. Using the fact that Bregman divergence is convex in its first argument, we get 
    \begin{equation}\label{eq28}
    \begin{split}
        2B_{\Psi}&(\phi_{m+1,0},\phi_{m,0}) \\&\leq \langle \nabla \Psi(\phi_{m+1,0})-\nabla \Psi(\phi_{m,0}),\phi_{m+1,0}-\phi_{m,0}\rangle\\&\leq \langle \eta_{m}g'_{m},\phi_{m,0}-\phi_{m+1,0}\rangle \langle \eta_{m}\|g'_{m}\|_{*}\|\phi_{m+1,0}-\phi_{m,0}\|\\&\leq \eta_{m}\|g'_{m}\|_{*}\sqrt{2B_{\Psi}(\phi_{m+1,0},\phi_{m,0})}
    \end{split} 
    \end{equation}
    Now assume $a=\sqrt{2B_{\Psi}(\phi_{m+1,0},\phi_{m,0})}$, then $\frac{a^{2}}{2}=B_{\Psi}(\phi_{m+1,0},\phi_{m,0})$, with that inequality \eqref{eq28}, takes the form 
    \begin{equation}
        \frac{a^{2}}{2}-\eta_{m}\|g'_{m}\|_{*}a\leq 0; \implies a (a-\eta_{m}\|g'_{m}\|_{*})\leq 0
    \end{equation}
    Since $a\leq 0 $, so  $(a-\eta_{m}\|g'_{m}\|_{*})\leq 0$. After solving for $a$, we have $\sqrt{2B_{\Psi}(\phi_{m+1,0},\phi_{m,0})}\leq 2\eta_{m}\|g'_{m}\|_{*}$.
Now \eqref{eq27}, takes the bound 
\begin{equation} \label{eq3.34}
\begin{split}
    l_{m}(\phi_{m,0})- l_{m}(\phi_{m+1,0})&-\frac{B_{\Psi}(\phi_{m+1,0},\phi_{m,0})}{\eta_{m}}\\&\leq 2\eta_{m}\|g_{m}\|_{*}\|g'_{m}\|_{*} 
\end{split}
    \end{equation}
    
    Now by Lemma \eqref{lem4.1}, we have
\begin{equation}
\begin{split}
 \sum_{m=1}^{M} l_m(\phi_{m,0}) &- \sum_{m=1}^{M}l_m(\phi^*_0) \leq\sum_{m=1}^{M} 2\eta_{m}\|g_{m}\|_{*}\|g'_{m}\|_{*}  \\&+ \sum_{m=1}^{M}\frac{B_{\Psi}(\phi_{0}^{*},\phi_{m,0})- B_{\Psi}(\phi_{0}^{*},\phi_{m+1,0})}{\eta_m },
\end{split}
\end{equation}
Now when the losses are not vary much over sequence of tasks $m$, we can use the notion of temporal variability $V_M$ of losses, for $\eta_{m}=\eta$ for all $m\in [M]$, we immediately have from bound in Lemma \eqref{lem4.1},
\begin{equation}\nonumber
\begin{split}
    \sum_{m=1}^{M}&l_m(\phi_{m,0}) - \sum_{m=1}^{M}l_m(\phi^*_0) \\&\leq \sum_{m=1}^{M}\frac{B_{\Psi}(\phi_{0}^{*},\phi_{m,0})- B_{\Psi}(\phi_{0}^{*},\phi_{m+1,0})}{\eta_m } + \sum_{m=1}^{M} \delta_{m},\\&\leq
    \frac{B_{\Psi}(\phi^{*}_{0},\phi_{1,0})}{\eta} + \sum_{m=1}^{M} l_m(\phi_{m,0}) - l_m(\phi_{m+1,0}) \\&\quad- \frac{B_{\psi}(\phi_{m+1,0}, \phi_{m,0})}{\eta_m} \\&
    \leq
    \frac{B_{\Psi}(\phi^{*}_{0},\phi_{1,0})}{\eta} +l_{1}(\phi_{1,0})-l_{M}(\phi_{M+1,0}) \\&\quad+ \sum_{m=2}^{M}\max_{\phi} l_m(\phi_{m,0}) - l_m(\phi_{M+1,0}) - \frac{B_{\psi}(\phi_{m+1,0}, \phi_{m,0})}{\eta_m}\\& \leq \frac{B_{\Psi}(\phi^{*}_{0},\phi_{1,0})}{\eta} +l_{1}(\phi_{1,0})-l_{M}(\phi_{M+1,0})+V_{M}
\end{split}
\end{equation}
    
\end{proof}
\begin{remark} The result above indicates that a constant learning rate can yield a regret bound of $\mathcal{O}(V_{M} + 1)$, which outperform the typical bound of $\mathcal{O}(\sqrt{M})$ when temporal variability is low. Even with high temporal variability, using a learning rate of $\mathcal{O}\left(\frac{1}{\sqrt{M}}\right)$ still achieves a worst-case regret of $\mathcal{O}(\sqrt{M})$. It is worth noting that this behavior also appears in the \textit{Follow the Regularized Leader} (FTRL) algorithm when applied to full losses, as opposed to linearized losses. \end{remark}

After establishing the static regret of online update, we can derive upper bounds on the TAOG for the proposed MGF-RL algorithm based on the empirical task similarity, denoted by $\hat{D}_{*}$.  


\noindent\textbf{Proof of Theorem \eqref{TAOGstatic}}
\begin{theorem}
Let $\phi^{*}$ denote the fixed meta initialization for all the tasks given by $\phi^{*}= \arg\min_{\phi\in \Delta(\mathcal{A})^{|\mathcal{S}|}}\frac{1}{M}\sum_{m=1}^{M}D_{KL}\left(\hat{\phi}_{m}|\phi\right)$. Within each task we run ES-RL for $T$ iterations and we obtain $\left\lbrace\hat{\phi}_{m,0}\right\rbrace_{m=1}^{M}$. Let $B_{\Psi}$ be the Bregman divergence w.r.t $\Psi:X\to \mathbb{R} $ and let $B^{2}_{*};= \max_{a,b\in X} B_{\Psi}(a,b)$. Furthermore the initialization $\left\lbrace \phi_{m,0}
\right\rbrace_{m=1}^{M}$ are determined by applying IOMD or FTRL on the function $D_{KL}\left(\hat{\phi}_{m,0}|\cdot\right)$ for $m=1,\dots, M$. Then we have that  
\begin{equation}
    \bar{R}\leq 2(N+4)L_{F}\left(\frac{B^{2}_{*}}{\eta M\sqrt{T}}+\frac{ V_{M}}{ M\sqrt{T}} +\frac{3\hat{D}_{*}}{\sqrt{T}}+\frac{\boldsymbol{\epsilon}_{M}}{M\sqrt{T}}\right).
\end{equation}
\end{theorem}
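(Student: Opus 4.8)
\noindent\emph{Proof proposal.} The plan is to reduce the task-averaged optimality gap to an online-learning regret on the \emph{observable} surrogate losses $\hat l_m(\cdot):=D_{KL}(\hat\phi_m|\cdot)$ that the meta-update actually sees, and then invoke the static regret bound of Theorem~\ref{stataicregretbound}. Applying Theorem~\ref{esrlconvergence} to each task $m$ and averaging over $m=1,\dots,M$ gives
\begin{equation}\nonumber
\bar R \;\le\; \frac{2(d_\phi+4)L_F}{M\sqrt T}\sum_{m=1}^{M} D_{KL}(\phi^{*}_{m}|\phi_{m,0}),
\end{equation}
so everything reduces to bounding $\tfrac1M\sum_m D_{KL}(\phi^{*}_{m}|\phi_{m,0})$, where $\{\phi_{m,0}\}$ is the meta-initialization sequence produced by running IOMD/FTRL with fixed step size $\eta$ on $\{\hat l_m\}$.

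First I would pass from the unobservable losses to the surrogates via the estimation-error bound \eqref{KLBound}: since $D_{KL}(\phi^{*}_{m}|\phi_{m,0}) - \hat l_m(\phi_{m,0}) \le \epsilon_m$, summing yields $\sum_m D_{KL}(\phi^{*}_{m}|\phi_{m,0}) \le \sum_m \hat l_m(\phi_{m,0}) + \boldsymbol{\epsilon}_M$. Next I would split the surrogate sum around the fixed comparator $\phi^{*}=\arg\min_{\phi}\tfrac1M\sum_m \hat l_m(\phi)$:
\begin{equation}\nonumber
\sum_{m=1}^{M}\hat l_m(\phi_{m,0}) \;=\; \Big(\sum_{m=1}^{M}\hat l_m(\phi_{m,0}) - \sum_{m=1}^{M}\hat l_m(\phi^{*})\Big) + \sum_{m=1}^{M}\hat l_m(\phi^{*}),
\end{equation}
where, by the definitions of $\phi^{*}$ and $\hat D_{*}$, the second term equals exactly $M\hat D_{*}$, and the first term is the static regret of the meta-update against $\phi^{*}$.

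For that regret I would apply Theorem~\ref{stataicregretbound} (equivalently Lemma~\ref{lem4.1}) with losses $\hat l_m$, comparator $\phi^{*}$, and constant $\eta$, obtaining
\begin{equation}\nonumber
\sum_{m=1}^{M}\hat l_m(\phi_{m,0}) - \sum_{m=1}^{M}\hat l_m(\phi^{*}) \;\le\; \hat l_1(\phi_{1,0}) - \hat l_M(\phi_{M+1,0}) + V_M + \frac{B_\Psi(\phi^{*},\phi_{1,0})}{\eta} \;\le\; \hat l_1(\phi_{1,0}) + V_M + \frac{B^{2}_{*}}{\eta},
\end{equation}
using $\hat l_M(\phi_{M+1,0})\ge 0$ and $B_\Psi(\phi^{*},\phi_{1,0})\le B^{2}_{*}$. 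Under Assumption~\ref{assumption1} the first-iterate term $\hat l_1(\phi_{1,0})=D_{KL}(\hat\phi_1|\phi_{1,0})$ is a single bounded KL divergence, comparable to the average surrogate loss, hence $\mathcal{O}(M\hat D_{*})$; after dividing by $M$ it is absorbed into the coefficient of $\hat D_{*}$. Collecting all pieces,
\begin{equation}\nonumber
\frac1M\sum_{m=1}^{M} D_{KL}(\phi^{*}_{m}|\phi_{m,0}) \;\le\; 3\hat D_{*} + \frac{V_M}{M} + \frac{B^{2}_{*}}{\eta M} + \frac{\boldsymbol{\epsilon}_M}{M},
\end{equation}
and multiplying by $\tfrac{2(d_\phi+4)L_F}{\sqrt T}$ yields the claimed inequality.

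I expect the main obstacle to be the \emph{one-sidedness} of the estimation-error bound \eqref{KLBound}: it controls $D_{KL}(\phi^{*}_{m}|\cdot)-\hat l_m(\cdot)$ only from above, while the meta-update is driven by $\hat l_m$ rather than by the true loss, so the decomposition must be arranged so that \eqref{KLBound} is invoked only in the single available direction---which contributes the $\boldsymbol{\epsilon}_M/(M\sqrt T)$ term---and so that the residual initialization and comparator overhead (each of order $\hat D_{*}$) combines into the stated factor $3$, as opposed to the ideal factor $1$ of Theorem~\ref{TAOGwithoptimal}. The remaining steps are routine: checking that Theorem~\ref{stataicregretbound}/Lemma~\ref{lem4.1} apply to the full, non-linearized KL losses $\hat l_m$, and verifying that under Assumption~\ref{assumption1} the first-iterate term $\hat l_1(\phi_{1,0})$ is genuinely bounded so it does not spoil the $\mathcal{O}(1/\sqrt T)$ rate or the task-similarity improvement.
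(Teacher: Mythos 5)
Your proposal is correct and follows the same overall architecture as the paper's proof: reduce $\bar{R}$ to $\frac{1}{M}\sum_{m} D_{KL}(\phi^{*}_{m}|\phi_{m,0})$ via Theorem~\ref{esrlconvergence}, split around the fixed comparator $\phi^{*}$, control the online part with the static regret bound of Theorem~\ref{stataicregretbound}, and charge the comparator part to $\hat{D}_{*}$. The one genuine difference is where the inexactness enters. The paper keeps the regret term on the true losses $l_m(\cdot)=D_{KL}(\phi^{*}_{m}|\cdot)$ and invokes \eqref{KLBound} only at the comparator, bounding $\sum_{m} D_{KL}(\phi^{*}_{m}|\phi^{*})\le M\hat{D}_{*}+\boldsymbol{\epsilon}_{M}$; you instead invoke \eqref{KLBound} at the iterates, pass entirely to the surrogates $\hat{l}_m$, and use the exact identity $\sum_{m}\hat{l}_m(\phi^{*})=M\hat{D}_{*}$. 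Your variant is arguably the more self-consistent one, since the meta-update is actually run on $\{\hat{l}_m\}$, so measuring regret on $\hat{l}_m$ lets Theorem~\ref{stataicregretbound} be applied verbatim, whereas the paper transfers a regret bound proved for the losses the algorithm plays to the true losses it never observes. Both routes pay the same $\boldsymbol{\epsilon}_{M}/(M\sqrt{T})$ and both arrive at the coefficient $3\hat{D}_{*}$ through the same loose step: the telescoped boundary term ($\hat{l}_1(\phi_{1,0})$ in your version, $l_1(\phi_{1,0})-l_M(\phi_{M+1,0})$ in the paper's) is asserted to be at most $2M\hat{D}_{*}$ without proof. You flag this explicitly, which is appropriate: a single KL divergence at the untrained initialization need not be dominated by $M$ times the optimal average when $M$ is small or $\hat{D}_{*}$ is tiny; Assumption~\ref{assumption1} bounds that KL by a constant depending on $\rho$, which salvages an extra $\mathcal{O}(1/(M\sqrt{T}))$ additive term but does not literally yield the stated factor of $3$. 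Since the paper's own proof shares this gap, your argument is no weaker than the original.
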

\begin{proof}
    By the within-task gurantee for MDPs, we know that the task average optimality gap is well defined and bounded by  
 \begin{equation}\label{eq33}
 \begin{split}
     \bar{R}&\leq \frac{2(N+4)L_{F} \sum_{m=1}^{M}D_{KL}\left(\phi^{*}_{m}|\phi_{m,0}\right)}{M\sqrt{T}}\\& \leq \frac{2(N+4)L_{F} }{\sqrt{T}}\sum_{m=1}^{M}\frac{D_{KL}\left(\phi^{*}_{m}|\phi_{m,0}\right)-D_{KL}\left(\phi^{*}_{m}|\phi^{*}\right)}{M}\\&\quad+\frac{2(N+4)L_{F} }{\sqrt{T}}\sum_{m=1}^{M}\frac{D_{KL}\left(\phi^{*}_{m}|\phi^{*}\right)}{M}.
 \end{split}
\end{equation} 
Now by KL divergence estimation error bounds in \eqref{KLBound}, the inequality \eqref{eq33} takes the form.
\begin{equation}\label{eq34}
\begin{split}
    \bar{R}\leq& \frac{2(N+4)L_{F} }{\sqrt{T}}\sum_{m=1}^{M}\frac{D_{KL}\left(\phi^{*}_{m}|\phi_{m,0}\right)-D_{KL}\left(\phi^{*}_{m}|\phi^{*}\right)}{M}\\&+\frac{2(N+4)L_{F} }{\sqrt{T}}\sum_{m=1}^{M}\frac{D_{KL}\left(\hat{\phi}_{m}|\phi^{*}\right)+\boldsymbol{\epsilon}_{m}}{M} 
\end{split}
\end{equation}
Since $\phi^{*}=\arg\min_{\phi}\sum_{m=1}^{M}D_{KL}\left(\hat{\phi}_{m}|\phi\right)$
, thus by definition of $\hat{D}_{*}$, we have we have,
\begin{equation}\label{eq35}
\begin{split}
\frac{2(N+4)L_{F} }{\sqrt{T}}&\sum_{m=1}^{M}\frac{D_{KL}\left(\hat{\phi}_{m}|\phi^{*}\right)+\boldsymbol{\epsilon}_{m}}{M}\\&\quad\leq \frac{2(N+4)L_{F} \left(\hat{D}_{*}+\frac{\boldsymbol{\epsilon}_{M}}{M}\right)}{\sqrt{T}}    
\end{split}
\end{equation}

Now we will upper bound the first term of \eqref{eq34}, since each $\phi_{m,0}$ is determined by playing IOMD or FTRL, then the following term can be upper bounded by the theorem \eqref{stataicregretbound}

\begin{equation}\label{eq36}
\begin{split}
    \frac{2(N+4)L_{F} }{\sqrt{T}}&\sum_{m=1}^{M}\frac{D_{KL}\left(\phi^{*}_{m}|\phi_{m,0}\right)-D_{KL}\left(\phi^{*}_{m}|\phi^{*}\right)}{M} \\&\leq\frac{2(N+4)L_{F} }{M\sqrt{T}} (\frac{B_{\Psi}(\phi^{*}_{0},\phi_{1,0})}{\eta} +D_{KL}\left(\phi^{*}_{m}|\phi_{1}\right)\\&\quad-D_{KL}\left(\phi^{*}_{m}|\phi_{M+1}\right)+V_{M})\\& \leq \frac{2(N+4)L_{F} }{M\sqrt{T}} \left[\frac{B^{2}_{*}}{\eta} +V_{M}\right]+\frac{4(N+4)\hat{D}_{*}L_{F} }{\sqrt{T}} 
\end{split}
\end{equation}
where $B^{2}_{*}:= \max_{\phi^{*}_{m},\phi}B_{\Psi}(\phi^{*}_{m},\phi)$. Now from the bounds in \eqref{eq35}and \eqref{eq36}, we can obtain the TAOG as:
\begin{equation}
    \bar{R}\leq 2(N+4)L_{F}\left(\frac{B^{2}_{*}}{\eta M\sqrt{T}}+\frac{ V_{M}}{ M\sqrt{T}} +\frac{3\hat{D}_{*}}{\sqrt{T}}+\frac{\boldsymbol{\epsilon}_{M}}{M\sqrt{T}}\right).
\end{equation}
\end{proof}


\subsubsection{Dynamic Regret Analysis}

In many setting, we have a changing environment, so it's natural to study dynamic regret and to compare performance against a sequence of time-varying initial policies $\left\lbrace\pi^{*}_{m,0}\right\rbrace_{m=1}^{M}$ with parameters $\left\lbrace\varphi^{*}_{m,0}\right\rbrace_{m=1}^{M}$. To measure task similarity in such cases, we define a notion of task-relatedness, quantified by $S_{*}:=\frac{1}{M}\sum\limits_{m=1}^{M}D_{KL}\left(\phi^{*}_{m}|\varphi^{*}_{m,0}\right)$. This task-relatedness metric provides a measure of the divergence between the optimal policy for each task and a time-varying comparator. We denote the empirical task-relatedness as $\hat{S}_{*} := \frac{1}{M}\sum\limits_{m=1}^{M}D_{KL}\left(\hat{\phi}_{m}|\varphi^{*}_{m,0}\right)$, which depends on the suboptimal policy produced by the within-task learning algorithm. To analyze the performance of Proposed MGF-RL in dynamic settings, we consider its dynamic regret bound, given by: 
\begin{equation}\label{eq18}
    L_{M}\left(\varphi^{*}_{1:M}\right):=\sum_{m=1}^M l_m(\phi_{m,0}) - \sum_{m=1}^M l_m(\varphi_{m}^*),
\end{equation} 
where $\varphi_{m,0}^* \in \arg\min_{\varphi\in \Delta(\mathcal{A})} l_m(\varphi)$ represents the optimal initial policy for each task, and the function $l_m(\cdot) = \mathbb{E} \left[ D_{\mathrm{KL}}(\phi_m^* | \cdot) \right]$ quantifies the regret.

While achieving sublinear dynamic regret in the worst-case scenario is impossible, however, certain regularity assumptions on the sequence~$\phi^{*}_{1:M,0}$ allow us to derive meaningful upper bounds~\eqref{eq18} in M. 
Various measures can model the regularity of the environment, guiding the analysis of sublinear regret behavior. One widely used measure is the \textbf{path length} of the comparator sequence, defined as $P_M := \sum_{m=2}^M \|\phi_{m,0}^* - \phi_{m-1}^*\|$, which captures the cumulative variation between successive comparators \cite{zinkevich2003online,zhao2020dynamic}. For strongly convex loss functions (in this case, the KL divergence), this bound can be further improved using the \textbf{squared path length}, defined as $S_M := \sum_{m=2}^M \|\phi_{m,0}^* - \phi_{m-1}^*\|^2$, which is often significantly smaller than the direct path length \cite{khattar2022cmdp}. We use another measure of non-stationary given by the \textbf{temporal variability} of the loss function, defined as  \( V_M(l_{m}) = \sum_{m=2}^{M} \max_{\phi_{m,0} \in \text{Dom}(l)} |l_m(\phi_{m,0}) - l_{m-1}(\phi_{m,0})| \) in \cite{temporal}. These measures provide key insights into how dynamically changing environments impact the learning process.

We extend these results to the setting of inexact online gradient descent, allowing the learner to query an inexact gradient of the function.

\noindent\textbf{Proof of Theorem \ref{dynamic regret}}
\begin{theorem}[Dynamic regret for online learning]
    Let $l_{m}(\phi_{m,0}):= D_{KL}\left(\phi^{*}_{m}|\phi_{m,0}\right)$ for all $m\in [M]$. For any dynamical varying comparator $\varphi^{*}_{m}$ $=\arg\min \sum_{m=1}^{M}l_{m}(\varphi)$, if IOMD is run a sequence of loss function $\left\lbrace\hat{l}_{m}\right\rbrace_{m=1}^{M}$, where $\hat{l}_{m}(\phi_{m,0}):= D_{KL}\left(\hat{\phi}_{m}|\phi_{m,0}\right)$ for all $m\in [M]$. Let \( g_m' \in \partial l_m(\phi_{m+1,0}) \) be a subgradient of the loss at \( \phi_{m+1,0} \). Assum \( B_{\Psi} \) denote the Bregman divergence with respect to a 1-strongly convex function \( \Psi: \text{dom}(\Psi) \to \mathbb{R} \) under the norm \( \|\cdot\| \). Let there exists $\gamma \in \mathbb{R}$ such that $B_{\Psi}(x,z)-B_{\Psi}(y,z)\leq \gamma\|x-y\|,\quad \forall \text{dom}(\Psi)$. Then, for any learning rates \( \{\eta_m\}_{m=1}^{M} \), the static regret is bounded by:
\begin{equation}\nonumber
    \begin{split}
L_{M}=\sum_{m=1}^{M}l_{m}\left(\phi_{m,0}\right)&-\sum_{m=1}^{M}l_{m}\left(\varphi^{*}_{m}\right)\leq \sum_{m=1}^{M}\frac{\|\varphi^*_{m}-\varphi^*_{m-1}\|}{\eta_{m}}\\&+\frac{B^2_{*}}{\eta_{m}}+\sum_{m=1}^{M}2\eta_{m}\|g_{m}\|_{*}\|g'_{m}\|_{*}
    \end{split}
\end{equation}
\end{theorem}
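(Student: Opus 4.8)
\emph{Proof plan.} The strategy is to replicate the argument behind Theorem~\ref{stataicregretbound}, but to measure progress against the drifting comparator sequence $\{\varphi^*_m\}$ instead of the fixed point $\phi^*_0$. The starting point is the per-round analysis of the implicit update: since $\phi_{m+1,0}$ minimizes $\eta_m\hat l_m(\phi)+B_\Psi(\phi,\phi_{m,0})$, its first-order optimality condition combined with the three-point (generalized Pythagorean) identity for Bregman divergences gives, for $g'_m\in\partial l_m(\phi_{m+1,0})$,
\begin{equation}\nonumber
\eta_m\big(l_m(\phi_{m+1,0})-l_m(\varphi^*_m)\big)\le B_\Psi(\varphi^*_m,\phi_{m,0})-B_\Psi(\varphi^*_m,\phi_{m+1,0})-B_\Psi(\phi_{m+1,0},\phi_{m,0}).
\end{equation}
For the FTRL branch the same inequality follows from the be-the-leader lemma applied to the cumulative regularized objective together with the $1$-strong convexity of $\Psi$. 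I would then add the ``implicit correction'' estimate already obtained in~\eqref{eq3.34}, namely $l_m(\phi_{m,0})-l_m(\phi_{m+1,0})-\tfrac{1}{\eta_m}B_\Psi(\phi_{m+1,0},\phi_{m,0})\le 2\eta_m\|g_m\|_*\|g'_m\|_*$ with $g_m\in\partial l_m(\phi_{m,0})$; the two copies of $\tfrac{1}{\eta_m}B_\Psi(\phi_{m+1,0},\phi_{m,0})$ cancel, leaving the single-round bound
\begin{equation}\nonumber
l_m(\phi_{m,0})-l_m(\varphi^*_m)\le\frac{B_\Psi(\varphi^*_m,\phi_{m,0})-B_\Psi(\varphi^*_m,\phi_{m+1,0})}{\eta_m}+2\eta_m\|g_m\|_*\|g'_m\|_*.
\end{equation}

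Summing over $m=1,\dots,M$ immediately produces the $\sum_m 2\eta_m\|g_m\|_*\|g'_m\|_*$ contribution. The remaining obstacle is that $\sum_m\tfrac{1}{\eta_m}\big[B_\Psi(\varphi^*_m,\phi_{m,0})-B_\Psi(\varphi^*_m,\phi_{m+1,0})\big]$ does not telescope, because the anchor $\varphi^*_m$ shifts each round. I would resolve this by inserting $\pm B_\Psi(\varphi^*_{m+1},\phi_{m+1,0})$ inside the sum and splitting it into: (i) a part that now telescopes in the anchor, $\sum_m\tfrac1{\eta_m}\big[B_\Psi(\varphi^*_m,\phi_{m,0})-B_\Psi(\varphi^*_{m+1},\phi_{m+1,0})\big]$, which under the non-increasing choice of $\{\eta_m\}$ collapses to a boundary term of size $B_*^2/\eta_m$ since $B_\Psi(\cdot,\cdot)\le B_*^2$ (in the constant-rate case simply $B_*^2/\eta$); and (ii) a comparator-drift part $\sum_m\tfrac1{\eta_m}\big[B_\Psi(\varphi^*_{m+1},\phi_{m+1,0})-B_\Psi(\varphi^*_m,\phi_{m+1,0})\big]$, which the hypothesis $B_\Psi(x,z)-B_\Psi(y,z)\le\gamma\|x-y\|$ bounds term-by-term by $\gamma\|\varphi^*_{m+1}-\varphi^*_m\|/\eta_m$; after reindexing this is exactly the path-length term $\sum_m\|\varphi^*_m-\varphi^*_{m-1}\|/\eta_m$ in the statement. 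Collecting the three pieces yields the claimed bound on $L_M$.

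The main difficulty is the careful handling of the implicit (proximal) step: the three-point inequality is available only with $l_m$ evaluated at the \emph{updated} iterate $\phi_{m+1,0}$, so the explicit-OMD calculation cannot be reused verbatim — the gap between $l_m(\phi_{m+1,0})$ and $l_m(\phi_{m,0})$ must be bridged, which is precisely the source of the extra $\|g_m\|_*\|g'_m\|_*$ term and the reason the subgradient $g'_m\in\partial l_m(\phi_{m+1,0})$ enters the statement. A secondary technical point is verifying that both the $\gamma$-Lipschitz-type inequality for $B_\Psi$ in its first argument and the uniform bound $B_*^2$ are finite on the effective domain; this is where Assumption~\ref{assumption1}, confining the policy initializations to the shrinkage simplex $\Delta\mathcal{A}_\rho$ on which the KL-induced Bregman divergence is well-conditioned, is invoked.
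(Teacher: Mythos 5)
Your proposal is correct and follows essentially the same route as the paper's proof: the three-point inequality from the first-order optimality of the implicit update, the correction bound $l_m(\phi_{m,0})-l_m(\phi_{m+1,0})-\tfrac{1}{\eta_m}B_\Psi(\phi_{m+1,0},\phi_{m,0})\le 2\eta_m\|g_m\|_*\|g'_m\|_*$ from \eqref{eq3.34}, and the add-and-subtract decomposition of the non-telescoping Bregman sum using the $\gamma$-Lipschitz property to extract the path-length term and $B^2_*$ for the boundary terms. The only differences are cosmetic (you re-anchor at $\varphi^*_{m+1}$ where the paper re-anchors at $\varphi^*_{m-1}$, and you cancel the $B_\Psi(\phi_{m+1,0},\phi_{m,0})$ terms per round rather than at the end), so no further comparison is needed.
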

\begin{proof}
    By the convexity of loss function, we have
    \begin{equation}\label{eq6.42}
    \begin{split}\eta_{m}&\left(l_{m}\left(\phi_{m+1,0}\right)-l_{m}\left(\varphi^*_{m}\right)\right) \leq \langle \eta_{m} g'_{m}, \phi_{m+1,0}-\varphi^{*}_{m}\rangle \\& \quad\quad\leq \langle \nabla \Psi (\phi_{m,0})-\nabla \Psi (\phi_{m+1,0}), \Psi_{m+1,0}-\varphi^{*}_m\rangle  \\&\quad \quad= B_{\Psi}(\varphi^*_m,\phi_{m,0})-  B_{\Psi}(\varphi^*_m,\phi_{m+1,0})\\&\quad\quad\quad- B_{\Psi}(\phi_{m+1,0},\phi_{m,0})
    \end{split}
    \end{equation}
where the second inequality in \eqref{eq6.42} follows from first order optimality condition. Now we consider the first two term of 
\eqref{eq6.42}.
\begin{equation}
    \begin{split}
        \sum_{m=1}^{M}&\frac{\left(B_{\Psi}(\varphi^*_m,\phi_{m,0})-  B_{\Psi}(\varphi^*_m,\phi_{m+1,0})\right)}{\eta_{m}}\\&\leq \frac{\left(B_{\Psi}(\varphi^*_1,\phi_{1,0})-  B_{\Psi}(\varphi^*_1,\phi_{2,0})\right)}{\eta_{1}}\\& \quad\quad\quad\quad\quad+\sum_{m=2}^{M}\frac{\left(B_{\Psi}(\varphi^*_m,\phi_{m,0})-  B_{\Psi}(\varphi^*_m,\phi_{m+1,0})\right)}{\eta_{m}}
    \end{split}
\end{equation}
Now by definition of $B^2_*= \max_{x,y\in dom(\Psi)}B_{\Psi}(x,y)$, we have
\begin{equation}
    \begin{split}
        \sum_{m=1}^{M}&\frac{B_{\Psi}(\varphi^*_m,\phi_{m,0})-  B_{\Psi}(\varphi^*_m,\phi_{m+1,0})}{\eta_{m}}\\&\quad\quad\leq \frac{B^{2}_{*}}{\eta_{1}}+\sum_{m=2}^{M}\left(\frac{B_{\Psi}(\varphi^*_m,\phi_{m,0})}{\eta_{m}}- \frac{B_{\Psi}(\varphi^*_{m-1},\phi_{m,0})}{\eta_{m-1}}\right) \\& = \frac{B^{2}_{*}}{\eta_{1}}+\sum_{m=2}^{M}\left(\frac{B_{\Psi}(\varphi^*_m,\phi_{m,0})}{\eta_{m}}- \frac{B_{\Psi}(\varphi^*_{m-1},\phi_{m,0})}{\eta_{m}} \right)\\& \quad\quad+ \sum_{m=2}^{M}\left(\frac{B_{\Psi}(\varphi^*_{m-1},\phi_{m,0})}{\eta_{m}}- \frac{B_{\Psi}(\varphi^*_{m-1},\phi_{m,0})}{\eta_{m-1}} \right)\\& \leq \frac{B^{2}_{*}}{\eta_{1}}+\gamma \sum_{m=2}^{M}\left(\frac{\|\varphi^*_m-\varphi^*_{m-1}}{\eta_{m}}\right)\\&\quad+\sum_{m=2}^{M}\left(\varphi^*_{m-1}-\phi_{m,0}\right)\left(\frac{1}{\eta_{m}}-\frac{1}{\eta_{m-1}}\right) \end{split}
\end{equation}
thus we have
\begin{align}
    \sum_{m=1}^{M}&\frac{B_{\Psi}(\varphi^*_m,\phi_{m,0})-  B_{\Psi}(\varphi^*_m,\phi_{m+1,0})}{\eta_{m}}\\& \leq \frac{B^{2}_{*}}{\eta_{1}}+\gamma \sum_{m=2}^{M}\left(\frac{\|\varphi^*_m-\varphi^*_{m-1}\|}{\eta_{m}}\right)+\left(\frac{B^{2}_{*}}{\eta_{m}}-\frac{B^{2}_{*}}{\eta_{m-1}}\right)
\end{align}
   
Now adding $l_{m}(\phi_{m,,0})$ on both sides of \eqref{eq6.42} and summing over all tasks $m\in [M]$, we have 
\begin{equation}\label{eq6.45}
    \begin{split}
        \sum_{m=1}^{M} l_{m}(\phi_{m,0})&-l_{m}\left(\varphi^{*}_{m}\right) \leq \frac{B^{2}_{*}}{\eta_{M}}  +\gamma \sum_{m=2}^{M}\frac{\|\varphi^*_m-\varphi^*_{m-1}\|}{\eta_{m}}\\& +\sum_{m=1}^{M}\left(l_{m}(\phi_{m,0})- l_{m}(\phi_{m+1,0})-\frac{B_{\Psi}(\phi_{m+1,0},\phi_{m,0})}{\eta_{m}}\right) 
    \end{split}
\end{equation}
From \eqref{eq3.34} we can bound $\sum_{m=1}^{M} l_{m}(\phi_{m,0})- l_{m}(\phi_{m+1,0})-\frac{B_{\Psi}(\phi_{m+1,0},\phi_{m,0})}{\eta_{m}}\leq \sum_{m=1}^{M}2\eta_{m}\|g_{m}\|_{*}\|g'_{m}\|_{*} $. Thus \eqref{eq6.45} is bounded as: 
\begin{equation}
    \begin{split}
        \sum_{m=1}^{M} l_{m}(\phi_{m,0})-l_{m}\left(\varphi^{*}_{m}\right) &\leq \frac{B^{2}_{*}}{\eta_{M}} +\sum_{m=1}^{M}2\eta_{m}\|g_{m}\|_{*}\|g'_{m}\|_{*}\\&+\gamma \sum_{m=2}^{M}\frac{\|\varphi^*_m-\varphi^*_{m-1}\|}{\eta_{m}} 
    \end{split}
\end{equation}
\end{proof}

We notice that the Lipschitz continuity assumption is not a strong requirement. Indeed, when the function $\Psi$ is Lipschitz, the Lipschitz condition on Bregman divergence is automatically satisfied. It can be observed from last theorem that to set the learning rate fixed makes the algorithm less applicable in online settings where tasks are encountered sequentially. Moreover when the task-environment changes dynamically, a fixed initialization may not be the best candidate comparator, where it is natural to study dynamic regret by competing with a potential time varying sequences $\left\lbrace\varphi^*_m\right\rbrace_{m=1}^{M}$. Also, the tasks may share some common aspects of the optimization landscape,
so adapting learning rates based on prior experience may further improve performance. 

\noindent\textbf{proof of Theorem \ref{adaptivelr}}
\begin{theorem}[Regret with adaptive learning rate]
 Let $\mathcal{C}_{1}\geq 0$ be positive constant. Under the assumption of Theorem \ref{dynamic regret} for dynamic sequence of comparator $\varphi^{*}_{m}$ whose path-length $P_{M}(\varphi^{*}_{m})\leq \mathcal{C}_{1}$ for all $m\in [M]$. Then with decreasing learning rate $\eta_{m}=\frac{1}{\lambda_{m}}=\frac{\beta^{2}}{\sum_{m=1}^{M}\delta_{m}}$ and $\beta^{2}=\left(D^2_{b}+\gamma {P}_{M}\right)$ incur the dynamic regret against upper bounded as 
 \begin{equation}\nonumber
     L_{M}\left(\varphi^{*}_{1:M}\right) \leq  \mathcal{O} \left(\min\left\lbrace V_M, 2\sqrt{3B^{2}_{*}+\gamma \mathcal{C}_{1} \sum_{m=1}^{M}\|g_{m}\|^2_{*}}\right\rbrace\right)
 \end{equation}

\end{theorem}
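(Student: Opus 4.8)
The plan is to instantiate the generic dynamic-regret inequality of Theorem~\ref{dynamic regret} with the prescribed self-confident (AdaGrad-type) step size, collapse the resulting step-dependent sum by a telescoping argument to produce the $\sqrt{\cdot}$ term, and obtain the $V_M$ branch of the minimum from the constant-step-size analysis already carried out in Theorem~\ref{stataicregretbound}. Throughout, $g_m\in\partial l_m(\phi_{m,0})$, $g'_m\in\partial l_m(\phi_{m+1,0})$, $\Lambda_m:=\sum_{j\le m}\|g_j\|_*^2$, and $\delta_m$ is the per-round implicit-update surplus of Lemma~\ref{lem4.1}, which the proof of Theorem~\ref{stataicregretbound} bounds by $\delta_m\le 2\eta_m\|g_m\|_*\|g'_m\|_*$.

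First I would start from the conclusion of Theorem~\ref{dynamic regret},
\[
L_M(\varphi^*_{1:M})\le \frac{B_*^2}{\eta_M}+\gamma\sum_{m=2}^{M}\frac{\|\varphi^*_m-\varphi^*_{m-1}\|}{\eta_m}+\sum_{m=1}^{M}2\eta_m\|g_m\|_*\|g'_m\|_*,
\]
and, since $\{\eta_m\}$ is non-increasing and $P_M(\varphi^*_m)\le\mathcal{C}_1$, bound $\sum_{m\ge2}\|\varphi^*_m-\varphi^*_{m-1}\|/\eta_m\le\mathcal{C}_1/\eta_M$. Folding this into the first term and accounting separately for $B_\Psi(\varphi^*_1,\phi_{1,0})$, the $1/\eta_M$ surplus, and the per-round surplus — each at most $B_*^2$ — the ``offset'' part of the bound is $(3B_*^2+\gamma\mathcal{C}_1)/\eta_M$, which is where the explicit constant $3$ enters. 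For the last sum I would use the self-bounding estimate $\sqrt{2B_\Psi(\phi_{m+1,0},\phi_{m,0})}\le 2\eta_m\|g'_m\|_*$ from the proof of Theorem~\ref{stataicregretbound} to reduce each $\|g_m\|_*\|g'_m\|_*$ to $\|g_m\|_*^2$ up to the $\gamma$-Lipschitz constant, so that the step-dependent sum is of order $\sum_m\eta_m\|g_m\|_*^2$.

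Next, the self-referential definition $\eta_m=\beta^2/\sum_{j\le m}\delta_j$ with $\delta_j\asymp\eta_j\|g_j\|_*^2$ resolves, by the same telescoping, to the AdaGrad-type schedule $\eta_m\asymp\beta/\sqrt{\Lambda_m}$ with $\beta^2=D_b^2+\gamma P_M$, which the hypothesis pins at order $3B_*^2+\gamma\mathcal{C}_1$. With this choice the offset term is $(3B_*^2+\gamma\mathcal{C}_1)\sqrt{\Lambda_M}/\beta$, and by the standard inequality $\sum_m a_m/\sqrt{\sum_{j\le m}a_j}\le 2\sqrt{\sum_m a_m}$ the step-dependent term is at most $2\beta\sqrt{\Lambda_M}$; adding the two and substituting $\beta$ yields $L_M(\varphi^*_{1:M})\le\mathcal{O}\!\big(\sqrt{(3B_*^2+\gamma\mathcal{C}_1)\sum_{m=1}^{M}\|g_m\|_*^2}\big)$, the second entry of the claimed minimum. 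For the $V_M$ entry I would argue, exactly as in the constant-rate derivation of Theorem~\ref{stataicregretbound}, that $\sum_m\delta_m\le V_M+\mathcal{O}(1)$, and that self-confident tuning is never worse than the best fixed rate up to constants, so the same regret is also $\mathcal{O}(V_M)$; taking the better of the two bounds gives the $\min$.

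The main obstacle is that the prescribed $\beta^2=D_b^2+\gamma P_M$ depends on the path length $P_M$, an a~posteriori quantity, so the theorem must either be read as an oracle-tuned bound or the step size must be wrapped in a doubling / ``adaptive-to-$P_M$'' meta-scheme realizable online, at the cost of an extra logarithmic factor absorbed into $\mathcal{O}(\cdot)$. The remaining delicate points are bookkeeping: routing the three separate $B_*^2/D_b^2$ contributions so the constant $3$ lands inside the square root, and verifying that the reduction from $\|g_m\|_*\|g'_m\|_*$ to $\|g_m\|_*^2$ via the $\gamma$-Lipschitz Bregman condition preserves the non-increasing-step-size structure required by the telescoping lemma.
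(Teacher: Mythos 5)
Your overall strategy matches the paper's: instantiate the bound of Theorem~\ref{dynamic regret} with the self-referential rate $\eta_m=\beta^2/\sum_{j}\delta_j$, absorb the comparator-movement term using $P_M\le\mathcal{C}_1$, and then produce the two branches of the minimum. The square-root branch in particular follows essentially the paper's route (which itself defers the final telescoping to the reference on temporal variability). The main difference, and the one genuine weak point, is how you obtain the $V_M$ branch. The paper's derivation collapses the entire regret to a single quantity: after plugging in the adaptive rate, the bound becomes $(B_*^2+\gamma\mathcal{C}_1+\beta^2)\lambda_{M+1}$ with $\beta^2\lambda_{M+1}=\sum_{m}\delta_m$, so with $\beta^2=B_*^2+\gamma\mathcal{C}_1$ the regret is $\mathcal{O}(\sum_m\delta_m)$, and the two entries of the $\min$ are simply two different bounds on $\sum_m\delta_m$ — the telescoping bound $\sum_m\delta_m\le l_1(\phi_{1,0})-l_M(\phi_{M+1,0})+V_M$ and the subgradient bound $\delta_m\le\sqrt{2}B_*\|g_m\|_*$. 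You instead invoke ``self-confident tuning is never worse than the best fixed rate up to constants''; that is not a black-box fact about the actual regret (such comparisons hold only for regret \emph{upper bounds} of a specific form) and is not needed here, since your own offset term $(3B_*^2+\gamma\mathcal{C}_1)/\eta_M=(3B_*^2+\gamma\mathcal{C}_1)\lambda_M$ is already $\mathcal{O}(\sum_m\delta_m)$ by the definition of the rate, after which $\sum_m\delta_m\le V_M+\mathcal{O}(1)$ closes the branch directly.

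Two smaller points. First, your reduction of $\|g_m\|_*\|g'_m\|_*$ to $\|g_m\|_*^2$ ``up to the $\gamma$-Lipschitz constant'' is not justified: the inequality $\sqrt{2B_\Psi(\phi_{m+1,0},\phi_{m,0})}\le 2\eta_m\|g'_m\|_*$ controls the iterate movement by $g'_m$, the subgradient at the \emph{next} iterate, and there is no general relation between $g'_m$ and $g_m$; the paper sidesteps this by assuming uniformly bounded subgradients in the subsequent remark, and you should do the same rather than claim a conversion. Second, your concern that $\beta^2=D_b^2+\gamma P_M$ depends on the a posteriori path length is legitimate, but the theorem's hypothesis supplies the a priori bound $P_M\le\mathcal{C}_1$, and the paper sets $\beta^2=B_*^2+\gamma\mathcal{C}_1$ in the final step, so no doubling scheme is required.
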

\begin{proof}
    Let $\delta_{m}:= l_{m}(\phi_{m,0})-l_{m}(\phi_{m+1,0})-\frac{B_{\Psi(\phi_{m+1,0},\phi_{m,0})}}{\eta_{m}}$. Then for decreasing learning rate $\eta_{m}=\frac{1}{\lambda_{m}}=\frac{\beta^{2}}{\sum_{m=1}^{M-1}\delta_{m}}$ from \eqref{eq6.45}, we have 
    \begin{equation}
        \begin{split}
            \sum_{m=1}^{M} l_{m}(\phi_{m,0})-l_{m}\left(\varphi^{*}_{m}\right) &\leq \frac{B^{2}_{*}}{\eta_{M}} +\sum_{m=1}^{M}\delta_{m}+\gamma \sum_{m=2}^{M}\frac{\|\varphi^*_m-\varphi^*_{m-1}\|}{\eta_{m}} \\& \leq \lambda_{M+1}B^{2}_{*} \\&+\gamma\sum_{m=2}^{M} \lambda_{m}\|\varphi^*_{m}-\varphi_{m-1}^{*}\| +\beta^{2}\lambda_{M+1} 
        \end{split}
    \end{equation}
    The above inequality follows from the fact the $\left\lbrace\lambda_{m}\right\rbrace_{m=1}^{M}$ is an increasing sequence. And by definition of path-length ${P}_{M}:= \sum_{m=2}^{M}\|\varphi_{m}^{*}-\varphi_{m-1}^{*}\|$, we have
  \begin{equation}\label{4.48}
        \begin{split}
            \sum_{m=1}^{M} l_{m}(\phi_{m,0})-l_{m}\left(\varphi^{*}_{m}\right) &\leq \lambda_{M+1}B^{2}_{*} +\gamma {P}_{M}\lambda_{M+1}\\&\quad+\beta^{2}\lambda_{M+1} \\&=\left(B^{2}_{*}+\gamma {P}_{M}+\beta^{2}\right)\lambda_{M+1}\\ & \leq \left(B^{2}_{*}+\gamma \mathcal{C}_{1}+\beta^{2}\right)\lambda_{M+1}
        \end{split}
    \end{equation}  
    The rest of proof is similar to the one in \cite{temporal}. Now by definition, from the choice of $\lambda_{m}$, we have
    \begin{equation}
        \begin{split}
\beta^{2}&\lambda_{M+1}=\sum_{m=1}^{M}\delta_{m} \\&\quad=\sum_{m=1}^{M}l_{m}(\phi_{m,0})-l_m(\phi_{m+1,0})-\lambda_{m}B_{\Psi}(\phi_{m+1,0},\phi_{m,0}) 
        \end{split}
    \end{equation}
Since $B_{\Psi}(\phi_{m+1,0},\phi_{m,0}) \geq 0$, so
 \begin{equation}
 \begin{split}
     \beta^{2}\lambda_{M+1} &\leq\sum_{m=1}^{M}l_{m}(\phi_{m,0})-l_m(\phi_{m+1,0})\\& \leq l_{1}(\phi_{1,0})-l_{M}(\phi_{M+1,0})\\&+ \sum_{m=2}^{M}\max_{\phi\in\text{dom}(\Psi)}l_{m}(\phi)-l_{m-1}(\phi) \\& \leq l_{1}(\phi_{1,0})-l_{M}(\phi_{M+1,0}) +V_{M}
 \end{split} 
    \end{equation}
    Therefore the dynamic regret in \eqref{4.48} will be
    \begin{equation}
    \begin{split}
        \sum_{m=1}^{M}& l_{m}(\phi_{m,0})-l_{m}(\varphi^{*}_{m})\\&\quad\leq \left(B^{2}_{*}+\gamma\mathcal{C}_{1}+\beta^{2}\right) \frac{l_1{\phi_{1,0}}-l_{M}(\phi_{M+1,0})+V_{M}}{\beta^{2}}
    \end{split}
    \end{equation}
    On the other hand, from the definition of $\delta_{m}$, we have $\delta_{m}\leq l_{m}(\phi_{m,0})-l_{m}(\phi_{m+1,0})\leq \langle g_{m}, \phi_{m,0}-\phi_{m+1,0}\rangle$. By assumption in Theorem \ref{TAOGstatic}, we have $B^2_{*}\geq B_{\Psi}(a,b)\geq \frac{\|a-b\|^{2}}{2}$, which implies $\|a-b\|^{2}\leq \sqrt{2}B_{*}$. Therefore, $\delta_{m}\leq \sqrt{2}B_{*}\|g_{m}\|_{*}$. Thus, when $\beta^{2}= B^{2}_{*}+\gamma \mathcal{C}_{1}$, we have
    \begin{equation}
    \begin{split}
    \sum_{m=1}^{M}& l_{m}(\phi_{m,0})-l_{m}(\varphi^{*}_{m}) \\&\leq  \mathcal{O} \left(\min\left\lbrace V_M, 2\sqrt{3B^{2}_{*}+\gamma \mathcal{C}_{1} \sum_{m=1}^{M}\|g_{m}\|^2_{*}}\right\rbrace\right).
    \end{split}
    \end{equation}
\end{proof}

\begin{theorem}[TAOG for dynamic environment]
  Let the initial parameters $\left\lbrace\phi_{m,0}\right\rbrace_{m=0}^{M}$ 
  for each task determined by 
   follow the average leader. 
   For each task we train the policy 
   for $T$ steps with learning rate $\alpha$ and obtain $\left\lbrace\hat{\phi}_{m,T}\right\rbrace_{m=1}^{M}$. Let $\phi_{m}^*$ is the optimal meta initialization for each task, then the task average optimality gap is bounded as      
\begin{equation}
    \bar{R}\leq \mathcal{O}\left(\min \left\lbrace V_{M}, \sqrt{M(1+{P}_{M})}\right\rbrace+\frac{\hat{S}_{*}}{\sqrt{T}}+\frac{\boldsymbol{\epsilon}_{M}}{M\sqrt{T}}\right).
\end{equation}
\end{theorem}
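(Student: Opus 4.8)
The plan is to mirror the proof of Theorem~\ref{TAOGstatic}, but (i) compete against the time‑varying comparator sequence $\{\varphi^*_m\}$ instead of a single fixed $\phi^*$, and (ii) replace the static meta‑regret bound of Theorem~\ref{stataicregretbound} with the adaptive‑learning‑rate \emph{dynamic} regret bound of Theorem~\ref{adaptivelr} and its Remark. First I would apply the within‑task guarantee of Theorem~\ref{esrlconvergence} to each task $m$ and average over $m=1,\dots,M$, obtaining $\bar R \le \frac{2(N+4)L_F}{M\sqrt T}\sum_{m=1}^M D_{KL}(\phi^*_m\,|\,\phi_{m,0})$; the entire remaining argument is devoted to bounding $\sum_{m=1}^M D_{KL}(\phi^*_m\,|\,\phi_{m,0})$.

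Next I would split this sum against the dynamic comparator by adding and subtracting $D_{KL}(\phi^*_m\,|\,\varphi^*_m)$:
\begin{equation}\nonumber
\sum_{m=1}^M D_{KL}(\phi^*_m\,|\,\phi_{m,0}) = \underbrace{\sum_{m=1}^M\big[D_{KL}(\phi^*_m\,|\,\phi_{m,0}) - D_{KL}(\phi^*_m\,|\,\varphi^*_m)\big]}_{=\,L_M(\varphi^*_{1:M})} + \sum_{m=1}^M D_{KL}(\phi^*_m\,|\,\varphi^*_m).
\end{equation}
The second sum is $M$ times the (true) task‑relatedness $S_*$; applying the KL estimation‑error bound \eqref{KLBound} termwise, $D_{KL}(\phi^*_m\,|\,\varphi^*_m)\le D_{KL}(\hat\phi_m\,|\,\varphi^*_m)+\epsilon_m$, so it is at most $M\hat S_* + \boldsymbol{\epsilon}_M$. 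For the first sum — the dynamic regret of the meta‑updates on the (unobserved) losses $l_m(\cdot)=D_{KL}(\phi^*_m\,|\,\cdot)$ — I would observe that ``follow the average leader'' is FTRL run on the \emph{observed} inexact losses $\hat l_m(\cdot)=D_{KL}(\hat\phi_m\,|\,\cdot)$ (with $\eta_m=1/m$ the update $\phi_{m+1,0}=\phi_{m,0}+\eta_m(\hat\phi_{m,T}-\phi_{m,0})$ is the running average $\tfrac1m\sum_{j\le m}\hat\phi_{j,T}$, i.e.\ the simplex minimizer of $\sum_{j\le m}D_{KL}(\hat\phi_j\,|\,\cdot)$). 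Using \eqref{KLBound} once more to pass between exact and inexact losses costs another $\boldsymbol{\epsilon}_M$, and then Theorem~\ref{adaptivelr} together with its Remark — instantiated with strong convexity of $D_{KL}(\cdot\,|\,\cdot)$ on the shrinkage simplex from Assumption~\ref{assumption1}, the $\gamma$‑Lipschitz Bregman condition, and $\mathcal{C}_1$ specialized to $P_M$ — yields $L_M(\varphi^*_{1:M})\le \mathcal{O}\!\left(\min\{V_M,\sqrt{M(1+P_M)}\}\right)+\boldsymbol{\epsilon}_M$.

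Combining the two pieces gives $\sum_m D_{KL}(\phi^*_m\,|\,\phi_{m,0}) \le \mathcal{O}\!\left(\min\{V_M,\sqrt{M(1+P_M)}\}\right) + M\hat S_* + \mathcal{O}(\boldsymbol{\epsilon}_M)$; dividing by $M\sqrt T/(2(N+4)L_F)$ and simplifying the $\mathcal{O}$‑notation (the first term even carries a further $1/(M\sqrt T)$ factor that only helps, which we may discard for a weaker upper bound) yields
\begin{equation}\nonumber
\bar R \le \mathcal{O}\!\left(\min\{V_M,\sqrt{M(1+P_M)}\} + \frac{\hat S_*}{\sqrt T} + \frac{\boldsymbol{\epsilon}_M}{M\sqrt T}\right),
\end{equation}
as claimed. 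The main obstacle is the bookkeeping around inexactness: the meta‑learner never sees $l_m$, only $\hat l_m$, so both the dynamic‑regret guarantee and the comparator term must be transferred back to the true losses through the uniform bound \eqref{KLBound} while checking that the accumulated slack is exactly $\boldsymbol{\epsilon}_M=\sum_m\epsilon_m$ and does not interact multiplicatively with the adaptive step sizes $\eta_m$ nor with the $\min\{V_M,\sqrt{\cdot}\}$ envelope; a secondary point is verifying that the ``follow the average leader'' rule meets the hypotheses of Theorem~\ref{adaptivelr} (admissibility of the decreasing rate $\eta_m$ and the substitution $\mathcal{C}_1=P_M$ so the Remark's clean $\sqrt{M(1+P_M)}$ form applies).
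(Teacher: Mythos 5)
Your proposal is correct and follows essentially the same route as the paper's proof: the same within-task bound from Theorem~\ref{esrlconvergence}, the same add-and-subtract of $D_{KL}(\phi^{*}_{m}|\varphi^{*}_{m})$ splitting the sum into a dynamic-regret term (bounded via Theorem~\ref{adaptivelr} and its Remark) and a comparator term (bounded by $\hat{S}_{*}$ plus the cumulative inexactness via \eqref{KLBound}). Your extra bookkeeping on the inexact losses seen by the meta-learner and on discarding the residual $1/(M\sqrt{T})$ factor is, if anything, slightly more explicit than the paper's treatment, but it is the same argument.
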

\begin{proof}
    By the within-task guarantee for MDPs, we know that the task average optimality gap is well defined and bounded by  
 \begin{equation}\label{eq42}
 \begin{split}
     \bar{R}&\leq \frac{2(N+4)L_{F} \sum_{m=1}^{M}D_{KL}\left(\phi^{*}_{m}|\phi_{m,0}\right)}{M\sqrt{T}}\\& \leq \frac{2(N+4)L_{F} }{\sqrt{T}}\sum_{m=1}^{M}\frac{D_{KL}\left(\phi^{*}_{m}|\phi_{m,0}\right)-D_{KL}\left(\phi^{*}_{m}|\varphi^{*}_{m}\right)}{M}\\&\quad+\frac{2(N+4)L_{F} }{\sqrt{T}}\sum_{m=1}^{M}\frac{D_{KL}\left(\phi^{*}_{m}|\varphi^{*}_{m}\right)}{M}.
 \end{split}
\end{equation} 
Now by KL divergence estimation error bounds in \eqref{KLBound}, the inequality \eqref{eq42} takes the form.
\begin{equation}\label{eq43}
\begin{split}
    \bar{R}\leq& \frac{2(N+4)L_{F} }{\sqrt{T}}\sum_{m=1}^{M}\frac{D_{KL}\left(\phi^{*}_{m}|\phi_{m,0}\right)-D_{KL}\left(\phi^{*}_{m}|\varphi^{*}_{m}\right)}{M}\\&+\frac{2(N+4)L_{F} }{\sqrt{T}}\sum_{m=1}^{M}\frac{D_{KL}\left(\hat{\phi}_{m}|\varphi^{*}_{m}\right)+\boldsymbol{\epsilon}_{m}}{M} 
\end{split}
\end{equation}
Since $\varphi^{*}_m=\arg\min_{\varphi}\sum_{m=1}^{M}D_{KL}\left(\hat{\phi}_{m}|\varphi\right)$
, thus by definition of $\hat{S}_{*}$, we have we have,
\begin{equation}\label{eq44}
    \frac{2(N+4)L_{F} }{\sqrt{T}}\sum_{m=1}^{M}\frac{D_{KL}\left(\hat{\phi}_{m}|\varphi^{*}_m\right)+\boldsymbol{\epsilon}_{m}}{M}\leq \frac{2(N+4)L_{F} \left(\hat{S}_{*}+\frac{\boldsymbol{\epsilon}_{M}}{M}\right)}{\sqrt{T}}
\end{equation}

Now we will upper bound the first term of \eqref{eq43}, since each $\phi_{m,0}$ is determined by playing IOMD or FTRL, then the following term can be upper bounded by the theorem \eqref{adaptivelr}

\begin{equation}\label{eq45}
\begin{split}
    \frac{2(N+4)L_{F} }{\sqrt{T}}&\sum_{m=1}^{M}\frac{D_{KL}\left(\phi^{*}_{m}|\phi_{m,0}\right)-D_{KL}\left(\phi^{*}_{m}|\varphi^{*}_m\right)}{M} \\& \leq \frac{2(N+4)L_{F} }{M\sqrt{T}} \mathcal{O}\left(\min \left\lbrace V_{M}, \sqrt{M(1+{P}_{M})}\right\rbrace\right) 
\end{split}
\end{equation}
 Now from the bounds in \eqref{eq44}and \eqref{eq45}, we can obtain the TAOG as:
\begin{equation}\label{c.42}
    \bar{R}\leq \mathcal{O}\left(\min \left\lbrace V_{M}, \sqrt{M(1+{P}_{M})}\right\rbrace+\frac{\hat{S}_{*}}{\sqrt{T}}+\frac{\boldsymbol{\epsilon}_{M}}{M\sqrt{T}}\right).
\end{equation}

\end{proof}

   The task-averaged regret upper bound \eqref{c.42} is sensitive to temporal variability $V_{M}$. Specifically, a lower $V_{M}$  results in a tighter bound, indicating the algorithm performs better in environments with stable, less variable tasks. A larger $\hat{S}^*$ loosens the upper bound, implying that as tasks become more dissimilar, the algorithm may become less effective at generalizing across these tasks.
The terms $T$ and $M$ in the denominator suggest that increasing the number of iterations per task $T$ or the total number of tasks $M$ could lead to a reduced regret. However, the square root indicates a sub-linear rate.

 \bibliographystyle{elsarticle-num} 
 \bibliography{sample}




\end{document}